\documentclass[twoside,11pt]{article}

\usepackage{blindtext}
% Any additional packages needed should be included after jmlr2e.
% Note that jmlr2e.sty includes epsfig, amssymb, natbib and graphicx,
% and defines many common macros, such as 'proof' and 'example'.
%
% It also sets the bibliographystyle to plainnat; for more information on
% natbib citation styles, see the natbib documentation, a copy of which
% is archived at http://www.jmlr.org/format/natbib.pdf

% Available options for package jmlr2e are:
%
%   - abbrvbib : use abbrvnat for the bibliography style
%   - nohyperref : do not load the hyperref package
%   - preprint : remove JMLR specific information from the template,
%         useful for example for posting to preprint servers.
%
% Example of using the package with custom options:
%
% \usepackage[abbrvbib, preprint]{jmlr2e}

\usepackage[abbrvbib, preprint]{jmlr2e}
% additional packages
\usepackage[english]{babel}
\usepackage[letterpaper,top=2cm,bottom=2cm,left=2.5cm,right=2.5cm,marginparwidth=1.75cm]{geometry}
\usepackage{advdate}
\usepackage{algorithm}
\usepackage{algpseudocode}
\usepackage{amsmath}
\usepackage{amssymb}
\usepackage{booktabs}
\usepackage{braket}
\usepackage{caption}
\usepackage{enumitem}
\usepackage{bbm}
\usepackage{mathtools}
\usepackage{soul}
\usepackage{subcaption}
\usepackage{tikz}
\usetikzlibrary{arrows.meta, positioning}

\usepackage{xcolor}
\usepackage{graphicx}
%\usepackage[colorlinks=true, allcolors=blue]{hyperref}
% \newtheorem{theorem}{Theorem}[section]  % Master counter for numbering
% \newtheorem{lemma}[theorem]{Lemma}   
% \newtheorem{prop}[theorem]{Proposition} 
% \newtheorem{ex}[theorem]{Example}    
% \newtheorem{remark}[theorem]{Remark}  
% \newtheorem{definition}[theorem]{Definition}   
% \usepackage{fancyhdr}

% Definitions of handy macros can go here

%math command shortcuts

\newcommand{\dt}{\mathrm{d}t}

\newcommand{\dWt}{\mathrm{d}W_t}

\newcommand{\N}{\mathbb{N}}

\newcommand{\R} {\mathbb R}

\newcommand{\E} {\mathbb{E}}

\newcommand{\cov} {\mathrm{Cov}}

\newcommand{\Id}{\mathrm{I}_d}

\DeclareMathOperator*{\argmin}{arg\,min}

\newcommand{\rv}[1]{\textcolor{black}{#1}}

\usepackage[capitalize]{cleveref}

% Heading arguments are {volume}{year}{pages}{date submitted}{date published}{paper id}{author-full-names}

\usepackage{lastpage}
\jmlrheading{23}{2025}{1-\pageref{LastPage}}{1/21; Revised 5/22}{9/22}{21-0000}{Vincent Guan, Joseph Janssen, Hossein Rahmani, Andrew Warren, Stephen Zhang, Elina Robeva and Geoffrey Schiebinger}

% Short headings should be running head and authors last names

\ShortHeadings{Identifying Drift, Diffusion, and Causal Structure from Temporal Snapshots}{Guan, Janssen, Rahmani, Warren, Zhang, Robeva, and Schiebinger}
\firstpageno{1}

\begin{document}

\title{Identifying Drift, Diffusion, and Causal Structure from Temporal Snapshots}

\author{\name Vincent Guan \email vguan23@math.ubc.ca \\
       \addr Department of Mathematics\\
       University of British Columbia, Vancouver, Canada
       \AND
       \name Joseph Janssen \email joejanssen@eoas.ubc.ca \\
       \addr Department of Earth, Ocean and Atmospheric Sciences\\
       University of British Columbia, Vancouver, Canada
       \AND
       \name Hossein Rahmani \email hrahmani@math.ubc.ca \\
       \addr Department of Mathematics\\
       University of British Columbia, Vancouver, Canada
       \AND
       \name Andrew Warren \email awarren@math.ubc.ca \\
       \addr Department of Mathematics\\
       University of British Columbia, Vancouver, Canada
       \AND
       \name Stephen Y. Zhang \email syz@syz.id.au \\
       \addr School of Mathematics and Statistics\\
       University of Melbourne, Melbourne, Australia
       \AND
       \name Elina Robeva \email erobeva@math.ubc.ca \\
       \addr Department of Mathematics\\
       University of British Columbia, Vancouver, Canada
       \AND
       \name Geoffrey Schiebinger \email geoff@math.ubc.ca \\
       \addr Department of Mathematics\\
       University of British Columbia, Vancouver, Canada}

% \author{Vincent Guan$^1$ \and Joseph Janssen$^1$ \and Hossein Rahmani$^1$ \and Andrew Warren$^1$ \and Stephen Zhang$^2$  \and Elina Robeva$^1$ \and Geoffrey Schiebinger$^1$}

% \date{%
%     $^1$University of British Columbia\\%
%     $^2$University of Melbourne\\[2ex]%
%     \yesterday
% }

\editor{My editor}

\maketitle

\begin{abstract}%   <- trailing '%' for backward compatibility of .sty file
Stochastic differential equations (SDEs) are a fundamental tool for modelling dynamic processes, including gene regulatory networks (GRNs), contaminant transport, financial markets, and image generation. However, learning the underlying SDE from data is a challenging task, especially if individual trajectories are not observable. Motivated by burgeoning research in single-cell datasets, we present the first comprehensive approach for jointly identifying the drift and diffusion of an SDE from its temporal marginals. Assuming linear drift and additive diffusion, we \rv{show that non-identifiability can only arise if the initial distribution possesses generalized rotational symmetries. We further prove that even if this condition holds, the drift and diffusion can almost always be recovered from the marginals. Additionally, we show that the causal graph of any SDE with additive diffusion can be recovered from the identified SDE parameters.} To complement this theory, we adapt entropy-regularized optimal transport to handle anisotropic diffusion, and introduce APPEX (Alternating Projection Parameter Estimation from $X_0$), an iterative algorithm designed to estimate the drift, diffusion, and causal graph of an additive noise SDE, solely from temporal marginals. We show that APPEX iteratively decreases Kullback–Leibler divergence to the true solution, and demonstrate its effectiveness on simulated data from linear additive noise SDEs.
\end{abstract}

\begin{keywords}
Stochastic differential equations (SDEs), optimal transport, trajectory inference, causal inference, Schrodinger Bridge
\end{keywords}

\section{Introduction}
% check out Radionuclide migration at experimental polygon at Red Forest waste site in Chernobyl zone. Part 2: Hydrogeological characterization and groundwater transport modeling + GEOLOGY STRUCTURE AND HYDROGEOLOGY CONDITIONS OF THE CHERNOBYL PILOT SITE + Virus and Bacteria Transport in a Sandy Aquifer, Cape Cod, MA + Preliminary observations on bacterial transport in a coastal plain aquifer to motivate real data study
This work presents the first comprehensive approach for jointly identifying the drift and diffusion of a stochastic differential equation (SDE) from observed temporal marginals. While parameter estimation has been studied extensively using trajectory data, either from one long trajectory \citep{nielsen2000parameter, bishwal2007parameter}, or multiple short trajectories \citep{manten2024signature, lu2021learning, pawlowicz2019lagrangian}, it is often impossible to observe individual trajectories when tracking large populations. For example, in single-cell RNA sequencing (scRNA-seq) datasets, the destructive nature of sequencing technologies prevents specific cells from being tracked over time \citep{trapnell2014dynamics, setty2016wishbone, farrell2018single}. Similarly, hydrogeochemical sensors cannot track distinct particles when studying contaminant flow, providing only data on plume migration \citep{man2007stochastic, hazas2023evolution,salamon2007modeling,elfeki2006prediction,adams1992field,boggs1992field,mackay1986natural,nenna2011application}. Observational data may therefore be limited to samples from temporal marginals of the process at various measurement times. We refer to this observational setting as \textit{marginals-only}.

Many stochastic processes are not identifiable under the marginals-only setting, since their distributions may evolve identically, despite having distinct parameter sets \citep{weinreb2018fundamental}. Previous works therefore simplify the problem by assuming that the drift is irrotational and that the diffusion is isotropic, with known diffusivity \citep{weinreb2018fundamental, lavenant2021towards,vargas2021solving,chizat2022trajectory}.

% In this work, we prove in Theorem \ref{thm: identifiability} that non-identifiability under the time-homogeneous linear additive noise SDE model is resolved if the initial distribution $X_0$ is not auto-rotationally invariant (see Definition \ref{def:autorotationalinvariant}). We further introduce a practical method for estimating the drift and diffusion of a general additive noise process, without prior knowledge or additional assumptions, and extensively demonstrate its effectiveness on simulated data.
% are powerful probabilistic models for capturing the dynamics of processes influenced by randomness and uncertainty. Random noise can come from a system's internal fluctuations, such as molecular or cellular interactions. Noise can also be introduced by external factors, whose influence may be unknown or poorly understood. 
% Scientists are particularly interested in learning the data generating processes underlying dynamical systems.
% This is typically done by inferring the deterministic and random parts of the SDE, commonly referred to as drift and diffusion. 
% More generally, this work is relevant to any domain where one wants to learn the data generating process of a system from observed temporal marginals.
% only population-level dynamics are observable rather than individual dynamics.

\subsection{Our contributions}

\rv{We prove that the drift, diffusion, and underlying causal graph of an SDE with linear drift and additive diffusion are generically identifiable (meaning they can be uniquely recovered from the data for almost every SDE parameter configuration) from its temporal marginals.} We also introduce a practical algorithm for estimating each of these quantities solely from the marginals of a general additive diffusion process, and demonstrate the effectiveness of our method on simulated data.
% We consider time-homogeneous linear additive SDEs, which is the most studied model \citep{wang2024generator, yachimura2023scegot, lobus2014absolute, zhang2024joint, rohbeck2024bicycle}. We derive conditions that make this inference possible, and develop a method to perform the joint inference. 
We illustrate our theoretical contributions in Figure \ref{fig:APPEX_main_figure} and our algorithmic contributions in Figure \ref{fig:estimation_steps}. A detailed overview of our contributions is also given below. 

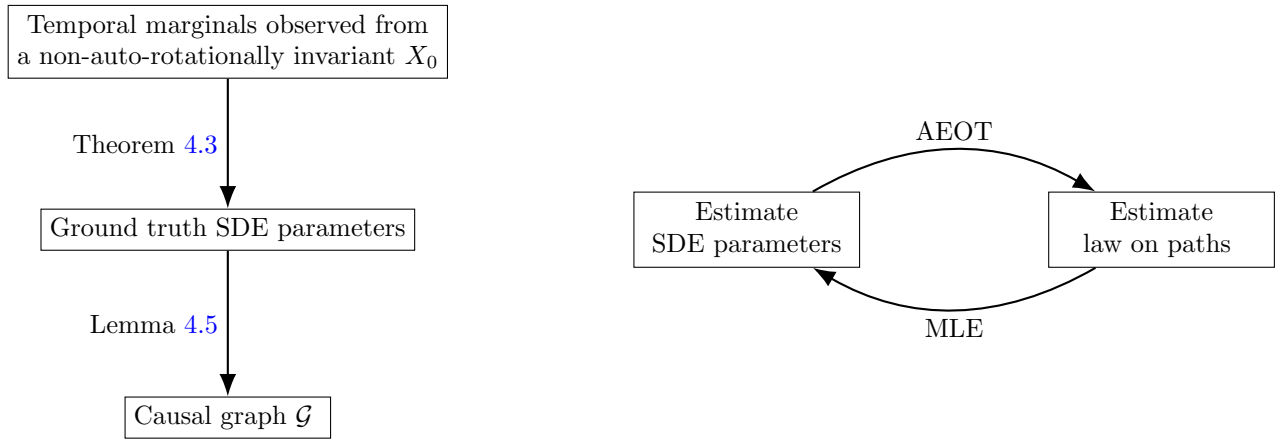
\begin{figure}
\centering
\begin{minipage}{0.45\textwidth}
    \centering
    \begin{subfigure}[b]{\textwidth} % Set to full width of minipage
        \centering
        % Add -5pt offset to counteract caption alignment adjustment
        \vspace{-5pt}
        \begin{tikzpicture}[
          rect/.style = {rectangle, draw, minimum width=1cm, minimum height=0.5cm, align=center},
          arrow/.style = {draw, -{Latex[length=3mm]}, thick},
          node distance=1cm
        ]

        % Nodes with fixed coordinates for exact alignment
        \node[rect] (X0) at (0, 0) {Temporal marginals observed from\\ a non-auto-rotationally invariant $X_0$};
        \node[rect] (params) at (0, -2.5) {Ground truth SDE parameters}; 
        \node[rect] (graph) at (0, -5) {Causal graph $\mathcal{G}$ \vphantom{X}}; % Adding \vphantom{} for consistent alignment

        % Arrows
        \draw[arrow] (X0) -- (params) node[midway, left, align=center] {Theorem \ref{thm: identifiability}};
        \draw[arrow] (params) -- (graph) node[midway, left] {Lemma \ref{prop: identify_causal_graph_SDE_add_noise}};

        \end{tikzpicture}
        
        % Adjust caption with 5pt offset to align with the other caption
        \vspace{5pt}
        \caption{Theoretical contributions: identifying drift, diffusion, and causal structure from temporal marginals}
        \label{fig:APPEX_main_figure}
    \end{subfigure}
\end{minipage}%
\hfill
\begin{minipage}{0.45\textwidth}
    \centering
    \begin{subfigure}[b]{\textwidth} % Set to full width of minipage
        \centering
        % Add -45pt offset to counteract caption alignment adjustment
        \vspace{25pt}
        \begin{tikzpicture}[
          rect/.style = {rectangle, draw, minimum width=3cm, minimum height=1cm, align=center},
          arrow/.style = {draw, -{Latex[length=3mm]}, thick},
          node distance=1cm
        ]

        % Nodes with matching coordinates for alignment
        \node[rect] (est_params) at (0, -2.5) {Estimate of\\
        SDE parameters}; % Matched coordinate to "Ground truth SDE parameters"
        \node[rect, right=of est_params, xshift=1.5cm] (est_law) {Estimate of\\ law on paths \vphantom{X}}; % Adding \vphantom{} for consistent alignment

        % Arced edges
        \draw[arrow, bend left=30] (est_params) to node[midway, above] {AEOT} (est_law);
        \draw[arrow, bend left=30] (est_law) to node[midway, below] {MLE} (est_params);

        \end{tikzpicture}
        
        % Adjust caption with 45pt offset to align with the other caption
        \vspace{40pt}
        \caption{Algorithmic contributions: APPEX algorithm}
        \label{fig:estimation_steps}
    \end{subfigure}
\end{minipage}

\caption{Outline of our theoretical (a) and algorithmic (b) contributions. In (b), our algorithm Alternating Projection Parameter Estimation from $X_0$ alternates between Anisotropic Entropy-regularized Optimal Transport (AEOT) and Maximum Likelihood Estimation (MLE).}
\label{fig:contributions_figure}
\end{figure}

\begin{enumerate}
    \item We provide a theoretical foundation for system identification from temporal marginals.\begin{enumerate}
        \item  \rv{In Section \ref{sec:identify}, we prove identifiability conditions for jointly recovering the drift and diffusion of a linear additive noise SDE from continuously observed marginals. In \cref{thm: identifiability}, we prove that unidentifiable parameter configurations exist if and only if the initial distribution possesses generalized rotational symmetries, a condition we call auto-rotational invariance (ARI). Then, \cref{thm:stronger} provides a stronger unconditional guarantee: even when the initial distribution is ARI, the set of unidentifiable configurations has measure zero. Thus, linear additive noise SDEs are generically identifiable.}
        % \rv{In Section \ref{sec:identify}, we prove identifiability conditions for jointly recovering the drift and diffusion of a linear additive noise SDE from continuously observed marginals. In \cref{thm: identifiability}, we prove that non-identifiability in this setting is feasible if and only if that the initial distribution possesses generalized rotational symmetries, a condition that we call auto-rotational invariance (ARI). Then, in \cref{thm:stronger}, we prove that generic identifiability holds even if the initial distribution obeys ARI. Together, our results show that non-identifiability requires both a pathological initial distribution and a pathological SDE parameter configuration from a measure $0$ set in the parameter space.}
        
        % \todo{Revise} In Theorem \ref{thm: identifiability}, we fully characterize the conditions for guaranteeing identifiability of linear drift and additive diffusion from observed marginals. Identifiability is guaranteed if and only if the initial distribution $X_0$ is not auto-rotationally invariant (Definition \ref{def:weak_ari}). 
        \item In Section \ref{sec:causal_graph}, 
        % Lemma \ref{prop: identify_causal_graph_SDE_add_noise}, 
        we connect parameter estimation to causal structure learning via dynamic structural causal models \citep{boeken2024dynamic}. We show that the causal graph of any additive noise model can be recovered from the identified SDE parameters.
    \end{enumerate}
    \item In Section \ref{sec:methods}, we introduce our algorithm: Alternating Projection Parameter Estimation from $X_0$ (APPEX). \rv{It is the first Schr\"odinger Bridge based method designed to estimate drift and diffusion from temporal marginals of additive noise SDEs, without prior knowledge.}
    \begin{enumerate}
        \item We show that with each iteration, APPEX's estimates approach the true solution. APPEX alternates between a trajectory inference step, which is a Schr\"odinger bridge problem, and a parameter estimation step, which is solved via maximum likelihood estimation. 
        % We show that both subprocedures are optimal with respect to the Kullback-Leibeler (KL) divergence.
        \item To solve the Schr\"odinger bridge problem for trajectory inference, we adapt entropic optimal transport for parameter dependent transport costs, including arbitrary anisotropic diffusion.
        \item We test APPEX's efficacy across a wide range of experiments in Section \ref{sec:experiments}. We first use APPEX to empirically demonstrate that classical non-identifiable linear additive noise SDEs are resolved when $X_0$ is initialized appropriately according to Theorem \ref{thm: identifiability}.
        We then demonstrate APPEX's effectiveness over a large-scale experiment on higher dimensional SDEs with randomly generated parameter sets. Our results demonstrate that APPEX identifies arbitrary linear additive noise SDEs with significantly higher accuracy than the widely-used Waddington-OT (WOT) method \citep{schiebinger2019optimal}. Our final experiments demonstrate that APPEX can be used to identify the system's causal graph, even in the presence of correlated diffusion from latent confounders.
    \end{enumerate}
\end{enumerate}

\section{Background and related work}
In this section, we first give an overview of SDEs and a discussion of some real-world applications. We then review relevant literature and techniques for statistical inference from the ``marginals-only" observational setting. Finally, we discuss the problem of non-identifiability of SDEs under this setting.
% SDEs have also seen prominent usage for applications in computer vision \citep{cao2023exploring,croitoru2023diffusion}, and option pricing \citep{black1973pricing}. 
% SDEs have long been used to model natural processes \citep{oh2010stochastic,delay2005simulating,lande2003stochastic,schiebinger2019optimal}, where the drift and diffusion have meaningful physical interpretations.

\subsection{Stochastic differential equations (SDEs) and modeling real-world systems}
The mathematical framework of SDEs, or drift-diffusion processes, originates from statistical mechanics. In such applications, the drift models the velocity of particles flowing in a carrier fluid, while diffusion is driven by Brownian motion, and governs their dispersion due to random collisions \citep{einstein1956investigations, macinnes1992stochastic}. Formally, a $d$-dimensional time-homogeneous SDE is driven by an $m$-dimensional Brownian motion $W_t$, and is parameterized by a drift function $b(x): \R^d \to \R^d$ and a diffusion function $\sigma(x): \R^d \to \R^{d \times m}$, such that
\begin{equation}
    \mathrm{d}X_t = b(X_t) \:\mathrm{d}t+ \sigma(X_t)\: \mathrm{d}W_t, \  X_0 \sim p_0,
    \label{eq: general_SDE}
\end{equation}
where $b(x)$ and $\sigma(x)$ are Lipschitz to ensure the solution's existence and uniqueness \citep{fang2020adaptive}.

By capturing deterministic and stochastic dynamics, SDEs provide a powerful framework for modeling complex systems across a variety of fields. For example, SDEs have been used to model cell differentiation in single-cell biology, where a stem cell changes into another cell type, such as a blood cell or bone cell. 
% This process is often visualized with Waddington's epigenetic landscape, where cells figuratively traverse a developmental terrain, guided by deterministic forces (drift) and stochastic fluctuations (diffusion) \citep{waddington1935animal,waddington1957strategy,schiebinger2019optimal}.
The drift can be related to the set of genes that directly regulate the expression of a gene of interest \citep{zhang2024joint,aalto2020gene,atanackovic2024dyngfn,zhao2024optimal,tejada2023causal}, and therefore provides insights into genetic diseases and gene therapies. Meanwhile, the diffusion informs the extent to which cell fates are determined by initial conditions \citep{forrow2024consistent}. In hydrological systems, drift is linked to important and unknown properties of the subsurface, such as hydraulic conductivity through average flow velocity \citep{hazas2023evolution,beven1993dispersion}, while the diffusion usually describes material heterogeneity or turbulence \citep{beven1993dispersion}, and therefore informs the applicability of popular models, such as Darcy's law \citep{man2007stochastic,oh2010stochastic,lichtner2002new}. Together, these parameters determine pollutant fates in hydrological systems, a key concern in safeguarding drinking water sources \citep{o2002part,frind2006well,locatelli2019simple,chen1999assessment,paulson1997transport,cahill2019advancing}. In addition to applications to physical sciences, SDEs have also been prominently used for pricing stocks and options in financial markets \citep{black1973pricing} and for image generation in computer vision \citep{meng2021sdedit}. 

\subsection{Linear additive noise SDEs}
In this paper, we focus on time-homogeneous additive noise SDEs. These SDEs are characterized by an arbitrary drift function $b(X_t)$ and a possibly anisotropic space-independent diffusion $G \in \mathbb{R}^{d \times m}$, such that 
\begin{equation}
    \mathrm{d}X_t = b(X_t) \:\mathrm{d}t+ G\: \mathrm{d}W_t, \  X_0 \sim p_0.
    \label{eq: additive_noise_SDE}
\end{equation}
We also refer to $H = GG^\top \in \R^{d \times d}$ as the (observational) diffusion, since it is only possible to \rv{identify} $H$ \citep{wang2024generator, pavliotis2014stochastic}. Time-homogeneous linear additive noise SDEs are of particular interest \citep{du2023dynamical, wang2024generator, zhang2024joint}, as they generalize the popular Ornstein-Uhlenbeck process. A time-homogeneous linear additive noise SDE obeys the form:
\begin{equation}
    \mathrm{d}X_t = AX_t \: \mathrm{d}t+ G\:\mathrm{d}W_t, \ \ X_0 \sim p_0, \ \ A \in \R^{d \times d}.
    \label{eq: linear_additive_noise_SDE}
\end{equation}
We overview important properties of this SDE in Section \ref{sec:linear_additive_noise} of the appendix, including its closed form solution and transition density.

\subsection{Inference from temporal marginals}
The marginals-only observational setting dates back to at least Aristotle, who dissected bird eggs at various developmental stages, in order to infer the processes and mediators (e.g., heat) behind embryogenesis \citep{waddington1935animal,ogle1882aristotle,aristotle1942generation}. If the underlying system is modeled by an SDE, then the evolution of its temporal marginals $p_t$ are defined by its Fokker-Planck equation. For example, a straightforward computation shows that the marginals of a linear additive noise SDE evolve according to the following Fokker-Planck equation:
\begin{equation}
    \frac{\partial}{\partial t}p(x,t) = -\nabla \cdot \left((Ax) p(x,t)\right) + \frac{1}{2} \nabla \cdot (H \nabla p(x,t)), \ \ p(x,0)=p_0(x).
    \label{eq: FP_linear_additive_noise}
\end{equation}
 % Although \eqref{eq: FP_linear_additive_noise} assumes differentiability, we note that the Fokker-Planck equation can also be defined in the weak distributional sense for general probability measures $p_t$ (see Appendix Section \ref{sec: weak_FP}). 

Contemporary research in statistical inference from temporal marginals largely comes from single-cell biology. Many works focus on trajectory inference for cell fates \citep{schiebinger2019optimal, lavenant2021towards,yachimura2023scegot, chizat2022trajectory}, while others infer the causal graph (GRN) \citep{aalto2020gene,brouillard2020differentiable, tejada2023causal, rohbeck2024bicycle}, or perform parameter estimation \citep{chakraborty2009parameter, shen2024learning}. Recent works jointly estimate a subset of these quantities by leveraging their intrinsic relationships. For instance, \citet{vargas2021solving} and \citet{shen2024multi} are primarily interested in trajectory inference, and iteratively estimate drift and trajectories. Meanwhile, \citet{zhang2024joint} iteratively estimates drift and trajectories, while additionally applying permanent interventions on the drift dynamics for network inference. 

% Our approach does not assume irrotational drift or isotropic diffusion, and does not require additional interventions. 
% Ultimately, drift and diffusion are central to the system's causal structure and trajectory fates

% SDEs crucially depend on both deterministic and random relationships, and it is difficult to precisely know these relationships beforehand.

% Furthermore, a large number of these permanent interventions are required in order to learn the linear dynamical system. In simulations, $10\%$ of all observed genes are eliminated from the dynamical system, amounting to $2000$ gene knockouts. In contrast, we show that we can identify any linear additive noise SDE by intervening solely on the initial distribution $p_0$
\subsection{Schr\"odinger bridge problem}
Given a set of temporal marginals, trajectory inference is the task of estimating a law on paths that obeys each marginal constraint while closely approximating desired reference dynamics. This is commonly implemented by solving the Schr\"odinger Bridge (SB) problem \citep{chizat2022trajectory, shen2024learning}, defined below for a pair of marginals. Given the transition law $\mathcal{K}$ of a reference process, an initial distribution $\mu$, and a final distribution $\nu$, the SB solution is the law of the stochastic process, which satisfies both marginal constraints, while minimizing relative entropy to $\mathcal{K}$:
\begin{equation}
\pi^* = \text{Proj}_{\Pi(\mu,\nu)}^{KL}(\mathcal{K}) = \argmin_{\pi \in \Pi(\mu,\nu)} D_{KL}(\pi \| \mathcal{K}).
\label{eq: schrodinger_bridge}
\end{equation}

The SB problem has also gained interest in the machine learning community, due to its connection to generative diffusion models. Score estimation of time-reversed SDEs is an instance of the SB problem \citep{de2021diffusion}, and SB solvers are now widely used in generative modeling due to their flexibility \citep{de2021diffusion, chen2021likelihood, shi2024diffusion, huang2024one}. 
% Furthermore, the solution to the SB problem can itself be represented as a function of forwards and backwards drifts \citep{vargas2021solving}. 

\subsection{Identifiability of SDE parameters}
The problem of non-identifiability occurs if there exists an alternative drift-diffusion pair, e.g. $(\tilde{A}, \tilde{H}) \neq (A, H)$ for linear-additive noise SDEs, which shares the same marginals. Equivalently, the processes satisfy the same Fokker-Planck equation across all observed times, given initial distribution $p_0$. Non-identifiability can occur due to many factors. For example, distinct SDEs can share the same stationary distribution, and would hence be non-identifiable from marginals if $p_0$ is initialized at equilibrium \citep{lavenant2021towards, shen2024learning}. Also, if the initial distribution is standard normal $p_0 \sim \mathcal{N}(0, I_d)$, then it would be impossible to distinguish between a pure Brownian motion, and an SDE with an additional rotational vector field \citep{shen2024learning}. We  overview three classical examples of non-identifiability in Section \ref{sec:Noniden_examples} of the Appendix.

Because of non-identifiability, previous works require assumptions about the underlying SDE to perform inference. The standard assumptions are that the drift is an irrotational vector field $\nabla \psi$ \citep{weinreb2018fundamental, lavenant2021towards, vargas2021solving, chizat2022trajectory, terpin2024learning} and that the diffusion is both known and given by isotropic Brownian motion $\sigma \mathrm{d}W_t$ \citep{weinreb2018fundamental, lavenant2021towards, vargas2021solving, chizat2022trajectory, shen2024multi, zhang2024joint, terpin2024learning}. It is also common to leverage additional perturbational data from interventions, and to impose regularization, to better learn the system  \citep{rohbeck2024bicycle, zhang2024joint,maddu2024inferring}. However, these constraints impose unrealistic conditions on the data. An irrotational drift field cannot model negative feedback loops or repressilator dynamics, which are common in biological data \citep{weinreb2018fundamental}. Similarly, isotropic diffusion cannot model variable noise scales or correlated noise structures, which commonly arise in real data \citep{mogensen2022graphical, rohbeck2024bicycle, santos2024learning}. While most works use a known diffusion estimate to infer drift, \citet{forrow2024consistent} infers diffusion, but requires a drift estimate. In both cases, the accuracy of the prior estimate is crucial, because misspecified diffusion typically leads to poor drift estimation and vice-versa \citep{beven1993dispersion,guan2025gradient}. This is problematic, since knowing an accurate estimate of drift or diffusion {\em a priori} is unrealistic in practice. Finally, we note that recent work estimates drift and diffusion, but does not consider the uniqueness of learned parameters, due to non-identifiability \citep{terpin2024learning, brogat2024learning}.

To advance the field of statistical inference from temporal marginals, we work to remove previous assumptions by determining conditions for identifiability and designing a method that leverages them. \citet{wang2024generator} showed that identifiability of a linear additive noise SDE from ground truth trajectories with fixed $X_0 \in \R^d$ is equivalent to a non-degenerate rank condition based jointly on $X_0$, the linear drift $A$ and the observational diffusion $GG^\top$. In contrast, our work studies identifiability of linear additive noise SDEs in the more general ``marginals-only" setting. We also demonstrate that identifiability from continuous observation can be guaranteed across all linear additive noise SDEs solely by checking the initial distribution. To the best of our knowledge, this has not been shown even in the case of trajectory-based observations.

\section{\rv{Identification of linear SDEs from temporal marginals}}
\label{sec:identify}

\rv{In this section, we prove three identifiability results of increasing generality, which provide conditions for uniquely recovering the drift and diffusion of linear additive noise SDEs. Our theory demonstrates that SDE non-identifiability is only possible when the observed marginals share a generalized rotational symmetry. Furthermore, identifiability holds under generic conditions, since the set of SDE parameters that can preserve these symmetries over the course of the continuous evolution has Lebesgue measure $0$.}

\rv{\subsection{SDE identifiability under isotropic diffusion}
\label{sec: SDE_param_identifiability_isotropic}
We first consider the case of SDEs with linear drift and isotropic diffusion:
\begin{align}
    dX_t = AX_t \dt + \sigma \dWt,
    \label{eq: isotropic_linear_sde}
\end{align}
and we consider the question of when we can jointly identify the drift $A$ and diffusivity $\sigma^2$ from continuously observed marginals $(p_t: t \ge 0)$. It turns out that non-identifiability is intrinsically related to the marginals sharing a generalized rotational symmetry. We now define what we mean by a generalized rotation.
\begin{definition}[Generalized rotation]
\label{def:generalized_rotation}
Let $d \ge 2$. We define a $\Sigma$-generalized rotation in $\R^d$ as a matrix exponential $e^{B\theta}$, such that $\theta$ is the rotation angle and $B \in \R^{d \times d}$ is skew-symmetric with respect to $\Sigma \succeq 0$, i.e. $B\Sigma + \Sigma B^\top = 0$. 
\end{definition}
Note that classical rotations only differ in that the matrix $B$ must be skew-symmetric, which corresponds to restricting $\Sigma = I_d$. By allowing $\Sigma \neq I_d$, we enable rotations to be defined over anisotropic geometries. If $\Sigma \succeq 0$ is full rank, then $\Sigma$-generalized rotations are equivalent to classical rotations in a transformed space, such that lengths and angles are modified by scaling and shearing (see Lemma \ref{lem:conjugate_rotation} in appendix). We also note that if $\Sigma \succeq 0$ is rank-deficient, then $B$ can take any values in the nullspace of $\Sigma$. We now extend this notion to random variables, by defining a distribution $X$ to be \emph{auto-rotationally invariant} (ARI) if its law (distribution) is invariant to the action $e^{B\theta}$ for some nonzero matrix $B \in \R^{d \times d}$.
\begin{definition}[Auto-rotational invariance]
\label{def:full_ARI}
Let $d \ge 2$. We define a $d$-dimensional random variable $X$ to be auto-rotationally invariant (ARI) if there exists some square matrix $B \neq 0$, such that for all $\theta \in \R$, $e^{B\theta}X \sim X$. For the case $d=1$, we by convention consider mean-zero  Gaussians $\mathcal{N}(0,\sigma^2)$ as the only ARI distributions.
\end{definition}
We term this property auto-rotational invariance to reflect its two constituent features. First, the transformation $e^{B\theta}$ constitutes a generalized rotation (Definition \ref{def:generalized_rotation}) with respect to the distribution’s own covariance $\Sigma = \text{cov}(X)$ (see Lemma \ref{lem: lyapunov_condition}), hence \textit{auto-rotational}. Second, the distribution is invariant under this action ($X \sim e^{B\theta}X$). Auto-rotational invariance is a highly restrictive property since it requires that the law of the random variable is rotationally symmetric with respect to the geometry imposed by its covariance $\Sigma$. If $\Sigma \succeq 0$ is full rank, then $X$ being ARI implies that it has a density with elliptic level sets (see Lemma \ref{lemma:level_sets}). However, we note that all rank degenerate distributions are auto-rotationally invariant (see Lemma \ref{lemma: degenerate_X_isotropy}), since $e^{B\theta}$ can be designed to operate outside the distribution's support.} 

\rv{We now show present our first identifiability result, which states that the SDE parameters $(A, \sigma^2)$ can be uniquely identified from continuously observed marginals $X_t \sim p_t$, unless each marginal is auto-rotationally invariant with respect to the same generalized rotation $e^{B\theta}$.
\begin{theorem}[Identifiability under isotropic diffusion]
Suppose that the observed marginals $(p_t: t \ge 0)$ can be produced by multiple distinct parameter sets $(A,\sigma^2) \neq (\tilde{A},\tilde{\sigma}^2)$. Then, each marginal $X_t \sim p_t$ is auto-rotationally invariant with respect to the same generalized rotational action $e^{B\theta}$, i.e. for all marginals $X_t \sim p_t$, we have that $e^{B\theta}X_t \sim X_t$ for all $\theta \in \R$.
\end{theorem}}

\begin{proof}
\rv{First, suppose that distinct parameter sets $(A,\sigma^2) \neq (\tilde{A},\tilde{\sigma}^2)$ produce the same marginals $(p_t: t \ge 0)_{A,\sigma^2} =  (p_t: t \ge 0)_{\tilde{A},\tilde{\sigma}^2}$. Then, their Fokker-Planck evolution equations are equal:
\begin{align*}
    \partial_t p_t(x) &= -\nabla \cdot (Ax p_t(x)) + \frac{\sigma^2}{2}\Delta p_t(x) \\
    &= -\nabla \cdot (\tilde{A}x p_t(x)) + \frac{\tilde{\sigma}^2}{2}\Delta p_t(x).
\end{align*}
Without loss of generality, we may define the residual diffusivity $\bar{\sigma}^2 = \sigma^2-\tilde{\sigma}^2 \ge 0$ and the residual drift $\bar{A} = A- \tilde{A}$. By subtracting the equations, we obtain a stationary Fokker-Planck equation with the residual parameters
\begin{align}
    0&= -\nabla \cdot (\bar{A}x p_t(x)) + \frac{\bar{\sigma}^2}{2}\Delta p_t(x).
    \label{eq:residual_FP}
\end{align}
First, we note that if the residual diffusion $\bar{\sigma}^2 = \sigma^2-\tilde{\sigma}^2$ is nonzero, it follows from elliptic PDE theory that the residual Fokker-Planck equation \eqref{eq:residual_FP} has at most one stationary distribution \citep[Theorem 4.1.6, Example 4.1.8]{bogachev2022fokker}. Since each marginal $p_t$ solves \eqref{eq:residual_FP}, each marginal is in fact the stationary distribution $p_{eq}$. For linear SDEs, $p_{eq}$ is the Gaussian distribution $\mathcal{N}(0, \Sigma)$ with $\Sigma$ satisfying $\Sigma A + A \Sigma = \sigma^2$ \citep[Theorem 6.2.1]{da1996ergodicity}. Thus, if the diffusivities are unequal, then non-identifiability implies that all marginals are observed at a stationary Gaussian distribution, which is auto-rotationally invariant.}

\rv{From this analysis, we may also deduce that if the marginals $(p_t)_{t \ge 0}$ are not all the same distribution, then non-identifiability is only possible if both SDEs share the same diffusivity $\sigma^2$, and are thus only distinct in their drifts. In this case, the residual Fokker-Planck equation reduces to the continuity equation:
\begin{align}
    0&= -\nabla \cdot (\bar{A}x p_t(x)).
\end{align}
By Liouville's theorem (see \citep[Theorem 1]{neklyudov2021deterministic}), each observed marginal $X_t \sim p_t$ is stationary under the linear ODE $\dot{x}=\bar{A}x$. Thus, for each marginal $X_t$, we have
\begin{align}
    e^{\bar{A}\theta}X_t \sim X_t \quad \forall \theta \in \R.
\end{align}
We conclude that each marginal $X_t \sim p_t$ is auto-rotationally invariant with respect to the same matrix $B=\bar{A}$.}
\end{proof}

\rv{We make a couple remarks about this result. First, we recall that observing marginals at equilibrium is a well-documented source of non-identifiability, since equilibrium only depends on the drift to diffusion ratio (see \citet{lavenant2021towards} and \citet[Proposition 4.1]{guan2025gradient}). Our result shows that under isotropic diffusion, observing marginals that vary in time is enough to determine the true diffusivity $\sigma^2$. This follows from the same elliptic PDE argument underlying a recent identifiability result by \citet{guan2025gradient}, which showed that observation at equilibrium is the only source of non-identifiability of the SDE parameters for gradient-flow SDEs, which are defined by conservative drift $-\nabla \Psi$ and isotropic diffusivity $\sigma^2$.}

\rv{However, in contrast to the gradient-flow setting, linear SDEs may have rotational drift, which introduces undetectable rotation as a second source of non-identifiability. Indeed, a classic non-identifiability example from the literature is given by the systems \citep{hashimoto2016learning, shen2024learning}:
\begin{align}
    \mathrm{d}X_t &= \mathrm{d}W_t, &X_0 \sim \delta_0\\
    \mathrm{d}Y_t &= \begin{bmatrix} 0 & 1 \\ -1 & 0  \end{bmatrix}Y_t \: \mathrm{d}t
    + \mathrm{d}W_t, &  Y_0 \sim \delta_0,
\end{align}
where the residual drift is the skew-symmetric matrix $\bar{A}=\begin{bmatrix}
    0 & 1 \\
    -1 & 0
\end{bmatrix}$,
which produces a rotational action that is indistinguishable at the level of the marginals $p_t \sim \mathcal{N}(0, t I_d)$. We present examples with more general rotational geometries in \cref{sec:Noniden_examples}. Our result shows that 
this failure mode, characterized by shared invariance under a generalized rotational action, is the only other source of non-identifiability besides observation at equilibrium, for linear SDEs with isotropic diffusion.}
% Thus, as long as we observe transient marginals, which are not all invariant to a generalized rotational action, $e^{B\theta}$, then we can jointly identify the SDE parameters $(A, \sigma^2)$.

\rv{\subsection{SDE identifiability under anisotropic diffusion}}
\label{sec: SDE_param_identifiability_anisotropic}

\rv{We now consider the more general case of linear additive noise SDEs with possibly anisotropic diffusion:
\begin{align}
    dX_t = AX_t \dt + G \dWt,
    \label{eq: anisotropic_linear_sde}
\end{align}
and we derive identifiability conditions for determining the SDE parameters $(A, H = GG^\top)$ from observed marginals $(p_t: t \ge 0)$. We note that anisotropic diffusion significantly complicates identifiability analysis, since the diffusion level can vary in each dimension. Indeed, anisotropic diffusion can produce non-identifiability such that the distributions are stationary in a subset of components, while otherwise exhibiting transience. Since equilibrium distributions for linear additive noise are mean-zero Gaussians \citep[Sections 4.2 and 4.4]{pavliotis2014stochastic}, we can illustrate this by considering a `Gaussian pancake' initial distribution $X_0 \sim p_0$, where
$X_0^{(1)} \sim \mathcal{N}(0, 1)$ and the remaining components
$X_0^{(2:d)}$ are initialized independently. To produce non-identifiability, we consider two different univariate dynamics that preserve $X_t^{(1)} \sim \mathcal{N}(0, 1)$ and we decouple these first component dynamics from the rest of the system:
\begin{align}
\mathrm{d} X_t &= \begin{bmatrix} -\frac{1}{2} & \mathbf{0} \\ \mathbf{0} & A_{\setminus1} \end{bmatrix} X_t \,\mathrm{d}t + \begin{bmatrix} 1 & \mathbf{0} \\ \mathbf{0} & G_{\setminus1}  \end{bmatrix} \mathrm{d}W_t, \quad X_0 \sim p_0 \\
\mathrm{d}Y_t &= \begin{bmatrix} -2 & \mathbf{0} \\ \mathbf{0} & A_{\setminus1} \end{bmatrix} Y_t \,\mathrm{d}t + \begin{bmatrix} 2 & \mathbf{0} \\ \mathbf{0} & G_{\setminus1}  \end{bmatrix} \mathrm{d}W_t, \quad Y_0 \sim p_0.
\end{align}
These SDEs are non-identifiable from marginals since they are both stationary in the first component, while being trivially identical in other components since they share the same independent complementary dynamics $(A_{\setminus1}, G_{\setminus1})$.}

\rv{From this construction, it is clear that the temporal marginals can be non-identifiable while encapsulating diverse geometries. Thus, in the more general case of anisotropic diffusion, non-identifiability is feasible even if a marginal $X_t \sim p_t$ fails to be auto-rotationally invariant, as defined in Definition \ref{def:full_ARI}. However, by recalling the classical fact that Gaussians are the only random variables that have finitely many nonzero cumulants \citep{marcinkiewicz1939propriete}, and moreover, that the only nonzero cumulant of a mean-zero Gaussian is its covariance, we may consider a natural relaxation of auto-rotational invariance. Instead of defining the invariance relation over the full law, we can define it on all cumulant orders, except $n=2$.}

\rv{\begin{definition}[Weak auto-rotational invariance]
\label{def:weak_ari}
Let $X$ be a $d$-dimensional r.v, and denote by $T_n^X(v, \ldots, v)$ the order $n$ tensor defining the $n$th cumulant of $X$, projected in the direction of $v \in \R^d$, i.e. the $n$th cumulant of the univariate r.v. $v^\top X$. Then, $X$ is weakly auto-rotationally invariant if there exists some nonzero matrix $B \in \R^d \times d$, such that the mean and higher order cumulants $n \ge 3$ obey the invariance:
\begin{align}
    T_n^X(e^{B\theta}v, \ldots,e^{B\theta}v) = T_n^X(v, \ldots, v) \quad \forall n \in \N \setminus \{2\}, \forall  v \in \R^d, \forall \theta \in \R.
    \label{eq:new_ARI}
\end{align}
\end{definition}}

\rv{Under sufficient regularity conditions, such that the cumulants determine the law (e.g. if the cumulant generating function exists), weak auto-rotational invariance becomes full auto-rotational invariance (Definition \ref{def:full_ARI}) precisely when the invariance relation \eqref{eq:new_ARI} also holds for the covariance ($n=2$). We previously noted that if $X$ has rank degenerate covariance, such that it is supported in a proper subspace of $\R^d$, then $X$ would satisfy full ARI for any transformation $e^{B\theta}$, where $B$ acts outside the support of $X$. The same idea extends for random variables that are non-Gaussian in a proper subspace of $\R^d$ when considering weak ARI. Thus, weak ARI also includes distributions with full rank covariance, but which are Gaussian when projected along select directions. Furthermore, similarly to full ARI, if $X$ is nondegenerate and non-Gaussian in all components, then the matrix $B$ must be similar to a skew-symmetric matrix, such that the transformation $e^{B\theta}$ is a rotational action (see Lemma \ref{lemma:ARI_implies_generalized_rotation}). In Figure \ref{fig:ARI_distributions}, we visualize three ARI distributions as well as a Gaussian pancake distribution, which is only weakly ARI.}

\begin{figure}[h!]
\centering
\begin{minipage}{0.4\textwidth}
        \centering
        \begin{subfigure}[b]{\textwidth}
            \centering
            \includegraphics[width=\textwidth]{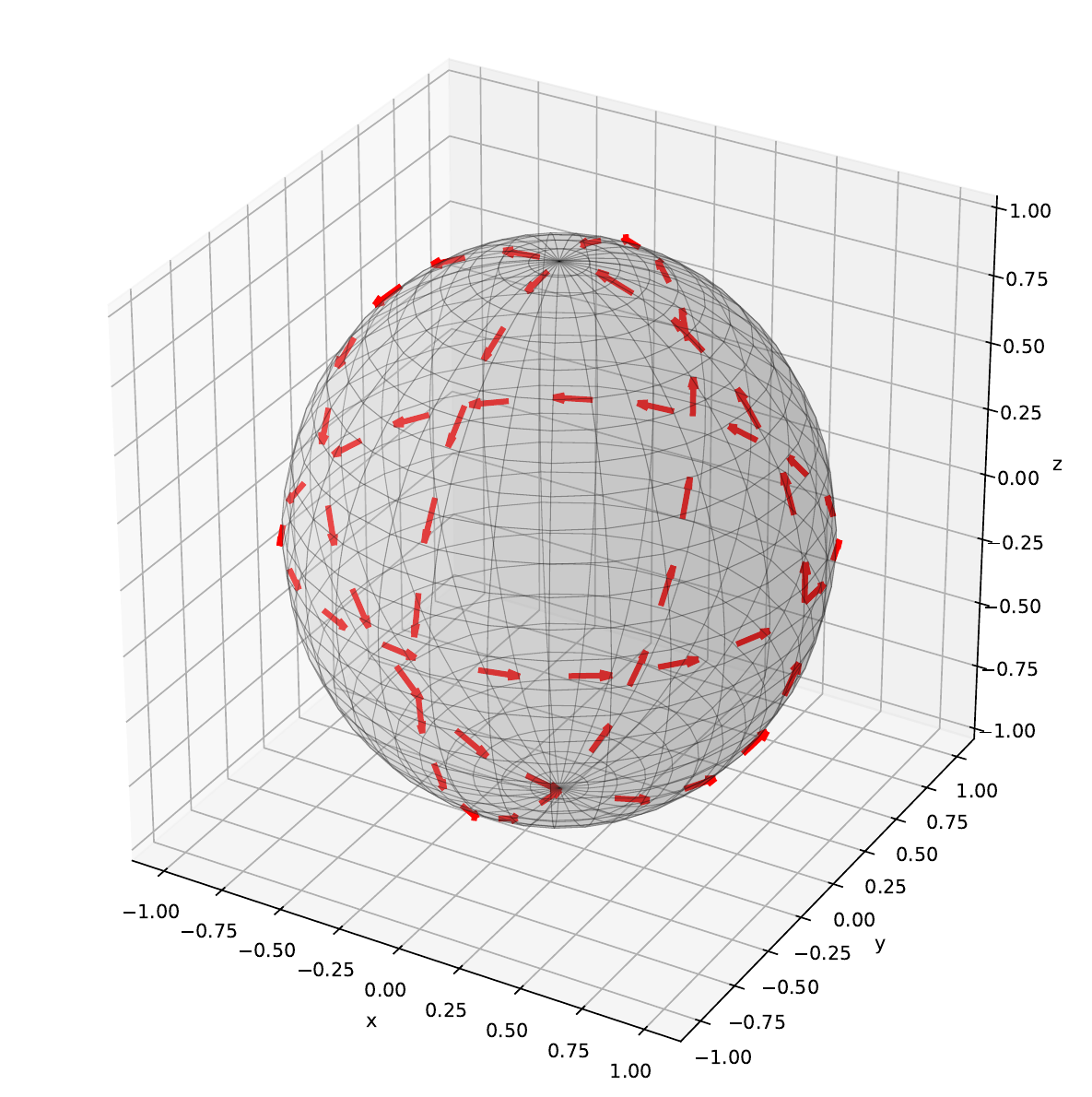}
            \caption{$X_0$ is spherical and invariant under all classical rotations.}
            \label{fig:sphere_vector_field}
        \end{subfigure}
    \end{minipage}%
    \hfill
    \begin{minipage}{0.4\textwidth}
        \centering
        \begin{subfigure}[b]{\textwidth}
            \centering
            \includegraphics[width=\textwidth]{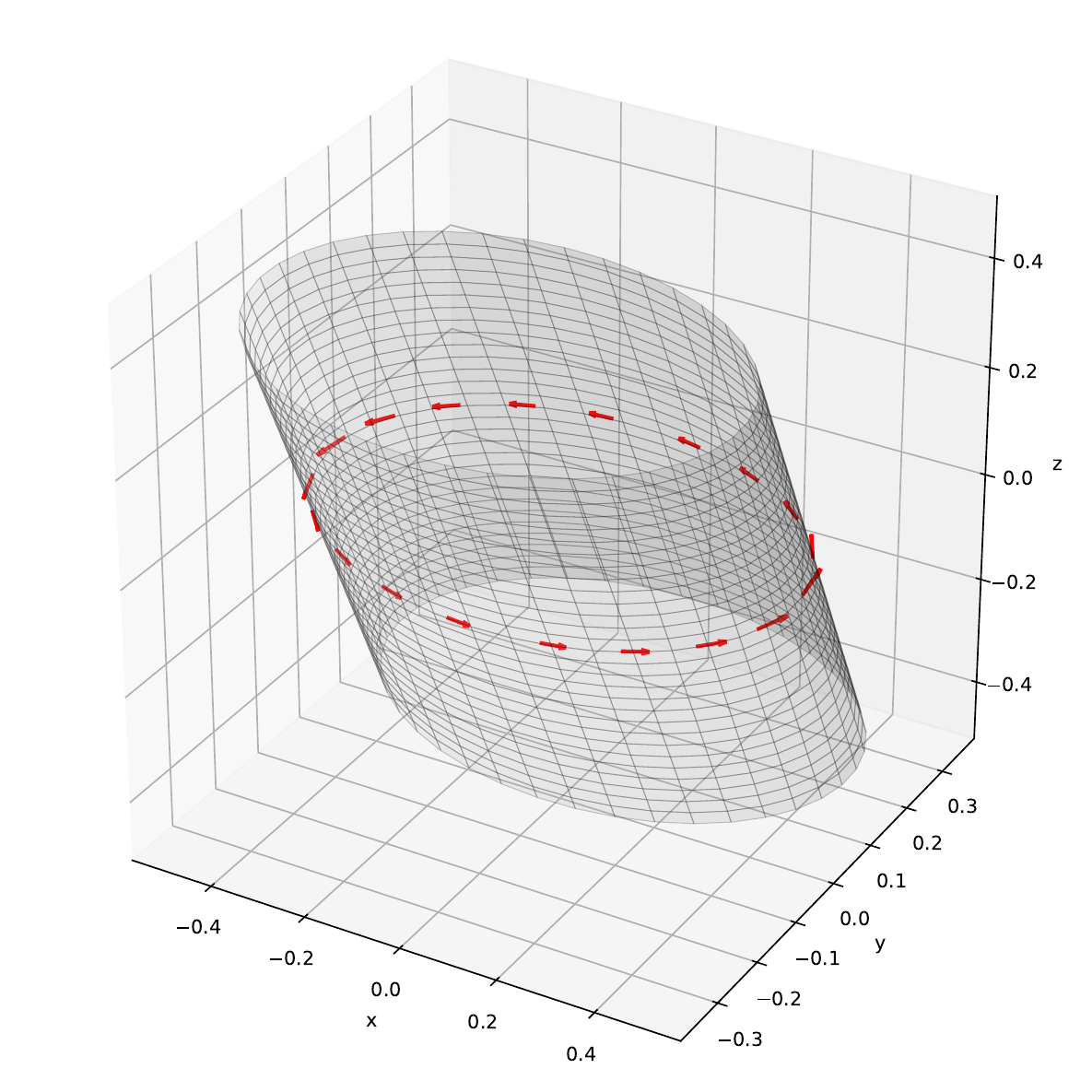}
            \caption{$X_0$ is cylindrical (up to shearing and scaling), and is invariant under rotation in the $xy$ plane after transformation.}
            \label{fig:cylinder_vector_field}
        \end{subfigure}
    \end{minipage}
    
    % Second row: Dgenerate and Gaussian Pancake
    \begin{minipage}{0.4\textwidth}
        \centering
        \begin{subfigure}[b]{\textwidth}
            \centering
            \includegraphics[width=\textwidth]{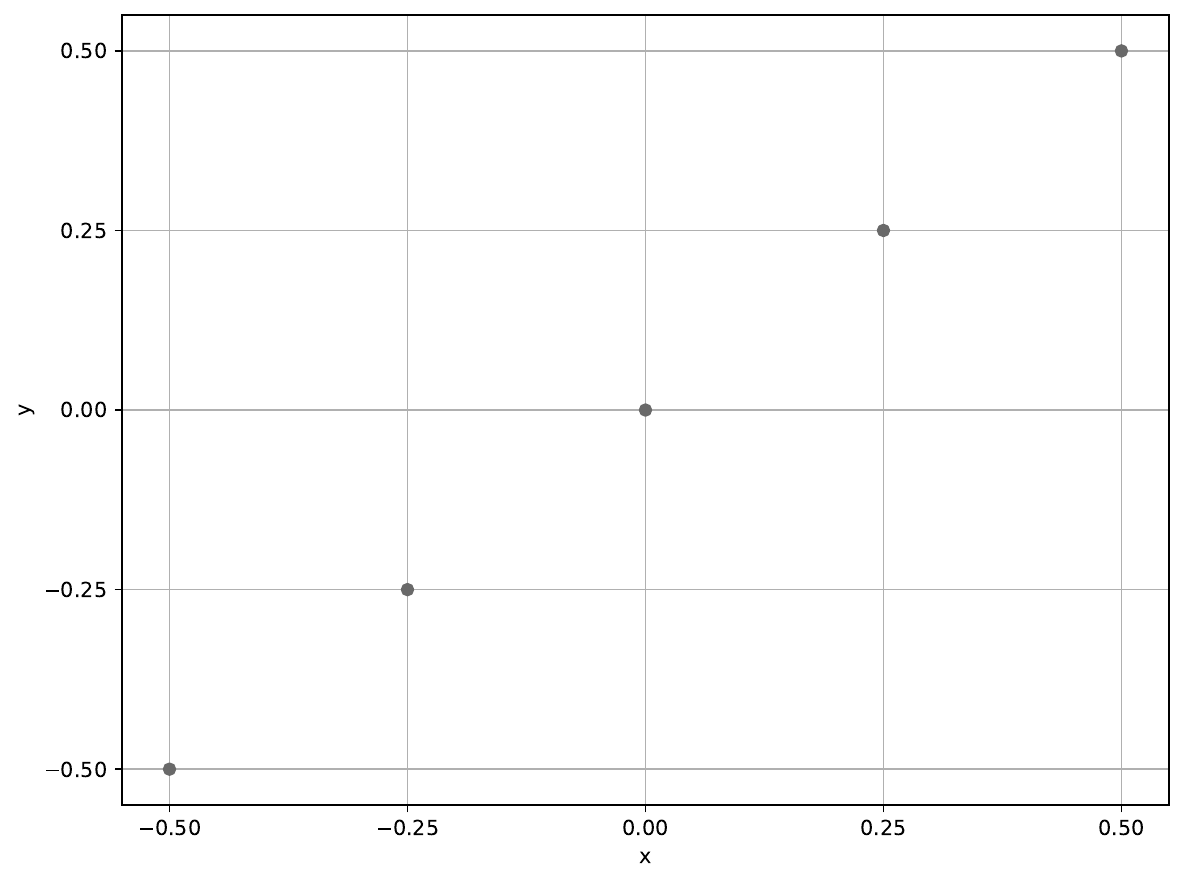}
            \caption{$X_0$ is degenerate and its support lies in the nullspace of $B=\begin{bmatrix}
                1 & -1 \\ 
                0 & 0
            \end{bmatrix}$ }
            \label{fig:degenerate_rv}
        \end{subfigure}
    \end{minipage}
    \hfill
    \begin{minipage}{0.4\textwidth}
        \centering
        \begin{subfigure}[b]{\textwidth}
            \centering
            \includegraphics[width=\textwidth]{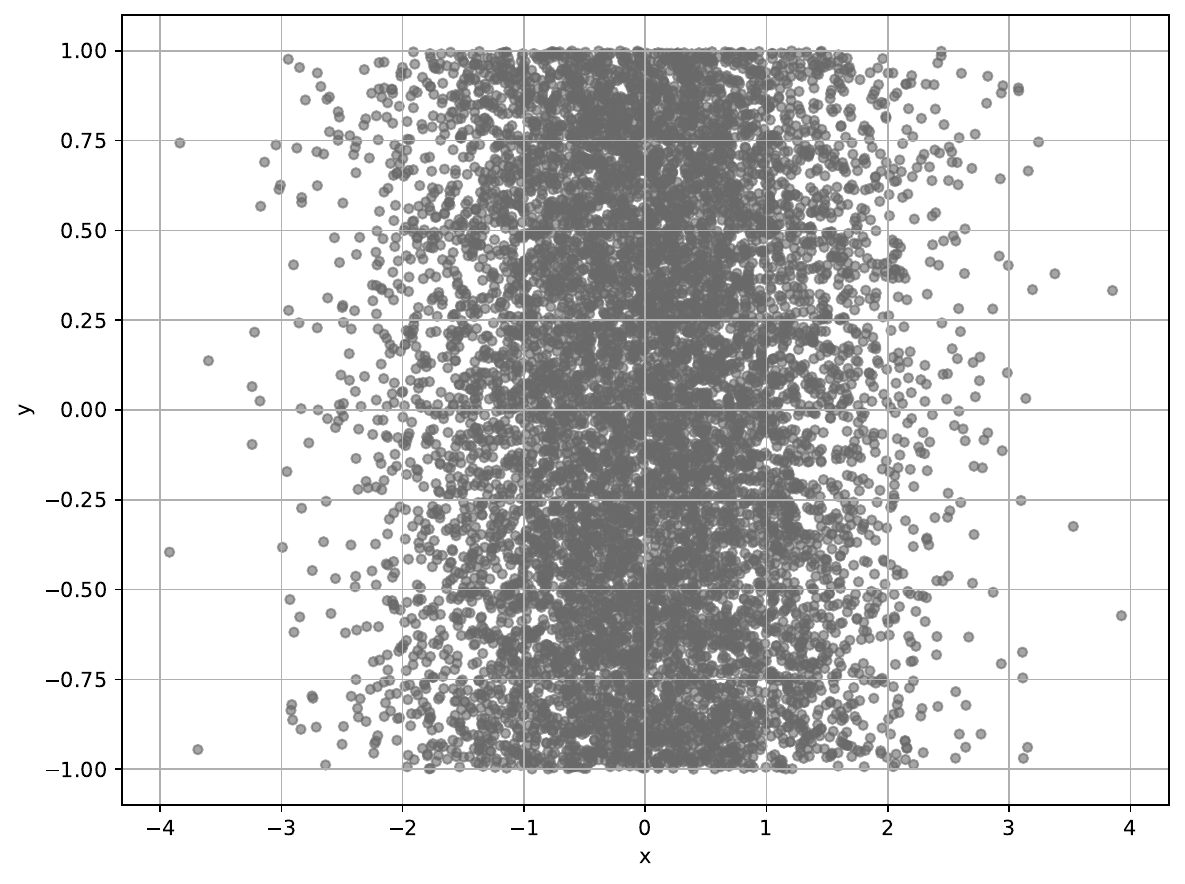}
            \caption{$X_0$ is a Gaussian pancake, which is mean $0$ univariate Gaussian in $x$.}
            \label{fig:gaussian_pancake_rv}
        \end{subfigure}
    \end{minipage}%
\caption{We visualize four distributions that satisfy auto-rotational invariance. The first three distributions obey strong auto-rotational invariance (Definition \ref{def:full_ARI}), while the Gaussian pancake is weakly auto-rotationally invariant (Definition \ref{def:weak_ari}).}
\label{fig:ARI_distributions}
\end{figure}

\rv{We now present our second identifiability theorem, which proves that non-identifiability from the marginals of a linear additive noise SDE is only possible if the initial marginal $X_0 \sim p_0$ is weakly auto-rotationally invariant, and furthermore, that there exist non-identifiable systems for any weakly ARI initial distribution $X_0$. Intuitively, non-identifiability is only feasible if there is a symmetry to exploit, and if the initial distribution has such a symmetry, then it is susceptible to non-identifiability.}

\rv{\begin{theorem}[Identifiability for linear additive noise SDEs]
Let $p_0$ be a probability distribution on $\R^d$ with a well-defined moment generating function, and let $$\Theta = \{(A,H): A \in \R^{d \times d}, G \in \R^{d \times m}, H = GG^\top \}$$ denote the set of drift-diffusion pairs under a linear additive noise model \eqref{eq: linear_additive_noise_SDE}. It follows that every parameter configuration $(A,H) \in \Theta$ induces a unique set of marginals $(p_t : t \ge 0)_{A,H}$ from initialization $X_0 \sim p_0$, if and only if $X_0$ is not weakly auto-rotationally invariant (Definition \ref{def:weak_ari}).
\label{thm: identifiability}
\end{theorem}}
\begin{proof}
\rv{We first prove that non-identifiability can only occur if $X_0$ is weakly ARI. Suppose that two linear additive noise SDEs have distinct parameters $(A, H=GG^\top) \neq (\tilde{A}, \tilde{H}=\tilde{G}\tilde{G}^\top)$, yet equal marginals for all times $t \ge 0$. We note that these have analytic solutions (see \cref{sec:linear_additive_noise}):
\begin{align*}
    X_t &= e^{At}X_0 + \int_0^t e^{A(t-s)}GdW_s\\
    Y_t &= e^{\tilde{A}t}X_0 + \int_0^t e^{\tilde{A}(t-s)}\tilde{G}dW_s,
\end{align*}
where the initial distribution $X_0$ is shared. }

\rv{Then, since the m.g.f is well-defined, equality in distribution $X_t \sim Y_t$ is equivalent to equality in all cumulants between $X_t$ and $Y_t$:
\begin{align}
    T_n^{X_t}(v, \ldots, v) = T_n^{Y_t}(v, \ldots, v) \qquad \forall v \in \R^d, \forall t \ge 0,
\end{align}
where $T_n^{X}(v, \ldots, v)$ denotes the $n$th cumulant of $X$, projected onto $v \in \R^d$. By matching cumulants of all orders, non-identifiability requires that the following infinite system of equations is satisfied for all $v \in \R^d$ and $t \ge 0$:
\begin{align}
    \begin{cases}
T_n^{X_0}(e^{A^\top t}v, \ldots, e^{A^\top t}v)&=T_n^{X_0}(e^{\tilde{A}^\top t}v, \ldots, e^{\tilde{A}^\top t}v) \qquad n \neq 2,\\
T_2^{X_0}(e^{A^\top t}v, e^{A^\top t}v) + v^\top\Sigma_t v &= T_2^{X_0}(e^{\tilde{A}^\top t}v, e^{\tilde{A}^\top t}v) + v^\top\tilde{\Sigma}_t v \qquad n=2,
\end{cases}
\label{eq:all_cumulant_eqs}
\end{align}
Note that the Ito integral terms contribute only to the second order cumulant, since they are mean zero Gaussians \citep{marcinkiewicz1939propriete}. We are also able to move the matrix exponential into the arguments, due to the homogeneity of cumulant tensors. We first focus on all cumulants of order $n \neq 2$ and note that they imply the identity:
\begin{align}
    T_n^{X_0}(e^{\bar{A}^\top t}v, \ldots, e^{\bar{A}^\top t}v)=T_n^{X_0}(v, \ldots, v) \qquad n \neq 2
\label{eq:higher_order_cumulants}
\end{align}
Indeed, if we take the time derivative at $t=0$, we obtain
\begin{align*}
    \frac{d}{dt}\bigg\rvert_{t=0}T_n^{X_0}(e^{A^\top t}v, \ldots, e^{A^\top t}v) &=\frac{d}{dt}\bigg\rvert_{t=0}T_n^{X_0}(e^{\tilde{A}^\top t}v, \ldots, e^{\tilde{A}^\top t} v)\\
    \implies nT_n^{X_0}(A^\top v,v, \ldots, v) &= nT_n^{X_0}(\tilde{A}^\top v, v,\ldots, v)\\
    \implies T_n^{X_0}(\bar{A}^\top v, v, \ldots, v) &=0
\end{align*}
where $\bar{A}=A-\tilde{A}$. Since the above holds for all $v \in \R^d$, we can apply it to any vector in the set $\{u(t)=e^{\bar{A}^\top t}v: t \ge 0, v \in \R^d\}$, which yields
\begin{align*}
    T_n^{X_0}(\bar{A}^\top u(t), u(t), \ldots, u(t))&=0 \\
    \implies \frac{d}{dt}T_n^{X_0}(u(t), \ldots, u(t))=nT_n^{X_0}(\bar{A}^\top u(t), u(t), \ldots, u(t))&=0.
\end{align*}
Thus, along the trajectory $u(t)=e^{\bar{A}^\top t}v$, for all $n \neq 2$, the $n$th cumulant of $X_0$, projected onto $u(t)$, is constant in time, so the equations for matching the higher order cumulants reduces to
\begin{align*}
    T_n^{X_0}(e^{\bar{A}^\top t}v, \ldots, e^{\bar{A}^\top t}v)= T_n^{X_0}(v, \ldots, v) \qquad \forall  n \neq 2,
\end{align*}
which is precisely the condition for weak auto-rotational invariance \eqref{eq:new_ARI} with $B = \bar{A}$.}

\rv{We now show that if $X_0$ satisfies the weak ARI condition \eqref{eq:new_ARI} for some matrix $B \neq 0$, then there exist systems with distinct parameter sets $(A, H=GG^\top) \neq (\tilde{A}, \tilde{H}=\tilde{G}\tilde{G}^\top)$, which produce the same marginals for each time, i.e. $(p_t : t \ge 0)_{A,H} = (p_t : t \ge 0)_{\tilde{A},\tilde{H}}$. First, we denote the initial covariance $\cov(X_0)=\Sigma$. To design a non-identifiable system, we consider an SDE with base parameters 
\begin{align*}
    A &= -\alpha I \\
    H&=2\alpha \Sigma
\end{align*}
Then, to define the second system, we set the drift difference to be the matrix for which $X_0$ is weakly ARI, i.e. $\bar{A} = A - \tilde{A} = B$, such that the second system has parameters
% Suppose that $X_0$ is ARI, by satisfying \eqref{eq:new_ARI} with $B=\bar{A} = A - \tilde{A}$, and let 
\begin{align*}
    \tilde{A} &= A - \bar{A} = -\alpha I - \bar{A}\\
    \tilde{H} &= H - \bar{H}= 2\alpha \Sigma + \bar{A} \Sigma + \Sigma \bar{A}^\top
\end{align*}
Note that $\alpha$ can be made arbitrarily large to ensure that $\tilde{H}$ has positive spectrum, and is a valid p.s.d diffusion matrix. Since $A=-\alpha I$ is isotropic, it follows that we have the analytic solutions 
\begin{align*}
    X_t &= e^{-\alpha t}X_0 + \mathcal{N}(0, \Sigma_t) \qquad \text{s.t. } \Sigma_t = \int_0^t e^{A(t-s)}He^{A^\top (t-s)}ds \\
    Y_t &= e^{-\alpha t}e^{-\bar{A}t}X_0 + \mathcal{N}(0, \tilde{\Sigma}_t) \qquad \text{s.t. } \tilde{\Sigma}_t = \int_0^t e^{\tilde{A}(t-s)}\tilde{H}e^{\tilde{A}^\top (t-s)}ds.
\end{align*}
Then, note that non-identifiability holds if and only if $X_t \sim Y_t$ for all $t \ge 0$. Equivalently, the cumulants of all orders $n$ match for all times and projection vectors $v \in \R^d$:
\begin{align}
    T_n^{(X_t)}(v, \ldots, v) = T_n^{(Y_t)}(v, \ldots, v)\qquad \forall n \in \N, \forall v \in \R^d, \forall t \ge 0
    \label{eq:non_iden_all_cumulants_match}
\end{align}
To show that the above holds, we consider the cases $n \neq 2$ and $n=2$ separately. Matching all cumulant orders $n \neq 2$ reduces to showing that $e^{-\alpha t}X_0$ and $e^{-\alpha t}e^{-\bar{A}t}X_0$ have the same higher order cumulants. This follows directly from $X_0$ being weakly ARI with respect to $\bar{A}$:
\begin{align*}
T_n^{e^{-\alpha t}X_0}(v, \ldots, v) &= e^{-n\alpha t}T_n^{X_0}(v, \ldots, v)
\overset{\eqref{eq:new_ARI}}{=} e^{-n\alpha t}T_n^{X_0}(e^{-\bar{A}t}v, \ldots, e^{-\bar{A}t}v)
= T_n^{e^{-\alpha t}e^{-\bar{A}t}X_0}(v, \ldots, v)
\end{align*}
Then, to show that the cumulants match for order $n=2$, we note that this corresponds to equality in covariance. We in fact constructed the systems so that the covariance is constant in time, i.e. $\cov(X_t)=\Sigma =\cov(Y_t)$ for all $t \ge 0$. Indeed, we compute
\begin{align*}
    \frac{d}{dt}\cov(X_t)|_{t=0} &= A\Sigma + \Sigma A^T + H\\
    \frac{d}{dt}\cov(Y_t)|_{t=0} &= \tilde{A}\Sigma + \Sigma\tilde{A}^T + \tilde{H}\\
\end{align*}
We check that with $A = -\alpha I$, $H=2\alpha \Sigma$, the change in covariance is $0$ and thus stays at $\Sigma$:
\begin{align*}
    A\Sigma + \Sigma A^T + H &= -2\alpha \Sigma + 2\alpha \Sigma = 0\\
    \tilde{A}\Sigma + \Sigma\tilde{A}^\top + \tilde{H}&= (A\Sigma + \Sigma A^\top + H) -\left(\bar{A}\Sigma + \Sigma\bar{A}^\top + \bar{H} \right)\\
    &= -\left(\bar{A}\Sigma+ \Sigma\bar{A}^\top + \bar{H} \right) = 0
\end{align*}
Thus, all cumulants match between both SDEs at all times, and we conclude that they are non-identifiable.}
\end{proof}

\rv{\cref{thm: identifiability} shows that weak auto-rotational invariance of the initial distribution is a necessary and sufficient condition for non-identifiability to be feasible. We note that by re-indexing the index of the initial time, it follows that identifiability of the SDE parameters is guaranteed if any of the marginals in the evolution is not weakly ARI. Furthermore, since the SDE parameters are constant in time, non-identifiability not only implies that each marginal is weakly ARI, but also that each marginal is weakly auto-rotationally invariant with respect to the same matrix $B$ in the condition \eqref{eq:new_ARI}.  In other words, non-identifiability requires that a symmetry invariance is shared across all marginals. We now determine the prevalence of these shared symmetries, in order to quantify the risk of non-identifiability. Below, we prove that the SDE parameters are generically identifiable, even if $X_0$ is weakly ARI. In particular, we show that the curve of covariances $(\cov(X_t): t \ge 0)$ uniquely identifies the true SDE parameters $(A,H)$ for almost every parameter configuration.}

\rv{\begin{theorem}[Generic identifiability for linear additive noise SDEs]
\label{thm:stronger}
Let $C_0 = \cov(X_0)$ be the covariance of an initial distribution $X_0 \sim p_0$, and let
$$\Theta = \{(A,H): A \in \R^{d \times d}, G \in \R^{d \times m}, H = GG^\top \}$$ denote the set of drift-diffusion pairs under a linear additive noise model \eqref{eq: linear_additive_noise_SDE}. 
It follows that almost every parameter configuration $(A,H) \in \Theta$ induces a unique set of covariances $(C_{t}:t \ge 0)_{A,H}$. Thus, the SDE parameters $(A,H)$ are generically identifiable under continuous observation.
\end{theorem}}

\begin{proof}
\rv{We wish to prove that the map $(A,H) \to (C_{t}:t \ge 0)_{A,H}$ is injective on $\Theta \setminus \mathcal{N}$, where $\mathcal{N}$ is a set of Lebesgue measure $0$. First, we note that if two distinct parameter configurations trace out the same curve of covariances, i.e. $(C_{t}:t \ge 0)_{A,H} = (C_{t}:t \ge 0)_{\tilde{A},\tilde{H}}$, then by Lemma \ref{lemma:non_iden_imply_diff_drift}, their residual parameters $\bar{A}=A-\tilde{A}$ and $\bar{H} = H - \tilde{H}$ satisfy the Lyapunov equation
\begin{align}
    \bar{A}C_t + C_t \bar{A}^\top = -\bar{H}
\end{align}
Thus, if we consider a collection of $D$ nonzero times $\{t_i\}_{i=1}^D$, and define $\Delta C_i= C_{t_i} - C_{0} = \cov(X_{t_i}) - \cov(X_0)$, then we can subtract the above equation at times $t=t_i$ and $t=0$ to obtain
\begin{align}
    \bar{A}\Delta C_i + \Delta C_i \bar{A}^\top = 0 \qquad \forall i =1, \ldots ,D. 
    \label{eq:cov_diff_2}
\end{align}
We wish to show that if $(C_{t}:t \ge 0)_{A,H} = (C_{t}:t \ge 0)_{\tilde{A},\tilde{H}}$, then $\bar{A}$ is forced to be $0$ (unless $(A,H)$ belongs to the zero measure set $\mathcal{N}$). To prove this, it suffices to show that the covariance differences $\{\Delta C_i\}_{i=1}^D$ span $Sym_d$, since it would then follow that $\bar{A}=0$ is the only solution to \eqref{eq:cov_diff_2}. }

\rv{Indeed, assume that the covariance differences span $Sym_d$. Then, since $\Id \in Sym_d$, there exists a linear combination of Equation \ref{eq:cov_diff_2} such that $\bar{A} + \bar{A}^\top =0$ (i.e., $\bar{A}$ is skew-symmetric). Thus, we can rewrite the equation as $\bar{A}\Delta C_i=\Delta C_i \bar{A}$. Then, since by hypothesis, every diagonal matrix is in span of the covariance differences, it follows that $\bar{A}$ commutes with every diagonal matrix \citep{burgisser2011geometric}. From this, we deduce that $\bar{A}$ is also diagonal, and therefore symmetric. However, $\bar{A}=0$ is the only matrix that is both symmetric and skew-symmetric.}

\rv{We now show that the covariance differences spans $Sym_d$. For this, note that $Sym_d$ is a $\frac{d(d+1)}{2}$ dimensional real subspace, since symmetric matrices are determined by the upper triangular block (with main diagonal). We therefore consider $D=\frac{d(d+1)}{2}$ covariance differences, and vectorize each of these symmetric matrices to create the $D \times D$ square matrix
\begin{align}
M(A,H) = \begin{bmatrix} \text{vec}(\Delta C_1)^\top \\ \vdots \\ \text{vec}(\Delta C_T)^\top
\end{bmatrix}.
\end{align}
Indeed, we note that without loss of generality, the covariance differences with respect to $C_0$ are a function of $A,H$ (we could alternatively use $\tilde{A}, \tilde{H}$). To conclude the proof, we show that $\det(M(A,H)) \neq 0$ for Lebesgue a.e. $(A,H) \in \R^{d \times d} \times Sym_d$. To do so, we use the fact that a real analytic function is either identically zero or nonzero Lebesgue almost everywhere \citep[Proposition 0]{mityagin2015zero}, and apply this on $\det(M(A,H))$, which is analytic since it is a polynomial function of the matrix entries, which are difference of analytic covariances $C_t = e^{At}C_0 e^{A^\top t} + \int_{0}^t e^{As}He^{A^\top s}\,ds$. Thus, by \citep[Proposition 0]{mityagin2015zero}, we just need to show that given an initial covariance $C_0$, we can produce a single example of parameters $(A,H)$ such that $\det(M(A,H)) \neq 0$. We show this in Proposition~\ref{prop:exhibit} of the Appendix.}

\rv{Thus, we have shown that for almost every $(A,H) \in \Theta$, if another parameter configuration produces the same curve of covariances, then this in fact forces $\tilde{A}=A$. Finally, we note that if the drifts are equal, then the diffusions $H$ must also be equal. Otherwise, the curve of covariances will deviate (see equations \eqref{eq:deriv_cov_1} and \eqref{eq:deriv_cov_2} in the proof of Lemma \ref{lemma:non_iden_imply_diff_drift}).}
\end{proof}

% Interestingly, while \cref{thm: identifiability} shows that the covariance of $X_0$ does not affect the existence of non-identifiable continuations (given that the symmetry invariance \eqref{eq:new_ARI} holds for all other cumulant orders), \cref{thm:stronger} shows that the covariance evolution alone is enough to show that the SDE parameters are generically identifiable. 

\rv{To provide some intuition, we note that the converse direction of \cref{thm: identifiability} was shown by adversarially choosing SDE parameters that leave the covariance stationary, thus enabling non-identifiability by fixing the geometry across all marginals. However, even if the initial distribution $X_0$ is weakly ARI, \cref{thm:stronger} shows that almost every parameter configuration will produce an evolution that breaks the symmetries required for non-identifiability.}

\rv{We conclude with a few remarks about the practical implications of our identifiability results and connections to related theory. First, we note the limitation that our identifiability results assume continuous observation of the marginals $X_t \sim p_t$. We note that the assumption of continuous observation cannot generally be relaxed, due to the aliasing problem. For example, let $p_0 = \textrm{Unif}[-1,1]^2$ be a uniform square, which is clearly not auto-rotationally invariant. If we consider a rotational action from the skew-symmetric drift $A=\begin{bmatrix} 0 & \gamma \\ -\gamma & 0\end{bmatrix}$ and $H=0$, and only sample the marginals at discrete times $\{0, \Delta t, \cdots , k\Delta t, \cdots \}$, then it would be impossible to determine whether the square has rotated by $\theta$ in between two times, or by  $2\pi k + \theta$ for any $k \in \N$. This is precisely the Nyquist aliasing phenomenon in signal processing: discrete sampling at rate $\frac{1}{\Delta t}$ cannot resolve frequencies above $\frac{\pi}{\Delta t}$. However, we note that aliasing counterexamples requires a strict synchronicity, and moreover that generic SDE parameter configurations will also alter the covariance geometry, as we showed in \cref{thm:stronger}. We therefore expect that identifiability results can be proven in the finite samples setting under certain assumptions, but note that new techniques must be used to prove them, since we cannot directly leverage the differential structure of the evolution equations.}

\rv{Second, by proving that non-identifiability requires auto-rotational invariance, which defines a shared invariance structure on the cumulants \eqref{eq:new_ARI}, our results show that higher order cumulants provide crucial information for identifiability. This bears interesting parallels to identifiability theory in causal discovery for linear non-Gaussian acyclic models (LiNGAM) on static data, where the presence of higher order cumulants from non-Gaussian noise enables the causal graph to be identified in the absence of latent confounders \citep{shimizu2006linear}. In each of these settings, we seek identifiability from data composed of linear effects with additive independent noise. In the static case, we are unable to resolve cycles in the graph, and imposing non-Gaussianity in the independent noise provides sufficient cumulant information to identify the DAG. Meanwhile, for linear SDEs, the temporal component allows us to identify cycles in the graph. Indeed, as we will see in the next section, the drift matrix ($A$) trivially determines all directed edges in the endogenous set. However, the noise is necessarily Gaussian, since the diffusion is driven by Brownian motion. Thus, identifiability in both settings is related to higher order cumulants. In the static setting, the source of the higher order cumulants comes from non-Gaussian noise, whereas in the SDE setting, the Brownian noise is Gaussian, and thus, the higher order cumulants come from the initial distribution.}

\section{Causal graph identification from SDE parameters}
\label{sec:causal_graph}

% Given an SDE \eqref{eq: general_SDE} with known drift $b(X_t)$ and diffusion $\sigma(X_t)$, its causal graph can be identified from its dynamic structural causal model \citep{boeken2024dynamic}.
\rv{In this section, we establish theory for recovering the causal graph of an additive noise SDE from its identified drift and (observational) diffusion parameters. Thus, given our identifiability results from the previous section, these insights demonstrate that under mild conditions, we can theoretically identify the SDE and the causal graph of a system that is governed by a linear additive noise SDE, directly from its temporal marginals $(p_t)_{t \ge 0}$.}

% It has already been shown that knowing drift and diffusion provides the system's post-intervention distributions \citep{hansen2014causal, wang2024generator}, and recent work 
\rv{
First, we review the definition of a dynamic structural causal model (DSCM), which we adapt from \citet[Definition 1]{boeken2024dynamic}.}

\rv{\begin{definition}[DSCM]
Let $V$ be a set of $d$ endogenous variables $\{ X^{(1)}, \dots, X^{(d)}\}$ and let $W$ be a set of exogenous variables, $\{ X^{(d+1)},\dots, X^{(d+m)}\}$, e.g. noise sources that influence the endogenous variables. Then, a DSCM is defined for each of the $d$ endogenous variables, such that 
\begin{align}
    X_t^{(j)} &= X_0^{(j)} + \int_0^t a_j(s, X^{\alpha(j)}_s) \, \mathrm{d}h_j(s, X^{\beta(j)}_s),
    \label{eq: explicit_SEM_SDE}
\end{align}
where $\alpha(j)\subset V \cup W$ represents the variables used as arguments for the integrand $a_j(t, X_t)$, and $\beta(j) \subset V \cup W$ represents the variables used as arguments for the integrator $h_j(t, X_t)$. For a $d$-dimensional additive noise SDE \eqref{eq: additive_noise_SDE} driven by an $m$-dimensional Brownian motion, the exogenous set is simply $W=\{X^{(d+1)}, \dots, X^{(d+m)}\}=\{W^{(1)}, \dots, W^{(m)}\}$, and the DSCM is defined by
\begin{align}
    X_t^{(j)}&= X_0^{(j)} + \int_0^t b_j(X^{\alpha(j)}_s) \, \mathrm{d}s +  \sum_{k=1}^{m} G_{j, k}W_t^{(k)}.
    \label{eq: DSCM_additive_SDE}
\end{align}
In particular, $\alpha(j)\subset V$ depends only on the endogenous variables, and $\beta(j)$ corresponds to the components $k$ of the Brownian motion (index $k+d$ in the expanded vector $X$), such that $G_{j,k} \neq 0$.
% $X_t = (X_t^{(1)}, \dots, X_t^{(d)}, X_t^{(d+1)},\dots, X_t^{(d+r)})$ evolves according to
% \begin{equation}
%     \mathrm{d}X_t = a(t, X_t) \: \mathrm{d}h(t, X_t),
%     \label{eq: general_block_SDE}
% \end{equation}
% for some integrand $a$ and some integrator $h$, 
\end{definition}}
\rv{We can then define a causal graph $\mathcal{G}$ from the DSCM. We adapt \citep[Definition 2]{boeken2024dynamic}, with the only modification being that we permit self-edges $j \to j$.}
\rv{\begin{definition}[Causal graph from DSCM]
Given a DSCM \eqref{eq: explicit_SEM_SDE}, we define a causal graph $\mathcal{G}$ with vertices $V \cup W$, and edges $E = \{i \to j: j \in V, i \in \alpha(j) \cup \beta(j) \}$.
% In other words, we only consider edges that point towards variables in the endogenous set, and we include the edge $i \to j$ if $X^{(i)}$ influences the evolution of endogenous variable $X^{(j)}$ through the integrand $a_j(t, X_t)$ or integrator $h_j(t, X_t)$.
\end{definition}}
\rv{Intuitively, an edge $i \to j$ indicates that component $X_t^{(i)}$ directly influences the evolution of component $X_t^{(j)}$, and we only include edges that point towards variables in the endogenous set. Since the diffusion $G$ is state-independent for additive noise SDEs, this means that endogenous variables can only influence one another through the drift function $b$. However, the diffusion still determines incoming edges to endogenous variables. In particular, latent confounders are modeled by off-diagonal entries of the diffusion $G$, since multiple variables would be correlated due to shared driving noise (see \citep[Figure 1]{boeken2024dynamic}).} 

\rv{This causal framework is well-defined given access to the drift $b$ and diffusion $G$. However, due to the symmetry of Brownian motion, one can at best identify $H= GG^\top$ from observational data. Given the parameters $b$ and $H= GG^\top$ from an additive noise SDE, we show that one can recover all endogenous edges from the drift $b$ and also use non-zero entries in the observational diffusion $H= GG^\top$ to determine pairwise latent confounders.}

\begin{lemma}
% \todo{converse direction of (b) requires some assumption of faithfulness or probability $1$ (e.g. $H=I=GG^\top$ with $G \neq I$). Argue that if there is a latent confounder, then with probability $1$, $H$ is anisotropic.}
Let $X_t$ evolve according to a $d$-dimensional additive noise SDE: $\mathrm{d}X_t = b(X_t) \: \mathrm{d}t+ G\mathrm{d}W_t$, and let $\mathcal{G}= (V \cup W, E)$ be its causal graph.
\begin{enumerate}[label=\alph*.]
\item  There exists a directed edge $i \to j$ in $\mathcal{G}$ if and only if $b_j(X_t)$ depends on $X_t^{(i)}$.
\item There exists a latent confounder $W^{(k)}$ over $(i,j)$ in $\mathcal{G}$, if $H_{i,j} = (GG^\top)_{i,j} \neq 0$. 
\end{enumerate}
% Thus, $\mathcal{G}$ is identified by $b(X_t)$ and $H= GG^\top$ if no more than two variables share a common noise source.
\label{prop: identify_causal_graph_SDE_add_noise}
\end{lemma}

\begin{proof}
The first claim follows immediately from the definition of the dynamic  structural causal model \eqref{eq: DSCM_additive_SDE}. 
% Indeed, since additive noise diffusion is independent of $X_t$, then $i \to j$ if and only if $b_j(X_t)$ depends on $X_t^{(i)}$. 
Then, to prove the second claim, we note that if $H_{i,j} = G_i \cdot G_j  \neq 0$, then it follows that for some column $k \in [m]$ in the diffusion matrix $G$, we have that $G_{i,k}, G_{j,k} \neq 0$. From this, we conclude that $X^{(i)}$ and $X^{(j)}$ both share a dependence on the noise from the $k$th component of the Brownian motion, $W_t^{(k)}$.

% if and only if $G_{i_1, k}, \ldots G_{i_p, k} \neq 0$ for some column $k \in [m]$ in the diffusion $G$. Indeed, the existence of such a column implies that $X^{(i_1)}, \ldots, X^{(i_p)}$ each depend on $W_t^{(k)}$, and the other direction is given by the fact that $W_t$ is a multivariate Gaussian process, with independent components. If two variables share a noise source, this condition is equivalent to $H_{i,j} = G_i \cdot G_j  \neq 0$. We therefore conclude that there is a multidirected edge containing $(i,j)$  in $\mathcal{G}$ if  $H_{i,j} \neq 0$. 

%  In the case of two variables, $X^{(i)}$ and $X^{(j)}$, sharing a noise source, this condition is equivalent to $H_{i,j} = G_i \cdot G_j  \neq 0$, since there would be some $k \in [m]$ such that $G_{i,k}, G_{j,k} \neq 0$. Thus, $X^{(i)}$ and $X^{(j)}$ would share noise ${\epsilon}_t^{(k)}$ from the source $W_t^{(k)}$.

% indicating a latent confounder causing variables $X^{(i_1)}, \ldots, X^{(i_p)}$, if variables $X^{(i_1)}, \ldots, X^{(i_p)}$ share a latent confounder.

% would ensure that variables $X^{(i)}$ and $X^{(j)}$ share a common noise source $W_t^{(k)}$.

% that if $H_{i, j} = \langle G_i, G_j \rangle \neq 0$,  Thus ,$X_t^{(j)}$ and $X_t^{(i)}$ share a common noise source ${\epsilon}_t^{(k)}$ in \eqref{eq: shared_noise}, and 
\end{proof}
% \begin{cor}
% Let $X_t$ evolve according to a $d$-dimensional linear additive noise SDE: $\mathrm{d}X_t = AX_t \: \mathrm{d}t+ G\mathrm{d}W_t$ \eqref{eq: additive_noise_SDE} driven by $m$-dimensional Brownian motion $W_t$, with independent driving noise.Then, the causal graph $\mathcal{G}$ over variables $X^{(1)}_t, \cdots, X^{(d)}_t$ is completely determined by the drift matrix $A$. In particular, there exists an edge $i \to j$ in $\mathcal{G}$ if and only if $A_{ji} \neq 0$.
% \label{corr: causal_graph_SDE_lin_add_noise}
% \end{cor}
% \begin{proof}
% This is a special case of the above theorem, where $b_j(X_t) = \sum_{k=1}^{d} A_{jk}X_t^{(k)}$. Thus, we include edge $i \to j$ if and only if $A_{ji} \neq 0$.
% \end{proof} 

\rv{Lemma \ref{prop: identify_causal_graph_SDE_add_noise} establishes a bridge between causal graph identification and SDE parameter identification, where we can only hope to recover the observational diffusion. Indeed, by combining Theorem \ref{thm:stronger} and Lemma \ref{prop: identify_causal_graph_SDE_add_noise}, we see that we can almost always identify all directed edges within the endogenous set (due to recovery of the drift $A$) from the marginals of a linear additive noise SDE. Furthermore, the nonzero entries from the identified diffusion matrix, $H_{i,j}$, $i \neq j$, always identify pairwise latent confounding from shared noise. We emphasize that even when the drift is assumed to be linear, additive noise SDEs can encode arbitrary causal structures between endogenous variables (ex. via nonzero entries in the drift $A$). This makes our identification theory particularly relevant for GRN inference, since our setting can model cyclic behaviour, such as feedback loops and repressilator dynamics, which are fundamentally excluded in common gradient-flow models
\citep{weinreb2018fundamental, hashimoto2016learning, lavenant2021towards, terpin2024learning, guan2025gradient}. We however note that not all noise relations can be learned from the observational diffusion, as we discuss in Examples \ref{ex: 3_bidirected_vs_multiedge} and \ref{ex: cancellation_for_pairwise_latents} in Appendix \ref{sec: SDEs_causality}.
% Notably, similarly to the case with static SEMs, 
% a zero entry in $H_{i,j}$ may not necessarily indicate the lack of a bidirrected edge as cancellations may occur, though an assumption analogous to faithfulness may remove such counterexamples \citep{}. 
We include additional details and discussion about the causal interpretation of SDEs in Appendix \ref{sec: SDEs_causality}.}

\section{Our parameter estimation method}
\label{sec:methods}
In this section, we introduce our method, Alternating Projection Parameter Estimation from $X_0$ (APPEX), which estimates a process' drift, diffusion, and causal graph from observed temporal marginals. Figure \ref{fig:APPEX_visualized} illustrates the general strategy of the APPEX algorithm. We focus on linear additive noise SDEs, since Theorem \ref{thm: identifiability} guarantees identifiability given that $X_0$ is not auto-rotationally invariant. However, our algorithm can be used to estimate any additive noise SDE in the form of Equation \ref{eq: additive_noise_SDE}. In this section, we give an intuitive explanation at the population level, using the process' true temporal marginals $p_i \sim X_{t_i}$. In Section \ref{sec:experiments}, we perform experiments using empirical marginals $\hat{p}_i$, with finite samples per marginal.

\begin{figure}
    \centering
\includegraphics[width=1\linewidth]{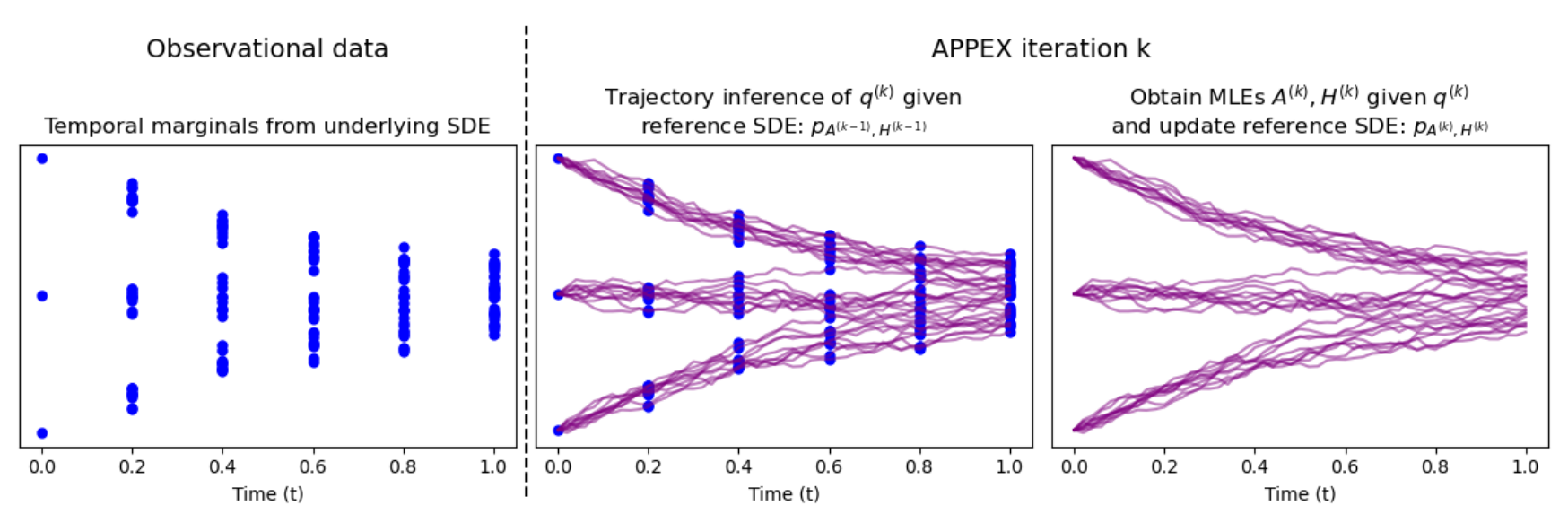}
    \caption{\rv{Visualization of our parameter estimation algorithm APPEX on a toy example}. Given observed temporal marginals from an underlying SDE (left), APPEX alternates between trajectory inference (middle) and MLE parameter estimation (right) in order to find the SDE parameters that best represent the temporal marginal observations.}
    \label{fig:APPEX_visualized}
\end{figure}

Let $p_0, ..., p_{N-1}$ denote a set of observed temporal marginals, and let $(A^{(0)}, H^{(0)})$ denote initial guesses for the drift and diffusion. The idea is to use an alternating optimization algorithm to obtain increasingly better estimates for the drift and diffusion matrices. Formally, we consider the spaces
% Our method is comparable to Sinkhorn-EM \citep{mena2020sinkhorn}, a generalization of the classical Expectation–maximization algorithm \citep{moon1996expectation}, with even stronger convergence properties. 
% Alternating projection is a classical method that has roots dating back at least to von Neumann, who proved convergence when the projections are onto closed intersecting subspaces \citep{von1949rings}. 
\begin{align}
    \mathcal{D} &=\{q: q_0 = p_0, ..., q_{N-1} = p_{N-1} \}
    \label{eq: marginal_constraints} \text{ and }\\
    \mathcal{A} &= \left\{ p : \exists\, (A, H) \in \mathbb{R}^{d \times d} \times S^d_{\succeq 0} \text{ s.t } p(x, t+\Delta t \mid y, t) = \mathcal{N}\Big(e^{A\Delta t}y,\, \int_0^{\Delta t} e^{A s}\,H\,e^{A^\top  s}\, ds\Big) \right\}.
    \label{eq: projection_spaces_for_method}
\end{align}
$\mathcal{D}$ is the set of laws on paths, which coincide with the $N$ temporal marginals at their respective times. $\mathcal{A}$ is the set of laws given by linear additive noise SDEs. To find the underlying linear additive noise SDE in the intersection $\mathcal{A} \cap \mathcal{D}$, our method alternates between information projections, $\argmin_{q \in \mathcal{D}}D_{KL}(q \| p)$ and moment projections, $\argmin_{p \in \mathcal{A}}D_{KL}(q \| p)$. The information projection step corresponds to trajectory inference, whose goal is to find the law on paths $q \in \mathcal{D}$, which best aligns with the reference measure $p$. The moment projection then corresponds to maximum likelihood parameter estimation, since it amounts to determining the law on paths $p \in \mathcal{A}$ over linear additive noise SDEs, which best aligns with the law of the estimated trajectories $q$. Indeed, in the infinite-data limit, MLE is equivalent to minimizing relative entropy, i.e. KL divergence \citep{akaike1998information, white1982maximum}. At the population level, an iteration entails the updates:
\begin{align}
q^{(k)} &= \argmin\limits_{q \in \mathcal{D}}D_{KL}(q \| p_{A^{(k-1)}, H^{(k-1)}})\label{eq: entropy_min_to_reference_sde}\\
A^{(k)} &= \argmin\limits_{A \in \R^{d \times d}}D_{KL}(q^{(k)} \| p_{A, H^{(k-1)}})\label{eq: entropy_min_drift}\\
H^{(k)} &= \argmin\limits_{H \in S^d_{\succeq 0}}D_{KL}(q^{(k)} \| p_{A^{(k)}, H}).
\label{eq: entropy_min_diffusion}
\end{align} 

At iteration $k$, we start by using the SDE parameters estimated from the previous iteration to define a reference SDE $p_{A^{(k-1)}, H^{(k-1)}} \in \mathcal{A}$. Using this reference SDE and the set of observed temporal marginals, we perform trajectory inference \eqref{eq: entropy_min_to_reference_sde}, to estimate the path distribution $q^{(k)} \in \mathcal{D}$ that minimizes relative entropy to the reference SDE, while satisfying the marginal constraints \eqref{eq: marginal_constraints} (see the middle plot of Figure \ref{fig:APPEX_visualized}). We then perform a two-step maximum likelihood estimation, in order to estimate the drift and diffusion from the estimated trajectories. This in turn updates the reference SDE $p_{A^{(k)}, H^{(k)}} \in \mathcal{A}$ for the next iteration. To perform APPEX on a general additive noise SDE \eqref{eq: additive_noise_SDE}, the spaces $\mathcal{A}$ and the drift maximum likelihood parameter estimation step \eqref{eq: entropy_min_drift} would be modified accordingly. We show in Corollary \ref{cor: general_MLE_additive_Noise} of the appendix that there is a closed form diffusion MLE for general additive noise SDEs, once the drift has been estimated.

% We first update the estimated linear drift in \eqref{eq: entropy_min_drift} by finding the MLE $A^{(k)} \in \R^d$, given that the diffusion is $H^{(k-1)}$. Finally, we update the diffusion estimate in \eqref{eq: entropy_min_diffusion}, by finding the MLE diffusion $H^{(k)} \in S^d_{\succeq 0}$, given that the drift matrix is $A^{(k)}$. 

Previous works implement similar iterative schemes \citep{zhang2024joint, shen2024multi, vargas2021solving}, which alternate between trajectory inference and MLE parameter estimation. In contrast to our method, each of these works assumes that the diffusion of the process is known, and hence invariant across iterations. One reason for this assumption is to ensure finite KL divergence between the estimated trajectories and estimated SDEs for all iterations \citep{vargas2021solving}. Given continuously observed marginals of a $d$-dimensional process, the KL divergence between two laws on paths $q, p$ is taken over the path space $\Omega = C([0,T], \R^d)$, such that
$$
D_{KL}(q\| p)= \int_{\Omega} \log\left(\frac{\mathrm{d}q}{\mathrm{d}p}(\omega) \right)\mathrm{d}q(\omega).
$$
Thus, $D_{KL}(q\| p)$ will only be finite between two laws on paths if we can define the Radon-Nikodym derivative $\frac{\mathrm{d}q}{\mathrm{d}p}$ over path space $\Omega$. By Girsanov's theorem, this is only ensured when both processes share the same diffusion \citep{shen2024multi, vargas2021solving}. However, if we only consider measurements from a finite number of observed marginals, the KL divergence of the discretized processes $q^N, p^N$ over the path space projected on the $N$ measurement times, $\Sigma_\mathcal{D}= C(\{t_i\}_{i=0}^{N-1}, \R^d)$, can be decomposed as follows, see \citet{benamou2019entropy}:
\begin{align}
\label{eq:discretized_KL}
D_{KL}(q^{N}\| p^{N})=\sum_{i=0}^{N-2}D_{KL}(q_{i, i+1}\| p_{i, i+1}) - \sum_{i=1}^{N-2}D_{KL}(q_i\| p_i),
\end{align}
where $p_{i, i+1}, q_{i, i+1}$ are the joint probability measures of $q, p$, restricted to times $t_i$ and $t_{i+1}$, and similarly $q_i,p_i$ are the marginals of $q,p$ at time $t_i$. We note that $D_{KL}(q_i \| p_i)\leq D_{KL}(q_{i, i+1}\| p_{i, i+1})$ holds by the data-processing inequality. Hence, $D_{KL}(q^{N}\| p^{N})$ will be finite as long as $D_{KL}(q_{i, i+1}\| p_{i, i+1}) = \int_{\R^d \times \R^d} \log\left(\frac{\mathrm{d}q_{i, i+1}}{\mathrm{d}p_{i, i+1}}(x) \right) q_{i, i+1}(x) \, \mathrm{d}x < \infty$ for each $i=0, \ldots, N-1$.  In particular, if $p$ and $q$ are the laws of two different drift-diffusion SDEs with distinct non-degenerate diffusions, then the KL divergence  $D_{KL}(q^N\| p^N)$ over the discretized path space $\Sigma_\mathcal{D}$ will be finite. Since we consider a setting with a finite number of temporal marginal observations in this work, we may consider laws on paths discretized over these observations. This allows us to both consider diffusions which are not known in advance, and to improve our diffusion estimates with respect to KL divergence after each iteration.
% Consequently, this step will only modify the drift of the SDE, in order to match the marginal constraints.   $q^{(k)}$ is therefore defined by an SDE of the form $\mathrm{d}X_t = b(X_t) \: \mathrm{d}t+ G_{k-1}\mathrm{d}W_t$, where $G_{k-1}G_{k-1}^\top = H_{k-1}$ and $b(X_t)$ need not be linear. 

% Equation \eqref{eq: entropy_min_to_reference_sde} can be viewed as the projection  $q^{(k)} = \text{Proj}_{\mathcal{D}}^{KL}(p_{A_{k-1}, H_{k-1}})$, and equations \eqref{eq: entropy_min_drift} and \eqref{eq: entropy_min_diffusion} can jointly be viewed as the projection $p_{A_{k}, H_{k}} = \text{Proj}_{\mathcal{\mathcal{A}}}^{KL}(q^{(k)})$.
% This can also equivalently be framed performing maximum likelihood estimation\citep{shen2024multi}\todo{there probably are better references out there for this}. 

% A_k &= \argmin\limits_{A \in \R^{d \times d}}D_{KL}(q^{(k)} \| p_{A, H_{k-1}})\label{eq: entropy_min_drift}\\
% H_k &= \argmin\limits_{H \in S^d_{\succeq 0}}D_{KL}(q^{(k)} \| p_{A^{(k)}, H}).

% To implement \eqref{eq: entropy_min_to_reference_sde}, we introduce a multi-dimensional Sinkhorn algorithm, which we prove in Theorem \todo{} to minimize relative entropy to a reference SDE with additive noise $G\mathrm{d}W_t$. The law on paths estimated by multi-dimensional Sinkhorn can then be used to sample estimated trajectories. To implement parameter estimation, we use maximum likelihood estimators for drift and diffusion from temporal marginals. Indeed, it is well known that the maximum likelihood solution will minimize relative entropy to the true process \citep{shen2024multi}.
\subsection{Trajectory inference via anisotropic entropy-regularized optimal transport}
To begin each iteration of APPEX, we infer trajectories given the set of temporal marginals $\{p_i\}_{i=0}^{N-1}$ and a reference SDE $p_{A^{(k-1), H^{k-1)}}} \in \mathcal{A}$, by solving an associated Schr\"odinger Bridge problem. Under certain conditions, the SB problem \eqref{eq: schrodinger_bridge} can be solved using the variational entropic optimal transport (EOT) formulation
\begin{align}
\pi^* = \argmin_{\pi \in \Pi(\mu,\nu)} \int c(x,y) \, \mathrm{d}\pi(x,y) + \epsilon ^2D_{KL}(\pi \| \mu \otimes \nu).
\label{eq: entropic_OT_problem}
\end{align}
%= \argmin_{\pi \in \Pi(\mu,\nu)} \int \frac{\|y-x\|^2}{2\epsilon^2 } \, \mathrm{d}\pi(x,y) + D_{KL}(\pi \| \mu \otimes \nu).
Intuitively, $\pi^*$ transports the probability measure $\mu$ to $\nu$ while minimizing a total cost, based on the function $c(x,y)$ and the entropic regularization $\epsilon^2 > 0$. Objectives  \eqref{eq: schrodinger_bridge} and \eqref{eq: entropic_OT_problem} are equivalent \citep{peyre2019computational, zhang2024joint}, under the reference measure $d\mathcal{K}(x,y)=K(x,y)d\mu(x)d\nu(y)$ with density $K(x,y) \propto e^{\frac{c(x,y)}{\epsilon^2}}$. The convention in the literature is to set $c(x,y)= \frac{\|y-x\|^2}{2}$, to define an isotropic Gaussian law $\mathcal{K}$ \citep{janati2020entropic, lavenant2021towards,  chizat2022trajectory, zhang2024joint}. The EOT problem \eqref{eq: entropic_OT_problem} can then be solved via Sinkhorn's algorithm. 
% Further details are given in Section \ref{sec:EOT} of the appendix.

% Indeed, the problem can be approximated via \eqref{eq: entropic_OT_problem} by adjusting the first marginal $\mu \to e^{Adt}\mu$, and considering a pure diffusion reference SDE $d\tilde{X}_t = \sigma \mathrm{d}W_t$ controlling the evolution of $\tilde{X}_0 = e^{Adt}\mu$ to $\tilde{X}_{dt} = \nu$.
% , since the transition kernel would be $\tilde{K}(\tilde{x},y)= e^{-\frac{\|y - \tilde{x}\|^2}{2 \sigma^2 dt}}= K(x,y) = e^{-\frac{\|y - e^{A dt} x\|^2}{2 \sigma^2 dt}}$.

However, the standard EOT problem \eqref{eq: entropic_OT_problem} only considers a scalar regularization parameter $\epsilon^2$, which means that it can only directly model reference SDEs with isotropic diffusion (see Remark \ref{rmk: EOT_for_traj_inf} in the appendix).
% To the best of our knowledge, all entropic optimal transport code packages, including the Python Optimal Transport library \url{https://pythonot.github.io/}, the R package T4transport \url{https://cran.r-project.org/web/packages/T4transport/T4transport.pdf}, and the Julia package OptimalTransport \url{https://juliapackages.com/p/optimaltransport}, are restricted to this setting, since Sinkhorn is directly called using $\epsilon^2>0$ to parameterize entropic regularization.
To generalize trajectory inference for SDEs with anisotropic diffusion, we formalize ``anisotropic entropy-regularized optimal transport" (AEOT), by considering transition kernels $K$ with custom mean and covariance, e.g. parameterizing $K$ with $A$ and $H=GG^\top$ rather than the scalar $\epsilon^2>0$. Indeed, we can consider more general Gaussian transition kernels, $K_{\theta}(x,y) = \exp({\frac{(y-\mu(x, \theta))^\top \Sigma(\theta)^{-1}(y-\mu(x, \theta))}{2}})$, in order to model a transition $y|x \sim \mathcal{N}(\mu(x, \theta), \Sigma(\theta))$, whose covariance $\Sigma(\theta)$ is possibly anisotropic. Under this formulation, the cost is given by the inner product $c_\theta(x,y) =\frac{(y-\mu(x, \theta))^\top \Sigma(\theta)^{-1}(y-\mu(x, \theta))}{2}$. Trajectory inference would amount to solving the AEOT problem:
\begin{align}
\pi^* = \argmin_{\pi \in \Pi(\mu,\nu)} \int \frac{(y-\mu(x, \theta))^\top \Sigma(\theta)^{-1}(y-\mu(x, \theta))}{2} \, \mathrm{d}\pi(x,y) + D_{KL}(\pi \| \mu \otimes \nu),
\label{eq: generalized_entropic_OT_problem}
\end{align}
where the entropic regularization is captured in the matrix $\Sigma(\theta)$ and function $\mu(x, \theta)$. The AEOT problem over a pair of marginals \eqref{eq: generalized_entropic_OT_problem} can be solved via Sinkhorn's algorithm with inputs $\mu, \nu, K_\theta$. Proposition \ref{prop: md_Sinkhorn_optimal} in the Appendix then shows that the solution to the multi-marginal trajectory inference step \eqref{eq: entropy_min_to_reference_sde} is the joint distribution given by the Markov concatenation of couplings (see Definition \ref{def:Markov_concatenation_couplings} in the Appendix)
\begin{align*}
    \pi^* = \pi_{0,1} \circ \ldots \circ \pi_{N-2,N-1},
\end{align*}
where $\pi_{i,i+1}$ is the AEOT solution \eqref{eq: generalized_entropic_OT_problem} with marginals $\mu = p_i, \nu = p_{i+1}$ and a transition kernel determined by the reference SDE.

% \begin{align}
%     K_{A,H}^{i}(x,y) \propto \exp(-\frac12 (y-e^{A (t_{i+1}-t_{i})}x)^\top(\Sigma_i)^{-1}(y-e^{A(t_{i+1}-t_{i})}x)
%     \label{eq: gibbs_kernel}
% \end{align}
% such that $\Sigma_{i} = \int_{t_i}^{t_{i+1}} e^{A(t_{i+1}-s)} H e^{A^\top(t_{i+1}-s)} \, \mathrm{d}s$. The reference transition $X_{t_{i+1}}|X_{t_i} \sim \mathcal{N}(e^{A(t_{i+1}-t_{i})}X_t, \Sigma_{t})$ can be estimated via the first order approximation $\mathcal{N}(X_t + AX_t(t_{i+1}-t_{i}), H(t_{i+1}-t_{i}))$.

\begin{remark}[Application to empirical marginals]
    Proposition \ref{prop: md_Sinkhorn_optimal} is formalized for exact temporal marginals $p_i$. However, in practice, we observe empirical measures $\hat{p}_i$, which converge in distribution to $p_i$, in the limit of infinite data. It is for these empirical measures $\hat{p}_i$ which we actually compute the estimated couplings $\hat{\pi}_{i,i+1}$ in practice. This corresponds to numerically estimating the AEOT solution
\begin{align}
\hat{\pi}_{i,i+1} = \argmin_{\pi \in \Pi(\hat{p}_i,\hat{p}_{i+1})} \int \frac{(y-\mu(x, \theta))^\top \Sigma(\theta)^{-1}(y-\mu(x, \theta))}{2} \, \mathrm{d}\pi(x,y) + D_{KL}(\pi \| \hat{p}_i \otimes \hat{p}_{i+1}).
\label{eq: generalized_entropic_OT_problem_empirical}
\end{align}
By the main theorem in \citet{ghosal2022stability}, it holds that $\hat{\pi}_{i,i+1}$ converges in distribution almost surely to the minimizer of (\ref{eq: generalized_entropic_OT_problem}) with $\mu=p_i$ and $\nu=p_{i+1}$ as the number of samples goes to infinity for each time $i$ and $i+1$. Combining this with the previous proposition, we see that estimating (\ref{eq: generalized_entropic_OT_problem_empirical}) for each pair of times $i$ and $i+1$ is asymptotically equivalent to the KL minimization step in (\ref{eq: entropy_min_to_reference_sde}).

We also note that \eqref{eq: generalized_entropic_OT_problem_empirical} is practically solved using Sinkhorn's algorithm with empirical marginals $\hat{p_i}, \hat{p}_{i+1}$ and the discretized transition kernel $K_\theta$ of the reference SDE. For example, if we have $M$ samples per empirical marginal, then $\hat{p}_i \sim \text{Unif}(x_{t_i}^{(j)}: j=1, \ldots M)$ and $\hat{p}_{i+1}\sim \text{Unif}(x_{t_{i+1}}^{(j)}: j=1, \ldots M)$ would both be discrete uniform distributions over their samples, and $K_\theta \in \R^{M \times M}$ would be a square matrix, such that entry $K_{\theta_{j,k}}$ is obtained by applying the given transition kernel on the data points $x_{t_i}^{(j)}$ and $x_{t_{i+1}}^{(k)}$.
\end{remark}

% \begin{cor}
% Let $X_t$ be a linear additive noise SDE \eqref{eq: linear_additive_noise_SDE} observed at times $t_1, \ldots, t_N$. Then, constrained to the observed marginals $x_{t_1},\ldots, x_{t_N}$, the law which minimize relative entropy with respect to the underlying SDE can be obtained by 
% \end{cor}
% \begin{proof}
% Markov property \todo{}
% \end{proof}

\subsection{Parameter estimation via MLE}

To optimize objectives \eqref{eq: entropy_min_drift} and \eqref{eq: entropy_min_diffusion} for each iteration of APPEX, we require maximum likelihood estimators for the SDE parameters, given multiple observed trajectories from $[0,T]$. In the context of iteration $k$ of APPEX, these are the trajectories sampled from the law on paths $q^{(k)}$ obtained from the trajectory inference step. We derive closed-form maximum likelihood estimators for the linear additive noise SDE from  Equation \ref{eq: linear_additive_noise_SDE}, given multiple trajectories in Proposition \ref{prop: MLE_SDE_params}. Closed form drift MLE solutions are generally unavailable for additive noise SDEs, however we derive a closed form diffusion estimate (which depends on the drift) for general additive noise SDEs in Corollary \ref{cor: general_MLE_additive_Noise} of the appendix.

\begin{prop}[MLE estimators for drift and diffusion of from multiple trajectories] 
Given $M$ trajectories over $N$ equally spaced times: $\{ X_{i\Delta t}^{(j)} : i \in 0, ..., N-1, \text{ } j \in 0, ..., M-1\}$ sampled from the linear additive noise SDE \eqref{eq: linear_additive_noise_SDE}, the maximum likelihood solution for linear drift is approximated by 
\begin{align}
\hat{A}&= \frac{1}{\Delta t}\left(\sum_{i=0}^{N-2} \sum_{j=0}^{M-1}(\Delta X_{i}^{(j)})  X_{i}^{{(j)}^\top}\right)\left(\sum_{i=0}^{N-2} \sum_{j=0}^{M-1} X_{i}^{(j)}X_{i}^{{(j)}^\top}\right)^{-1}
\\
&\overset{\substack{M \to \infty \\[2pt]}}{\longrightarrow}
 \frac{1}{\Delta t}\left(\sum_{i=0}^{N-2}   \E_{p_{i,i+1}}[(\Delta X_{i})X_{i}^\top]\right)\left(\sum_{i=0}^{N-1}   \E_{p_i}[X_{i}X_{i}^\top]\right)^{-1},
\label{eq: mle_drift}
\end{align}
where $p_{i,i+_1}$ is the joint measure over $X_{t_i}$ and $X_{t_{i+1}}$. Similarly, the maximum likelihood solution for diffusion is approximated by
\begin{align}
\hat{H}&= \frac{1}{MT}\sum_{i=0}^{N-2}\sum_{j=0}^{M-1} \left( (\Delta X_i^{(j)} - AX_{{i}}^{(j)}\Delta t)(\Delta X_i^{(j)} - AX_{{i}}^{(j)}\Delta t)^\top \right)\\
&\overset{\substack{M \to \infty \\[2pt]}}{\longrightarrow}
\frac{1}{T}\sum_{i=0}^{N-2}   \E_{p_{i,i+1}}[\left(\Delta X_{i}  -AX_i\Delta t\right) \left(\Delta X_{i}  -AX_i\Delta t\right) ^\top ]
\label{eq: mle_diffusion}
\end{align}
\label{prop: MLE_SDE_params}
\end{prop}
\begin{proof}
See Section \ref{sec:mle} of the Appendix. We note that estimators $\hat{A}$ and $\hat{H}$ were derived using the discretized transition kernel, $X_{i+1}| X_i \sim \mathcal{N}(X_i + AX_i\Delta t, H\Delta t)$. We derive the maximum likelihood estimators with the exact transition kernel $X_{i+1}| X_i \sim \mathcal{N}(e^{A\Delta t}X_i, H\Delta t)$ in the one dimensional case in Section \ref{sec:mle}.
\end{proof}

We note that the MLE drift estimator for $A$ does not depend on the diffusion $H$, but the MLE estimator for $H$ depends on $A$. We therefore estimate drift first in each iteration of APPEX.

\subsection{The APPEX algorithm}
We summarize APPEX's implementation in Algorithm \ref{alg: iterative_SDE_param_est}. In each iteration, trajectory inference \eqref{eq: entropy_min_to_reference_sde} is performed by solving the AEOT problem \eqref{eq: generalized_entropic_OT_problem}, and the drift and diffusion estimates are updated via MLE. By default, we use Sinkhorn's algorithm to solve AEOT and we use closed form MLE solutions when available.
% we have derived approximate MLE solutions for linear drift  and additive diffusion for the parameter estimation steps \eqref{eq: entropy_min_drift}-\eqref{eq: entropy_min_diffusion}.
\begin{algorithm}
\caption{Parameter estimation for an additive noise SDE from temporal marginals with APPEX}
\begin{algorithmic}[1]
\State \textbf{Input:} Observed marginals $\hat{p}_i$, $i = 0, \ldots, N-1$, number of iterations $K$, $\Delta t$
\State \textbf{Result:} Estimated drift function $\hat{b}$ and additive noise $\hat{H}$
\State \textbf{Initialize:} $\hat{b} \gets 0$, $H \gets \sigma^2 \Id$, $k \gets 0$
\While{$k < K$}
    \For{$i = 1, \ldots, N-1$}
        \State $\hat{\pi}_{i, i+1} \gets \text{Anisotropic-Entropy-Regularized-Optimal-Transport}(\hat{b}, \hat{H}, \hat{p}_{i-1}, \hat{p}_i, \Delta t)$
    \EndFor
    \State $\text{Sample-Trajectories} \gets \hat{\pi}_{N-1, N} \circ \ldots \circ \hat{\pi}_{1, 2}(\hat{p}_0)$
    \State $\hat{b} \gets \text{MLEfit}(\text{Sample-Trajectories})$
    \State $\hat{H} \gets \text{MLEfit}(\text{Sample-Trajectories}, \hat{b})$
    \State $k \gets k + 1$
\EndWhile
\State $\mathcal{G} \gets$ \text{Estimate-Causal-Graph($\hat{b}, \hat{H}, \epsilon)$}
\end{algorithmic}
\label{alg: iterative_SDE_param_est}
\end{algorithm}

\begin{remark}[Application to causal discovery]
\rv{By Lemma \cref{prop: identify_causal_graph_SDE_add_noise}, we can estimate the causal graph from the SDE parameters learned by APPEX. For nonlinear drift $b(X_t)$, we would include the edge $i \to j$ if the $j$th component of the drift, $b_j$, is a function of $X^{(i)}$. This can for example be assessed by analyzing the Jacobian matrix of $b$. For linear drift, this simplifies to analyzing each entry $A_{j,i}$ and choosing a threshold to determine the presence of directed edges. For edges related to latent confounders (from shared diffusion), one can choose a threshold for $|H_{i,j}|$ to determine whether there is an unobserved confounder causing $X^{(i)}$ and $X^{(j)}$. While thresholding is one of the simplest methods to extract a causal graph from learned SDE parameters, we note that many other methods can be used for this task, and thus leave the implementation as general as possible.}
\end{remark}
Because the trajectory inference and parameter estimation subprocedures are optimal with respect to minimizing KL divergence, we can show that APPEX's estimates reduce relative entropy to the true solution.
\begin{lemma}
\label{lemma: APPEX_decrease_KL}
Suppose that $A$ and $H$ are the true drift and diffusion parameters of a linear additive noise SDE with temporal marginals $p_0, ..., p_{N-1}$. \rv{Let $q^{(k)} \in \mathcal{D}$ be the law of the estimated trajectories at iteration $k$ of APPEX, and $p_{A^{k}, H^{k}} \in \mathcal{A}$ be the law of the estimated linear additive noise SDE, such that $A^{0} \in \R^{d \times d}$ and $H^{0} \in S^d_{\succeq 0}$.}Then, the relative entropy between the estimated law of the trajectories and the law of the estimated linear additive noise SDE is decreasing with each iteration \citep{shen2024learning}:
$$D_{KL}(q^{(k+1)} \| p_{A^{k+1}, H^{k+1} }  ) \le  D_{KL}(q^{(k)} \| p_{A^{k}, H^{k} })  \ \ \forall k \ge 0. $$ 
Furthermore, if $p_0$ is not auto-rotationally invariant, then as the number of observed marginals $N \to \infty$, the relative entropy between the estimated law of the trajectories and the law of the estimated SDE is uniquely minimized by the underlying SDE $p_{A,H}$:
$$\inf\limits_{q \in \mathcal{D}, p \in \mathcal{A}}D_{KL}(q\|p) = 0 \iff q=p = p_{A, H}$$
\end{lemma}

\begin{proof} Proposition \ref{prop: md_Sinkhorn_optimal} shows that $q^{(k+1)} \in \mathcal{D}$ minimizes relative entropy to $p_{A^{k}, H^{k}}$. Similarly, any MLE solution $p_{A^{(k+1)}, H^{(k+1)} }$ minimizes relative entropy to $q^{(k+1)}$. In particular, Proposition \ref{prop: MLE_SDE_params} approximates the MLE parameters $A^{k+1} \in \R^{d \times d}$ and $H^{(k+1)} \in S^d_{\succeq 0}$. By construction of the iterative scheme, we have that
$$
D_{KL}(q^{(k+1)} \| p_{A^{(k+1)}, H^{(k+1)} }  ) \le D_{KL}(q^{(k+1)} \| p_{A^{(k+1)}, H^{(k)} } ) \le  D_{KL}(q^{((k+1))} \| p_{A^{(k)}, H^{(k)} } ) \le  D_{KL}(q^{(k)} \| p_{A^{(k)}, H^{(k)} } ),
$$
which proves the first claim of the lemma. Then, by Theorem \ref{thm: identifiability}, if $X_0$ is not auto-rotationally invariant, it follows that $p_{A,H}$ is the law of the unique linear additive noise SDE, which obeys the marginal constraints $p_0, \ldots p_{N-1}$ as $N \to \infty$. Equivalently, we have  $\mathcal{A} \cap \mathcal{D} = \{ p_{A, H} \}$ and
$$\inf\limits_{q \in \mathcal{D}, p \in \mathcal{A}}D_{KL}(q\|p) = 0 \iff q=p = p_{A, H},$$ 
\end{proof}
Since $D_{KL}(q^{(k)} \| p_{A^{(k)}, H^{(k)} } )$ is non-increasing for successive iterations, this implies that the drift and diffusion estimates $A^{(k)}, H^{(k)}$ progressively fit the data with each iteration. In this sense, our method approaches the true solution, which is unique, if the data is appropriately initialized and has sufficient observations.

\section{Experiments}
\label{sec:experiments}
In a wide range of experiments on simulated data from linear additive noise SDEs, we find that APPEX accurately identifies the underlying SDE from its temporal marginals and that it outperforms the popular Waddington-OT method \citep{schiebinger2019optimal}. In Section \ref{sec: revisit_SDEs}, we assess the validity of Theorem \ref{thm: identifiability} by showing that APPEX can learn the drift and diffusion parameters of classical non-identifiable SDEs, if $X_0$ is not auto-rotationally invariant. We then demonstrate APPEX's effectiveness on randomly generated higher-dimensional SDEs in Section \ref{sub: higher_dim_exps}. Finally, in Section \ref{sec: causal_discovery}, we show that APPEX can use its parameter estimates to recover the underlying causal graph, thus supporting Proposition \ref{prop: identify_causal_graph_SDE_add_noise}. An empirical experiment for consistency is also provided in Appendix \ref{sec:consistency}, which shows that as the number of samples per time marginal increases, APPEX's drift and diffusion estimates converge to the true parameters.

% \subsection{Experimental setup}
% \label{sec:experimental_setup}
For each experiment, we simulate data for linear additive noise SDEs using Euler-Maruyama discretization with $dt_{EM}=0.01$, such that we generate $M=500$ trajectories, each observed across $100$ time steps. Each $d$-dimensional trajectory is initialized with $p_0\sim \text{Unif}\{x_i\}_{i=1}^{d}$ where $\{x_i\}_{i=1}^{d}$ are random linearly independent vectors with each entry having a magnitude uniformly sampled between $2$ and $10$. It follows from Proposition \ref{prop: discrete_auto_rotate} and Theorem \ref{thm: identifiability}, that this initial distribution ensures that the drift and diffusion are theoretically identifiable. To model the ``marginals-only'' setting, we subsample at the rate $dt=0.05$ to produce $N=20$ marginals with $M=500$ observations per time. 
% Thus, our default setting corresponds to the unkilled marginals setting. We note that the corresponding killed marginals setting would comprise one observation from $10,000$ different trajectories, spread across $20$ different times. Although both unkilled and killed settings provide only observed marginals, the killed setting has higher variance and therefore requires more data to achieve comparable estimates. We perform additional experiments for the more challenging killed marginals setting in the Appendix \todo{}. 

To perform parameter estimation, we use $30$ iterations of our APPEX algorithm, such that the first reference SDE is an isotropic Brownian motion $\mathrm{d}X_t = \sigma \mathrm{d}W_t$, even when the true process has anisotropic noise. To model the realistic setting where diffusion is not precisely known, we assume that the initial guess of the diffusion's trace is within an order of magnitude of the ground truth trace $\text{tr}(H)$. Specifically, we initialize $A^{(0)} = 0, H^{(0)}=\sigma^2 \Id \ s.t. \ \sigma^2 \sim \text{tr}(H)10^{\text{Unif}(-1,1)}$. Because the first reference SDE is $\sigma \mathrm{d}W_t$, we note that the first iteration of APPEX is equivalent to performing MLE on trajectories that were inferred using the Waddington-OT (WOT) method from \citet{schiebinger2019optimal}, based on standard entropy regularized OT \eqref{eq: entropic_OT_problem} with $\epsilon^2 = \sigma^2 \Delta t$. APPEX and WOT are distinguished by the fact that APPEX benefits from further iterations, and allows the reference SDE to have anisotropic diffusion and non-zero drift. For each iteration of APPEX, we use Sinkhorn's algorithm to solve the anisotropic entropy-regularized optimal transport problem for trajectory inference, and we use the closed form MLE solutions from Proposition \ref{prop: MLE_SDE_params}. Due to time complexity and numerical stability, we use linearized discretizations for the Gaussian transition kernels (unless $d=1$).

\subsection{Revisiting non-identifiable SDEs}
\label{sec: revisit_SDEs}
We consider three non-identifiable pairs of SDEs from the literature (summarized in Appendix \ref{sec:Noniden_examples}), but revise $X_0$ to be non-auto-rotationally invariant. By Theorem \ref{thm: identifiability}, parameter estimation is feasible under the revised initial distribution. $10$ replicates of each experiment were performed, such that the replicates featured data from different valid $p_0$ initializations and different initial diffusivities $\sigma^2$. We track the mean absolute error (MAE), plotted in Figure \ref{fig:mae_3_exps}, between the true drift/diffusion parameters and their estimates.
% and the correlations, plotted for example \eqref{eq: non_identifiability_wang} in Figure \eqref{fig:corr_exp3}, between the true drift and diffusion parameters and the estimated parameters. 
\begin{figure}[h!]
    \centering
    \begin{subfigure}{0.49\textwidth}
        \centering
        \includegraphics[width=\textwidth]{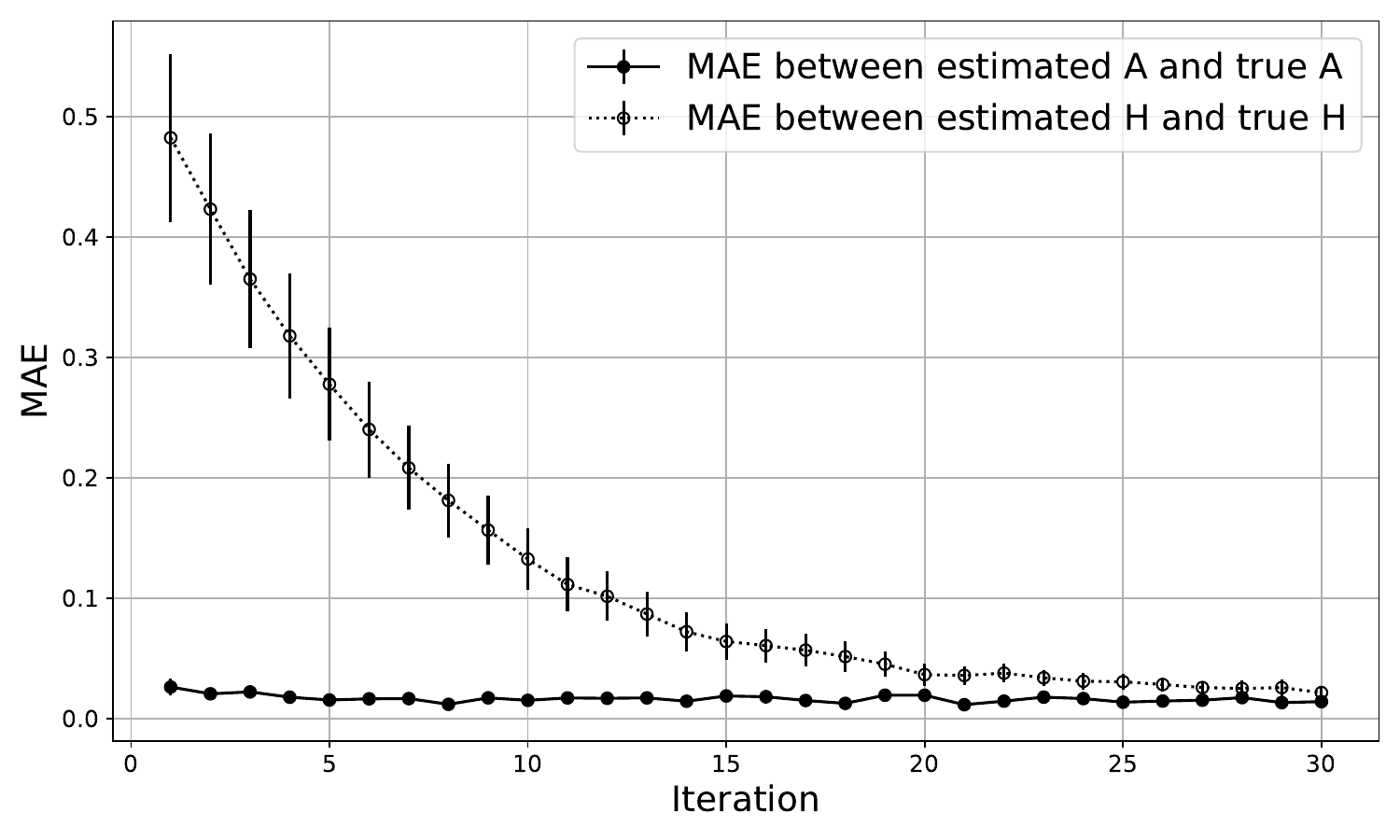}
        \caption{$\mathrm{d}X_t = -X_t \: \mathrm{d}t+ \mathrm{d}W_t$ from Example~\ref{eq: non_identifiability_same_stationary} }
    \end{subfigure}
    \begin{subfigure}{0.49\textwidth}
        \centering
        \includegraphics[width=\textwidth]{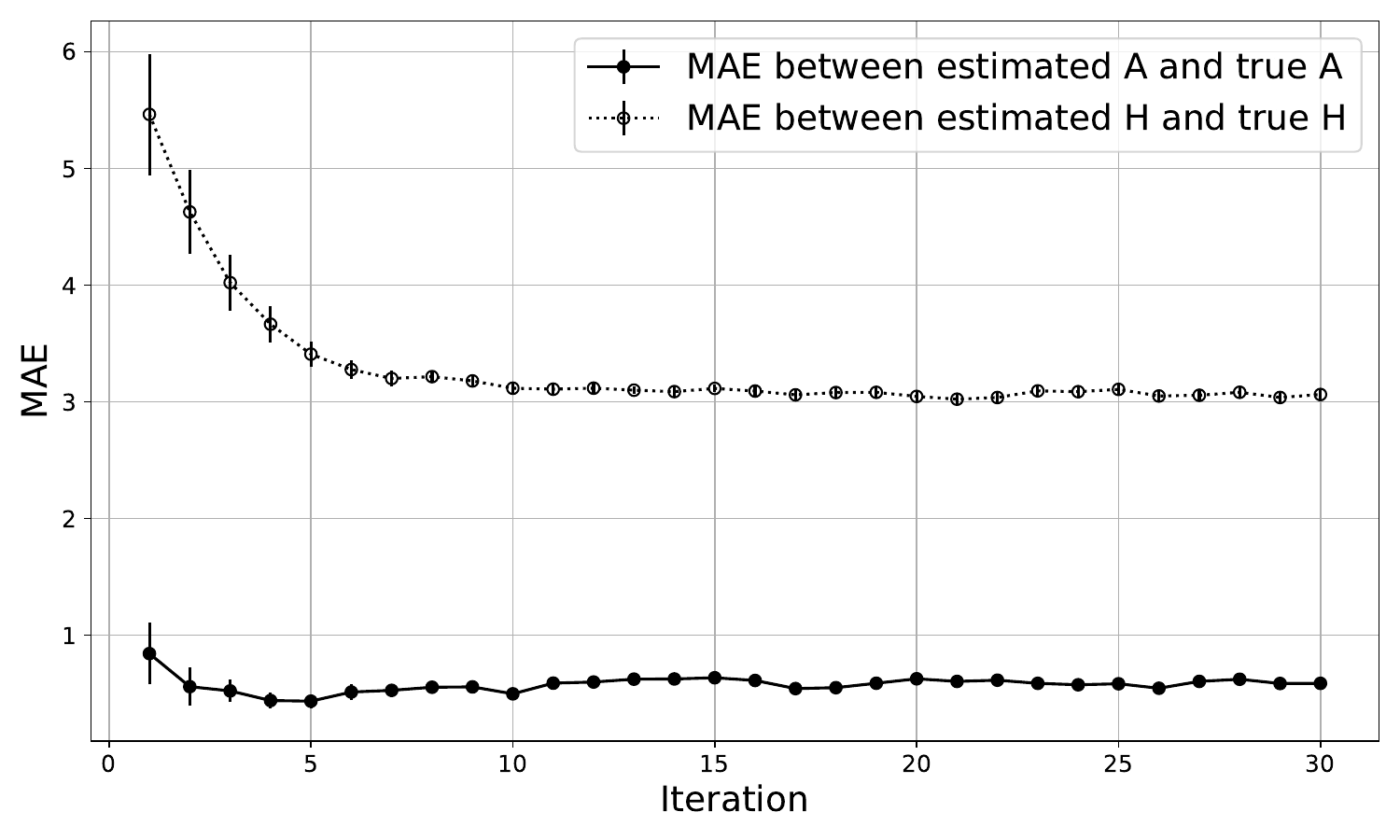}
        \caption{$\mathrm{d}X_t = -10X_t \: \mathrm{d}t+ \sqrt{10}\mathrm{d}W_t$ from Ex. ~\ref{eq: non_identifiability_same_stationary}}
    \end{subfigure}
    
    \vspace{0.3cm} % Adjust vertical spacing if necessary

    \begin{subfigure}{0.49\textwidth}
        \centering
        \includegraphics[width=\textwidth]{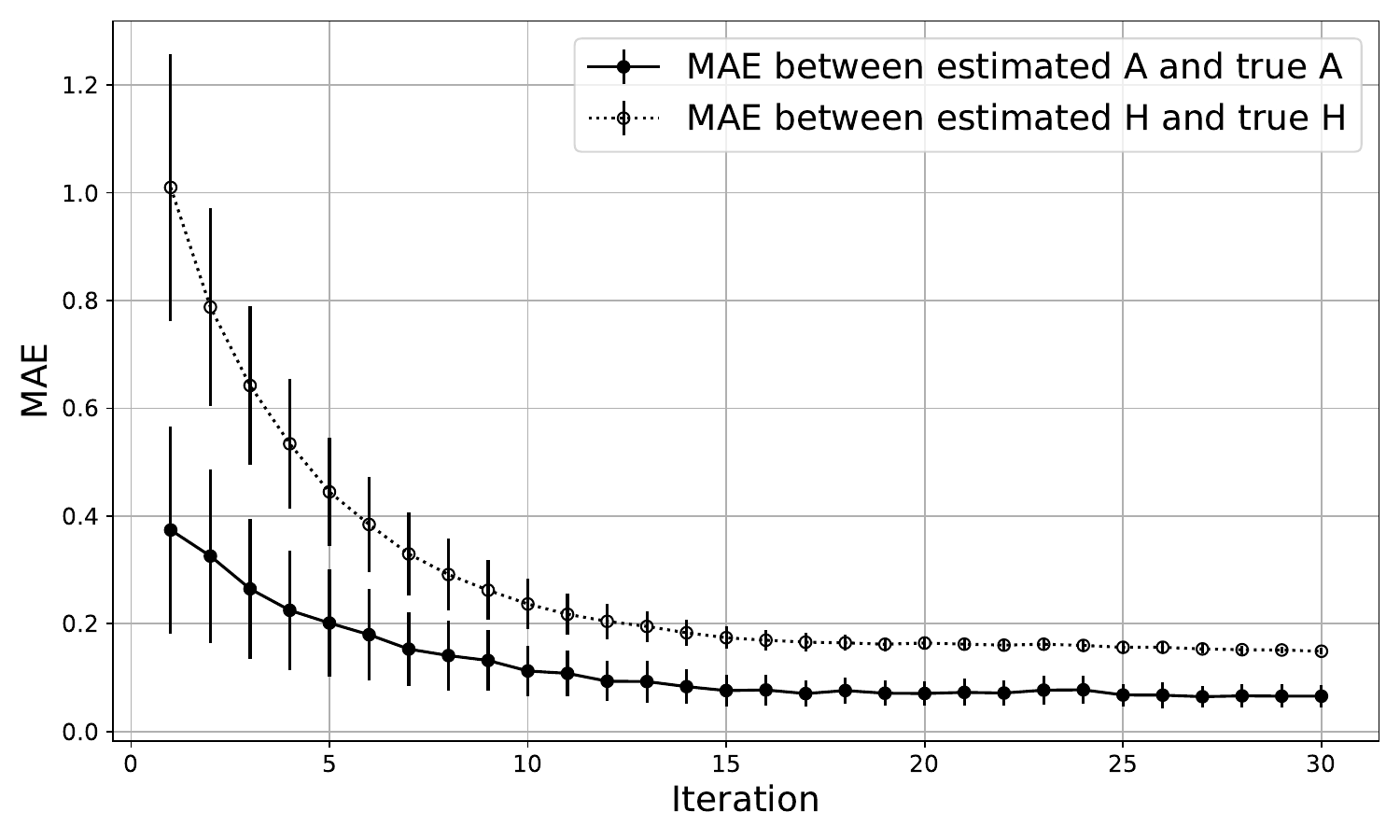}
        \caption{$\mathrm{d}X_t = \begin{bmatrix} 1 & 0 \\ 0 & 1\end{bmatrix}\mathrm{d}W_t$ from Ex. ~\ref{eq: non_identifiability_rotation}}
    \end{subfigure}
    \begin{subfigure}{0.49\textwidth}
        \centering
        \includegraphics[width=\textwidth]{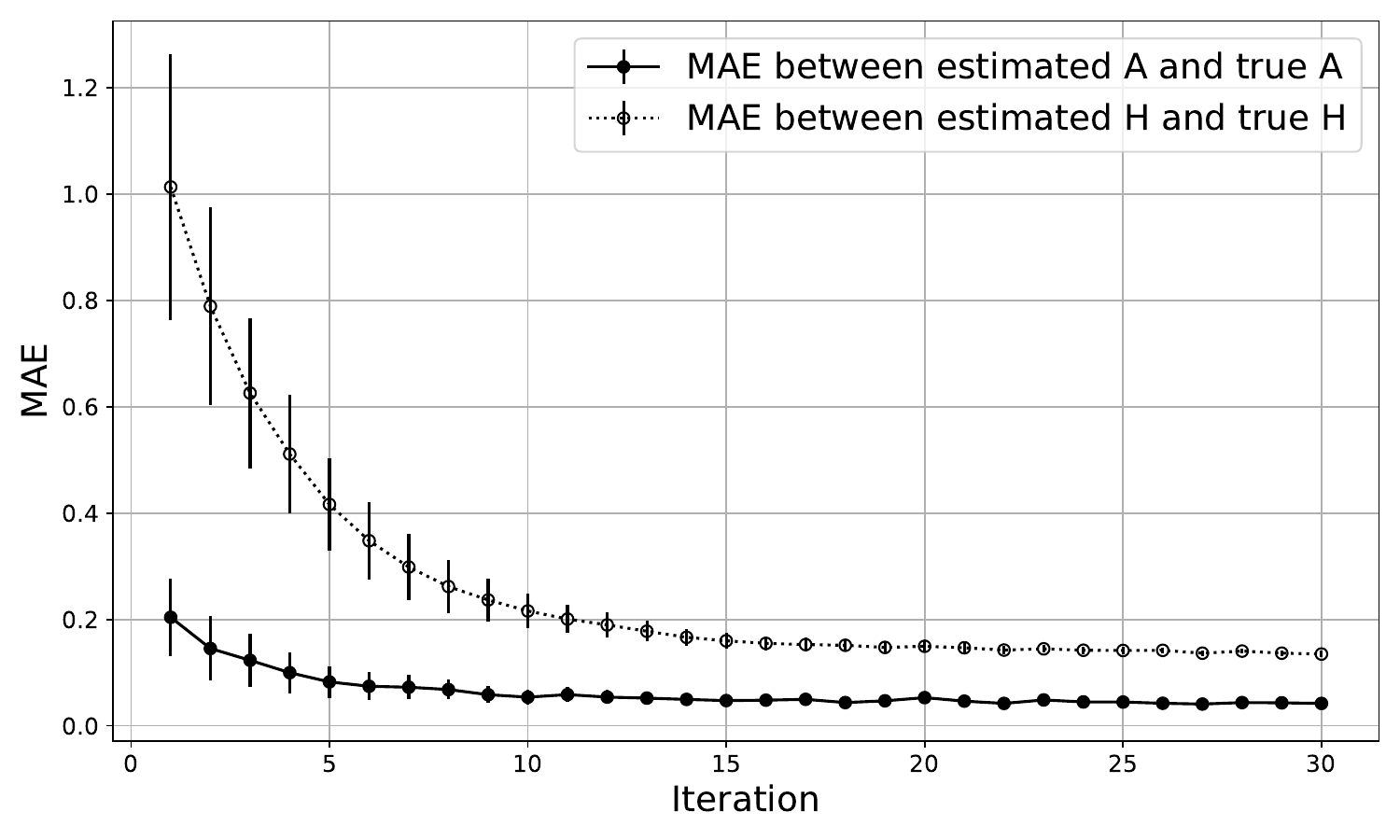}
        \caption{ $\mathrm{d}X_t = \begin{bmatrix} 0 & 1 \\ -1 & -0 \end{bmatrix} X_t \mathrm{d}t+  \begin{bmatrix} 1 & 0 \\ 0 & 1\end{bmatrix} \mathrm{d}W_t$ from Ex. ~\ref{eq: non_identifiability_rotation}}
    \end{subfigure}
    
    \vspace{0.3cm} % Adjust vertical spacing if necessary

    \begin{subfigure}{0.49\textwidth}
        \centering
        \includegraphics[width=\textwidth]{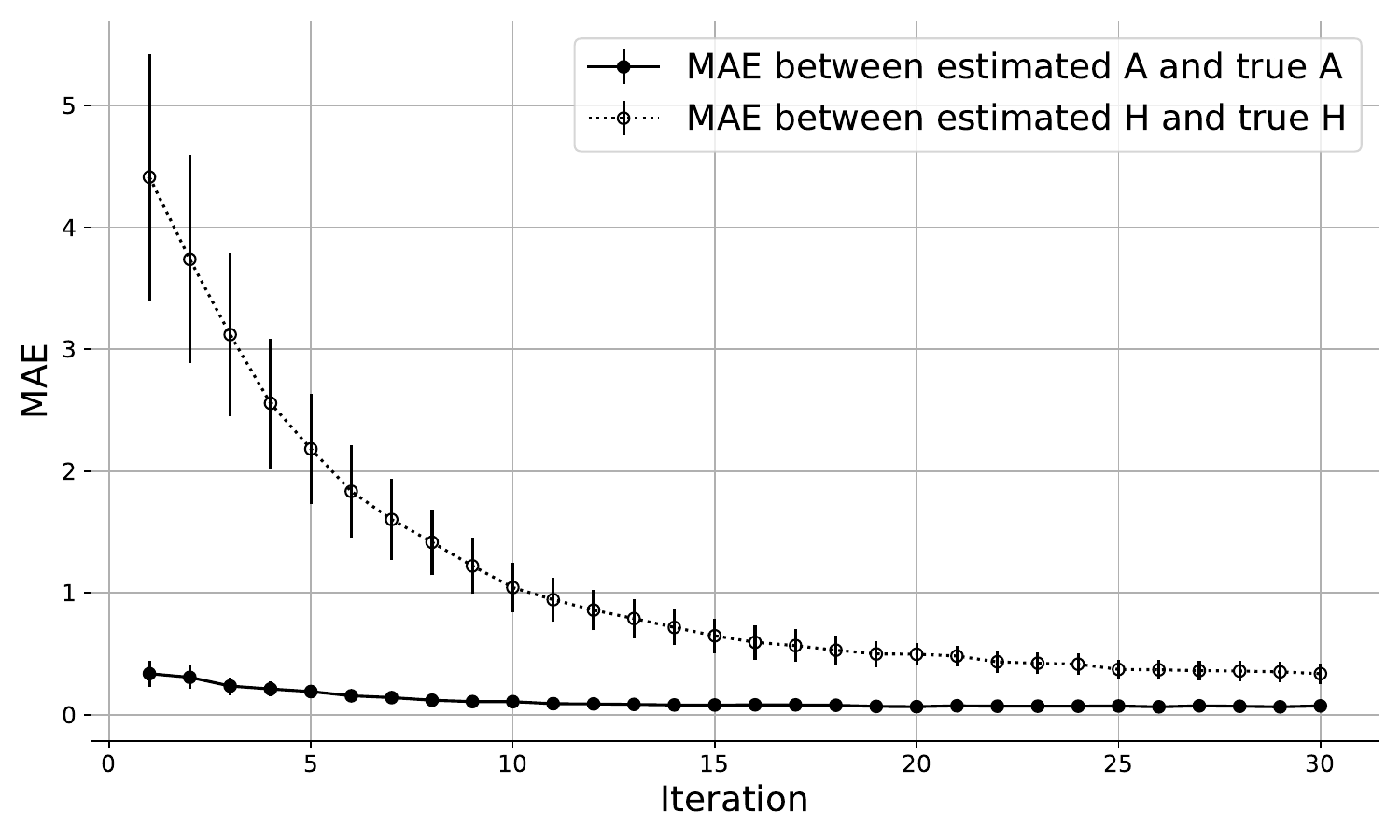}
        \caption{$\mathrm{d}X_t = \begin{bmatrix} 1 & 2 \\ 1 & 0 \end{bmatrix} X_t \mathrm{d}t+ \begin{bmatrix} 1 & 2 \\ -1 & -2 \end{bmatrix} \mathrm{d}W_t$ from Ex.\ref{eq: non_identifiability_wang}}
    \end{subfigure}
    \begin{subfigure}{0.49\textwidth}
        \centering
        \includegraphics[width=\textwidth]{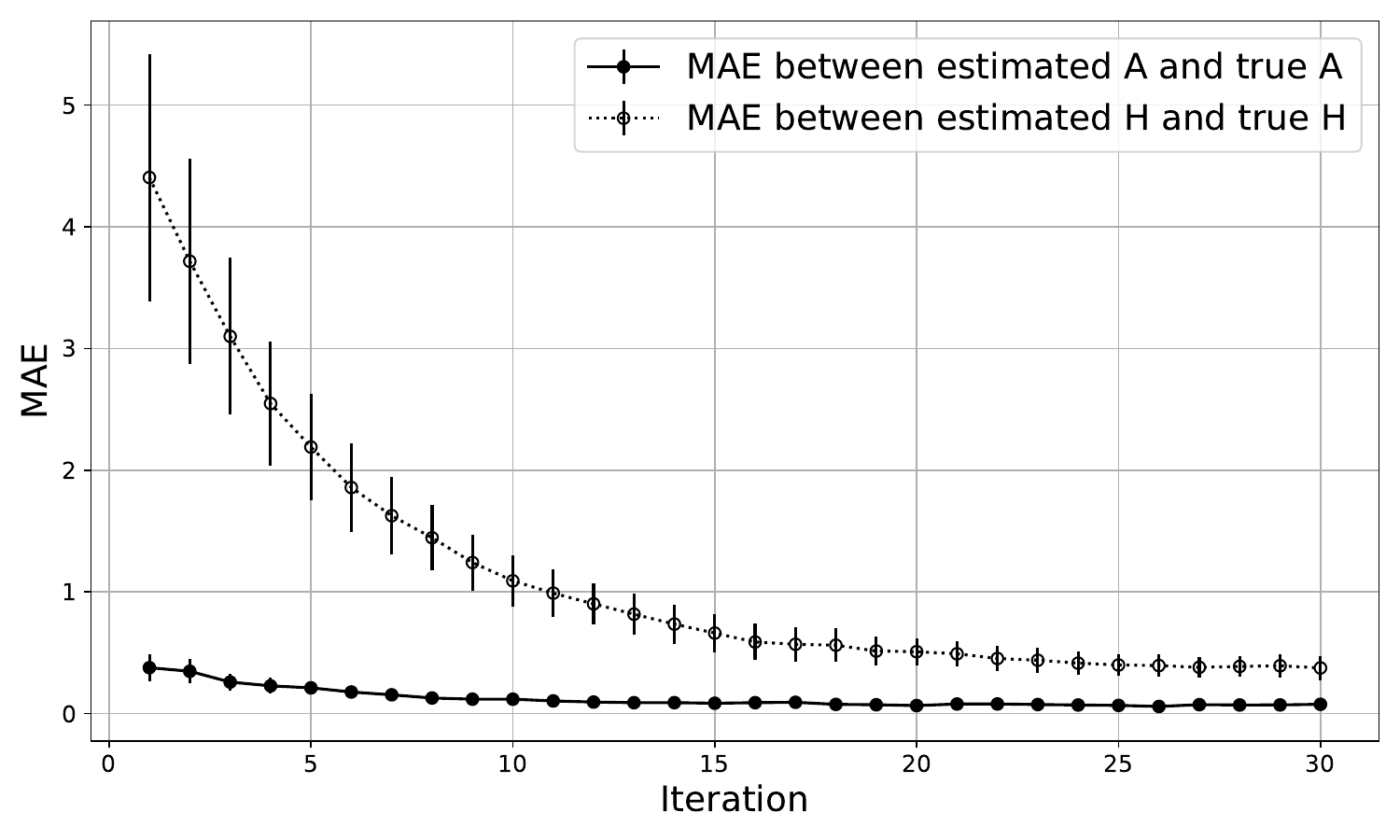}
        \caption{$\mathrm{d}X_t = \begin{bmatrix} \frac{1}{3} & \frac{4}{3} \\ \frac{2}{3} & -\frac{1}{3} \end{bmatrix} X_t \mathrm{d}t+ \begin{bmatrix} 1 & 2 \\ -1 & -2 \end{bmatrix} \mathrm{d}W_t$ from Ex.\ref{eq: non_identifiability_wang}}
    \end{subfigure}
    
    \caption{The mean absolute error for estimates of $A$ and $H$ using APPEX is shown per iteration for all three pairs of SDEs from Section \ref{sec:Noniden_examples}.}
\label{fig:mae_3_exps}
\end{figure}

The results are consistent with Theorem \ref{thm: identifiability} and Lemma \ref{lemma: APPEX_decrease_KL}. In each example, APPEX is able to estimate the drift and diffusion by iteratively improving upon both estimates, as evidenced by decreasing MAEs as the number of iterations increases. In particular, we observe the worst performance from the first iteration, which is equivalent to performing parameter estimation following trajectory inference by WOT. The most difficult SDE to infer was $\mathrm{d}X_t = -10X_t \: \mathrm{d}t+ \sqrt{10}\mathrm{d}W_t$, due to the high noise scale. We also note that APPEX succeeded in estimating the degenerate diffusion from the SDEs in Example \ref{eq: non_identifiability_wang}, despite the  fact that degenerate diffusion can result in infinite KL divergence with respect to a misspecified reference SDE.

\subsection{Higher dimensional randomly generated linear additive noise SDEs}
\label{sub: higher_dim_exps}
We now consider parameter estimation on a broad range of higher dimensional random linear additive noise SDEs. For each dimension $d=3, \ldots, 10$, we generate $10$ random SDEs. To create each drift matrix $A$, we randomly sample each of its $d^2$ entries from $\text{Unif}(-5,5)$, and verify that its principal eigenvalue is less than $1$. This allows us to consider non-stationary processes, while obeying reasonable growth conditions. To create each diffusion $G$, we randomly initialize each of its $d^2$ entries via $\text{Unif}(-1,1)$ and then set $H=GG^\top$. For numerical stability, we perform Sinkhorn on the logarithmic scale for this experiment. We evaluate each estimate’s performance using MAE and correlation with the true parameters, where correlation is computed by vectorizing the corresponding matrices. \rv{We report the mean and standard error of the MAE across the 10 runs.}

\begin{table}[h!]
  \centering
  \begin{tabular}{|c|c|c|c|c|}
    \hline
    \textbf{Dimension} & \multicolumn{2}{c|}{\textbf{A Estimation}} & \multicolumn{2}{c|}{\textbf{GG$^\top $ Estimation}} \\
    \cline{2-5}
    & \textbf{WOT} & \textbf{APPEX} & \textbf{WOT} & \textbf{APPEX} \\
    \hline
    3 & $0.351 \pm 0.04$ & $\mathbf{0.237 \pm 0.04}$ & $0.793 \pm 0.205$ & $\mathbf{0.147 \pm 0.030}$ \\
    \hline
    4 & $0.730 \pm 0.067$ & $\mathbf{0.328 \pm 0.041}$ & $1.549 \pm 0.439$ & $\mathbf{0.415 \pm 0.070}$ \\
    \hline
    5 & $0.912 \pm 0.060$ & $\mathbf{0.602 \pm 0.195}$ & $2.174 \pm 0.702$ & $\mathbf{0.362 \pm 0.039}$ \\
    \hline
    6 & $1.43 \pm 0.170$ & $\mathbf{0.358 \pm 0.020}$ & $18.010 \pm 8.636$ & $\mathbf{0.256 \pm 0.046}$ \\
    \hline
    7 & $1.480 \pm 0.132$ & $\mathbf{0.360 \pm 0.015}$ & $6.807 \pm 1.724$ & $\mathbf{0.345 \pm 0.037}$\\
    \hline
    8 & $1.862 \pm 0.137$ & $\mathbf{0.460 \pm 0.015}$ & $5.472 \pm 1.266$& $\mathbf{0.359 \pm 0.019}$\\
    \hline
    9 & $1.803 \pm 0.222$ & $\mathbf{0.487 \pm 0.016}$ & $8.134 \pm 3.024$ & $\mathbf{0.454 \pm 0.122}$ \\
    \hline
    10 & $1.670 \pm 0.241$ & $\mathbf{0.439 \pm 0.019}$ & $35.122 \pm 28.529$ & $\mathbf{0.317 \pm 0.025}$ \\
    \hline
  \end{tabular}
  \caption{Mean absolute error (MAE) of estimated drift and diffusion for dimensions 3-10. \rv{Each box reports the mean $\pm$ the standard error.}}
\end{table}

\begin{table}[h!]
  \centering
  \begin{tabular}{|c|c|c|c|c|}
    \hline
    \textbf{Dimension} & \multicolumn{2}{c|}{\textbf{A Estimation}} & \multicolumn{2}{c|}{\textbf{GG$^\top $ Estimation}} \\
    \cline{2-5}
    & \textbf{WOT} & \textbf{APPEX} & \textbf{WOT} & \textbf{APPEX} \\
    \hline
    3 & $0.996 \pm 0.001$ & $\mathbf{0.998 \pm 0.001}$ & $0.837 \pm 0.048$ & $\mathbf{0.985 \pm 0.005}$ \\
    \hline
    4 & $0.943 \pm 0.015$ & $\mathbf{0.987 \pm 0.005}$ & $0.729 \pm 0.039$& $\mathbf{0.865 \pm 0.031}$ \\
    \hline
    5 & $0.921 \pm 0.016$ & $\mathbf{0.952 \pm 0.030}$  & $0.728 \pm 0.040$ & $\mathbf{0.909 \pm 0.018}$ \\
    \hline
    6 & $0.794 \pm 0.040$ & $\mathbf{0.986 \pm 0.001}$ & $0.530 \pm 0.056$ & $\mathbf{0.961 \pm 0.007}$ \\
    \hline
    7 & $0.792 \pm 0.029$ & $\mathbf{0.988 \pm 0.001}$ & $0.595 \pm 0.037$ & $\mathbf{0.946 \pm 0.012}$ \\
    \hline
    8 & $0.699 \pm 0.035$ & $\mathbf{0.981 \pm 0.002}$ & $0.611 \pm 0.042$ & $\mathbf{0.949 \pm 0.006}$ \\
    \hline
    9 & $0.740 \pm 0.033$ & $\mathbf{0.978 \pm 0.002}$& $0.615 \pm 0.025$ & $\mathbf{0.919 \pm 0.033}$ \\
    \hline
    10 & $0.760 \pm 0.049$ & $\mathbf{0.983 \pm 0.001}$ & $0.641 \pm 0.041$ & $\mathbf{0.960 \pm 0.006}$ \\
    \hline
  \end{tabular}
    \caption{Correlation between estimated 
  and true drift and diffusion for dimensions 3-10. \rv{Each box reports the mean $\pm$ the standard error.}}
\end{table}

\begin{figure}[!h]
    \centering
    \begin{subfigure}{0.49\textwidth}
        \centering
        \includegraphics[width=\textwidth]{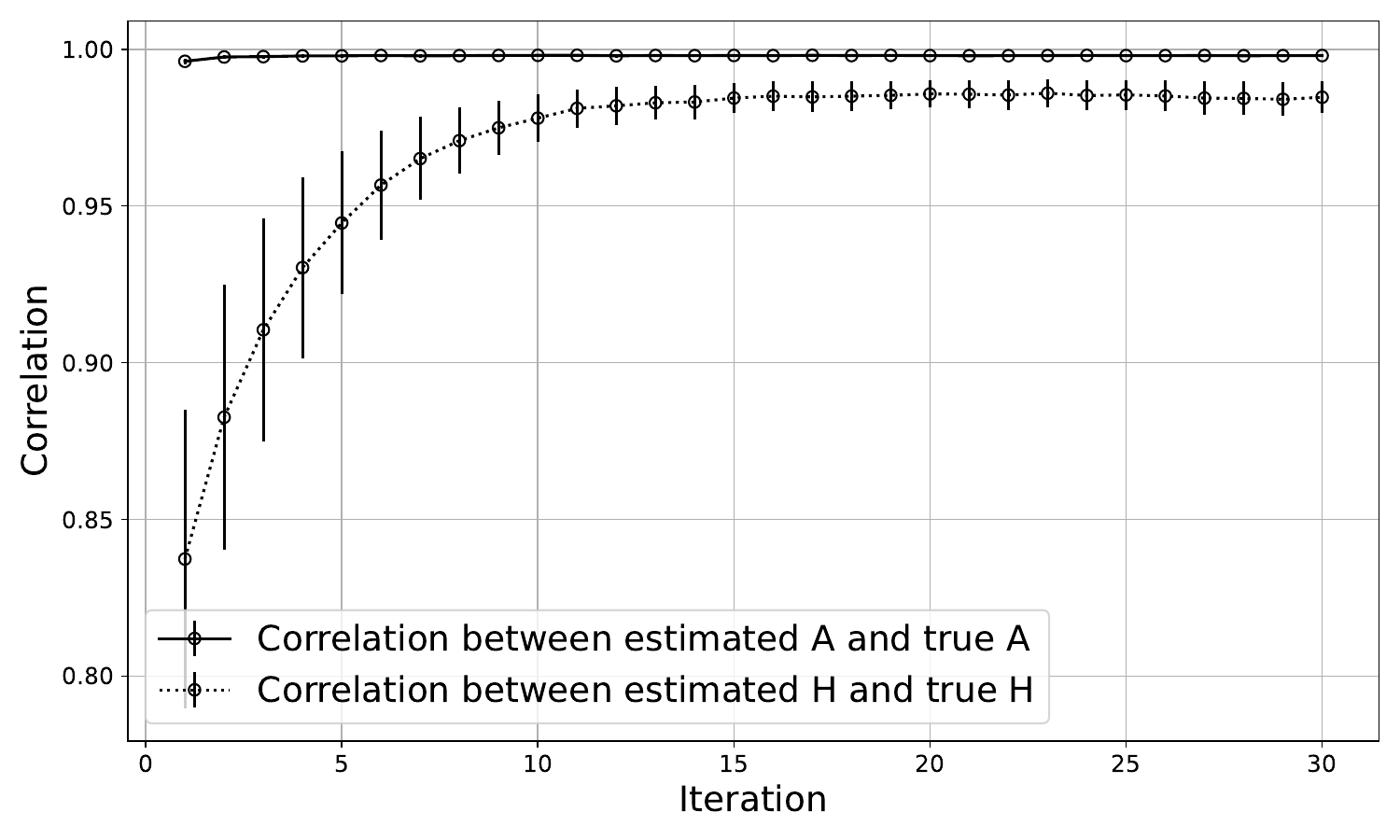}
        \caption{$d=3$}
    \end{subfigure}
    \begin{subfigure}{0.49\textwidth}
        \centering
        \includegraphics[width=\textwidth]{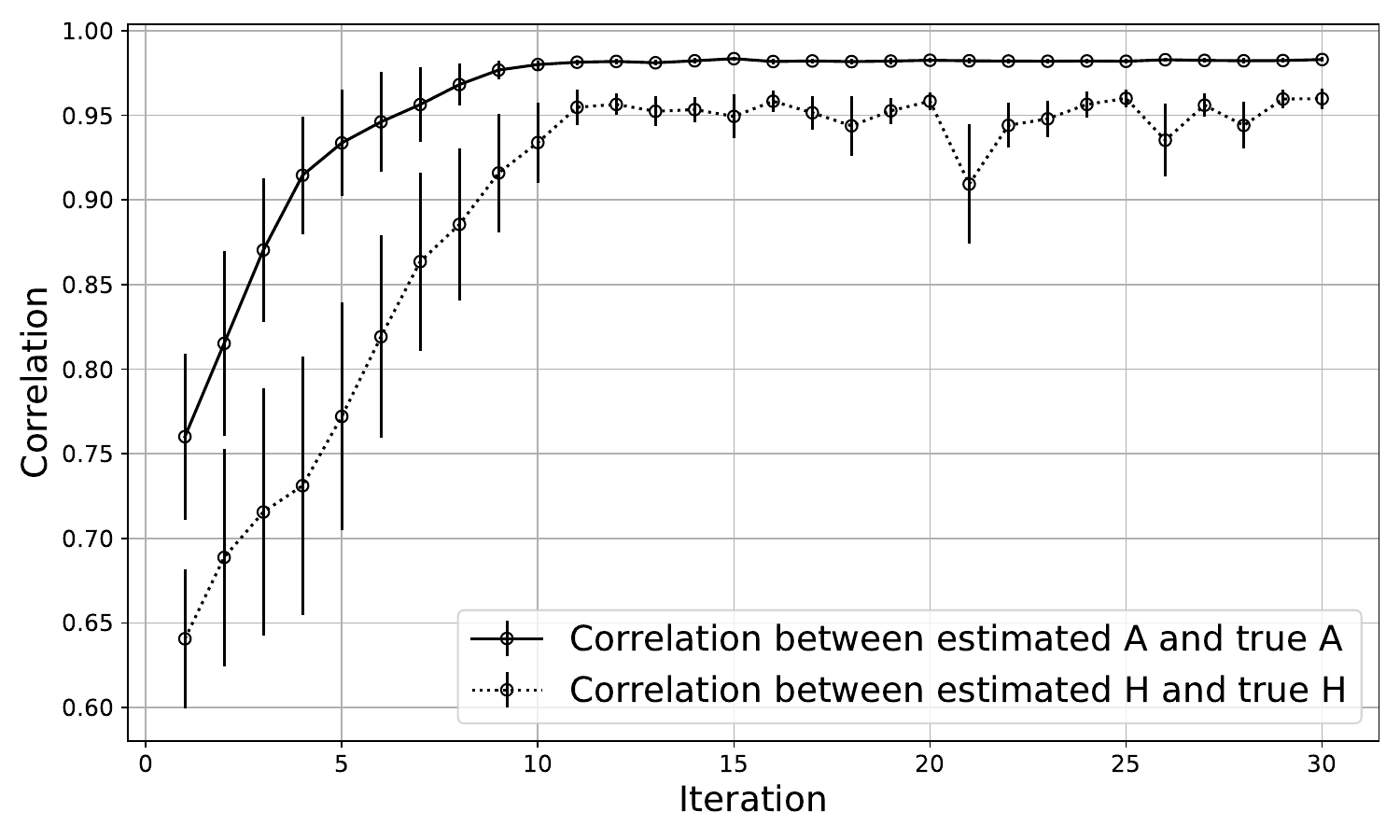}
        \caption{$d=10$}
    \end{subfigure}
    
    \caption{The correlation between the estimated and true SDE parameters is plotted per iteration across $10$ random linear additive noise SDEs for dimensions $3$ and $10$. \rv{Each point reports the mean of the 10 runs, while the bars represent the standard error.}}
    \label{fig:corr_exp_rand}
\end{figure}

The results demonstrate that APPEX estimates both SDE parameter sets robustly across all settings. Importantly, APPEX can handle arbitrary additive noise structures $H=GG^\top$, as evidenced by the fact that its drift and diffusion estimates have low MAE and high correlation with respect to the true parameters. This is significant because previous literature has focused on the setting of isotropic noise, but in practice, we expect correlated noise structures, as well as unequal noise along the main diagonal. In contrast, although WOT estimates the drift somewhat decently, it is unable to estimate the diffusion accurately, particularly in higher dimensions, since it is constrained to isotropic noise in its reference SDE. Figure \ref{fig:corr_exp_rand} shows how APPEX is able to re-orient incorrect diffusion priors to closely match the true diffusion of the underlying SDE. \rv{While running increasing numbers of iterations within APPEX generally leads to monotonic increases in parameter accuracy (as measured by correlation in Figure \ref{fig:corr_exp_rand}), we observe occasional instability due to the combination of finite observations, numerical limitations during trajectory inference from capping the maximum number of iterations of Sinkhorn, and the fact that the algorithm is building of previous (and wrong) estimates.}

\subsection{Causal discovery}
\label{sec: causal_discovery}

We conclude with experiments that test our method's ability to recover the underlying causal graph. We first consider causal discovery from systems without latent confounders, and then consider causal discovery from systems with latent pairwise confounders through correlated diffusion. 

In both settings, we generate random SDEs of dimension $d=3,5,10$ whose causal structure is determined by Erdös-Renyi graphs $\mathcal{G}(d,p)$, such that each of the $d(d-1)$ possible directed edges are included with probability $p$. As in Section \ref{sub: higher_dim_exps}, we ensure that the maximal eigenvalue of the randomly generated ground truth drift matrix is at most $1$ to prevent unbounded growth. Proposition \ref{prop: identify_causal_graph_SDE_add_noise} states that directed edges $e=i \to j \in E$  in the causal graph $\mathcal{G} = ([d], E, \tilde{E})$ are characterized by the condition $A_{j,i} \neq 0$. To simulate ground truth edges, we follow standard convention, by simulating edge weights $A_{j,i}$ uniformly via $\text{Unif}(-5, -0.5) \cup \text{Unif}(0.5, 5)$ \citep{runge2021necessary, reisach2021beware}. We infer the presence of an edge $i \to j$, if in the estimated drift, $\hat{A}_{j,i} > \epsilon$ (corresponding to a positive edge weight) or $\hat{A}_{j,i} <- \epsilon$ (corresponding to a negative edge weight). We choose our threshold $\epsilon = 0.5$ to match the minimal edge weight magnitude from our simulations. Inference accuracy of directed edges is then measured according to the structural Hamming distance between $\mathcal{G}$ and the estimated graph $\mathcal{G}_{est}$, which considers the classification of positive directed edges, negative directed edges, and absent edges. The structural Hamming distance adds $1$ for every misclassified edge:
\begin{align}
d(\mathcal{G}, \mathcal{G}) = \sum_{(i, j) \in [d] \times [d]} \mathbf{1}\left\{ \operatorname{sgn}(A_{j,i}) \neq \operatorname{sgn}(\hat{A}_{j,i})\mathbf{1}_{|\hat{A}_{j,i}| > \epsilon}  \right\}.
\label{eq: hamming_A}
\end{align}

If the observed system does not have any latent confounders from correlated diffusion, we enforce this by setting the diffusion matrix $G$ to be strictly diagonal. This ensures that the only edges in $\mathcal{G}$ are uni-directional edges from the drift. As in the experiments on random matrices, we set each of the $d$ diagonal entries of $G$ via $\text{Unif}(-1,1)$. Conversely, when modeling pairwise latent confounders, we introduce correlated diffusion by selecting a random subset of columns in $G$ to have precisely two nonzero entries. In particular, we  initialize all entries along the diagonal of $G$ to be $1$, and for each of the selected columns, e.g. column $i$, we set one non-diagonal entry $G_{i,j}=1$, to model the pairwise latent confounder $U_{i,j}$. All other entries are set to $0$. The number of selected columns with pairwise latent confounders is sampled from $\text{Unif}\{1, \cdots \lfloor 2d/3 \rfloor\}$, e.g. we sample from $\text{Unif}\{1, 2\}$ for $d=3$, $\text{Unif}\{1, 2, 3\}$ for $d=5$, and $\text{Unif}\{1, \dots 6\}$ for $d=10$. Proposition \ref{prop: identify_causal_graph_SDE_add_noise} states that the condition $H_{i,j} = G_i \cdot G_j \neq 0$ identifies pairwise latent confounders $U_{i,j}$. We therefore infer the presence of $U_{i,j}$, if in the estimated diffusion, $|\hat{H}_{i,j}|  > \epsilon$. We set $\epsilon=1$ to match the data construction $(i,j) \in \tilde{E} \iff H_{i,j} = G_i \cdot G_j \ge 1$. Inference of edges from pairwise latent confounders is measured according to the structural Hamming distance, which adds $1$ for every misclassified pairwise latent confounder:
% Lemma \ref{prop: identify_causal_graph_SDE_add_noise} states that pairwise latent confounders are characterized by the condition $H_{i,j} = H_{j,i} \neq 0$. Since we would also like to consider cycles formed by sets of directed edges, we consider the augmented graph $\bar{\mathcal{G}} = (V, E)$, as done in \citep{boeken2024dynamic}. Specifically, $\tilde{e} = i \leftrightarrow j \in \tilde{E}$ would correspond to the v-structure $i \leftarrow U_{i,j} \rightarrow j$ in the augmented graph $\bar{\mathcal{G}}$, where $U_{i,j}$ is the unobserved pairwise confounder of $X^{(i)}$ and $X^{(j)}$. 
\begin{align}
d_{latent}(\mathcal{G}, \mathcal{G}_{est})=
\sum_{\substack{(i, j) \in [d] \times [d]: i \neq j}} 
\mathbf{1} \left\{ (|H_{i,j}| > \epsilon \cap |\hat{H}_{i,j}| \le \epsilon) \cup (|H_{i,j}| \le \epsilon \cap |\hat{H}_{i,j}| > \epsilon) \right\}.
\label{eq: hamming_H}
\end{align}

Table \ref{tab: causal_sufficiency_table} reports the mean structural Hamming distances for directed edges from causal graphs estimated by WOT and APPEX, in the setting without latent confounders. APPEX outperforms WOT across all tested dimensions $d \in \{3, 5, 10\}$ and edge probabilities $p \in \{0.1, 0.25, 0.5\}$, with the discrepancy becoming more pronounced as the causal structure grows in complexity at higher dimensions and edge densities. We observe that APPEX can consistently infer complex causal relationships, whereas WOT's inference is often limited, due to mis-specified diffusion. For example, the first row in Figure  \ref{fig:causal_sufficiency_plots} demonstrates that APPEX recovers the system's negative feedback structure, while WOT introduces additional cycles in order to compensate for a poorly specified reference diffusion.

Table \ref{tab: latent_table} reports the mean structural Hamming distances for \rv{drift-related edges and edges from pairwise latent confounders} from causal graphs estimated by WOT and APPEX, in the setting with correlated diffusion. The probability $p$ of connecting a directed edge $i \to j$ was fixed at $p=0.25$. As in the setting with uncorrelated diffusion, APPEX outperforms WOT in all evaluated settings, particularly for higher-dimensional systems. 

% While WOT often struggles to learn the causal graph, due to drift estimates compensating for mispecified diffusion \citep{huang2024one}, APPEX This highlights the fact that one must estimate diffusion accurately in order to recover an accurate causal representation of the system from the drift.

% Overall, we observe that APPEX can consistently infer complex causal relationships, including cycles and bi-directed edges, whereas WOT struggles to do so, due to mis-specified diffusion.  

% \subsubsection{Causal discovery with latent confounders}

% The results are summarized in Table \ref{tab: latent_table}. We also plot the true vs. estimated graphs by WOT and APPEX for two SDEs in the bottom two rows of Figure \ref{fig:causal_sufficiency_plots}. 
% We see that APPEX is able to recover both the directed edges and the latent pairwise confounders with high accuracy across dimensions. In constrast, WOT struggles to detect both, especially for higher dimensions. We note that in this setting with anisotropic diffusion, WOT's initial diffusion reference is not only mispecified according to diagonal scaling, but further mispecifies the nonzero entries.

\begin{table}[h!]
  \centering
  \caption{Average Structural Hamming Distance (lower is better) with varying dimensions and random edge probabilities \( p \). Each box reports the mean $\pm$ the standard error.}
  \begin{tabular}{|c|c|c|c|c|c|c|}
    \hline
    \textbf{Dimension} & \multicolumn{2}{c|}{$p = 0.1$} & \multicolumn{2}{c|}{$p = 0.25$} & \multicolumn{2}{c|}{$p = 0.5$} \\
    \cline{2-7}
    & \textbf{WOT} & \textbf{APPEX} & \textbf{WOT} & \textbf{APPEX} & \textbf{WOT} & \textbf{APPEX} \\
    \hline
    3 & $0.40  \pm 0.40 $ & $\mathbf{0.00 \pm 0.00}$ & $1.80 \pm 0.76$ & $\mathbf{0.30 \pm 0.30}$ & $1.40 \pm 0.65$ & $\mathbf{0.00 \pm 0.00}$ \\
    \hline
    5 & $3.80 \pm 2.13$ & $\mathbf{0.10 \pm 0.10}$ & $3.56 \pm 1.70$ & $\mathbf{0.00 \pm 0.00}$ & $4.40 \pm 1.14$ & $\mathbf{0.60 \pm 0.22}$ \\
    \hline
    10 & $38.30 \pm 8.53$ & $\mathbf{0.50 \pm 0.40}$ & $48.60\pm 6.55$ & $\mathbf{0.80 \pm 0.39}$ & $37.4 \pm 4.66$& $\mathbf{3.30 \pm 0.70}$ \\
    \hline
  \end{tabular}
  \label{tab: causal_sufficiency_table}
\end{table}

\begin{table}
  \centering
  \caption{Average Structural Hamming Distance for directed edges, and latent confounders (lower is better) with varying dimensions and random edge probability $p=0.25$. Each box reports the mean $\pm$ the standard error.}
  \begin{tabular}{|c|c|c|c|c|}
    \hline
    \textbf{Dimension} & \multicolumn{2}{c|}{Hamming distance for \rv{drift-related} edges} & \multicolumn{2}{c|}{Hamming distance for latent confounders} \\
    \cline{2-5}
    & \textbf{WOT} & \textbf{APPEX} & \textbf{WOT} & \textbf{APPEX} \\
    \hline
    3 & $1.20  \pm 0.51$ & $\mathbf{0.7 \pm 0.40}$ & $1.00 \pm 0.30$ & $\mathbf{0.00 \pm 0.00}$ \\
    \hline
    5 & $4.80\pm 1.47$ & $\mathbf{1.60 \pm 0.87}$ & $3.50 \pm 0.64$ & $\mathbf{0.10 \pm 0.10}$ \\
    \hline
    10 & $38.00 \pm 6.30$ & $\mathbf{3.00 \pm 0.77}$ & $23.80 \pm 3.86$ & $\mathbf{2.90\pm 1.39}$  \\
    \hline
  \end{tabular}
  \label{tab: latent_table}
\end{table}
\begin{figure}
    \centering
    \includegraphics[width=0.78\linewidth]{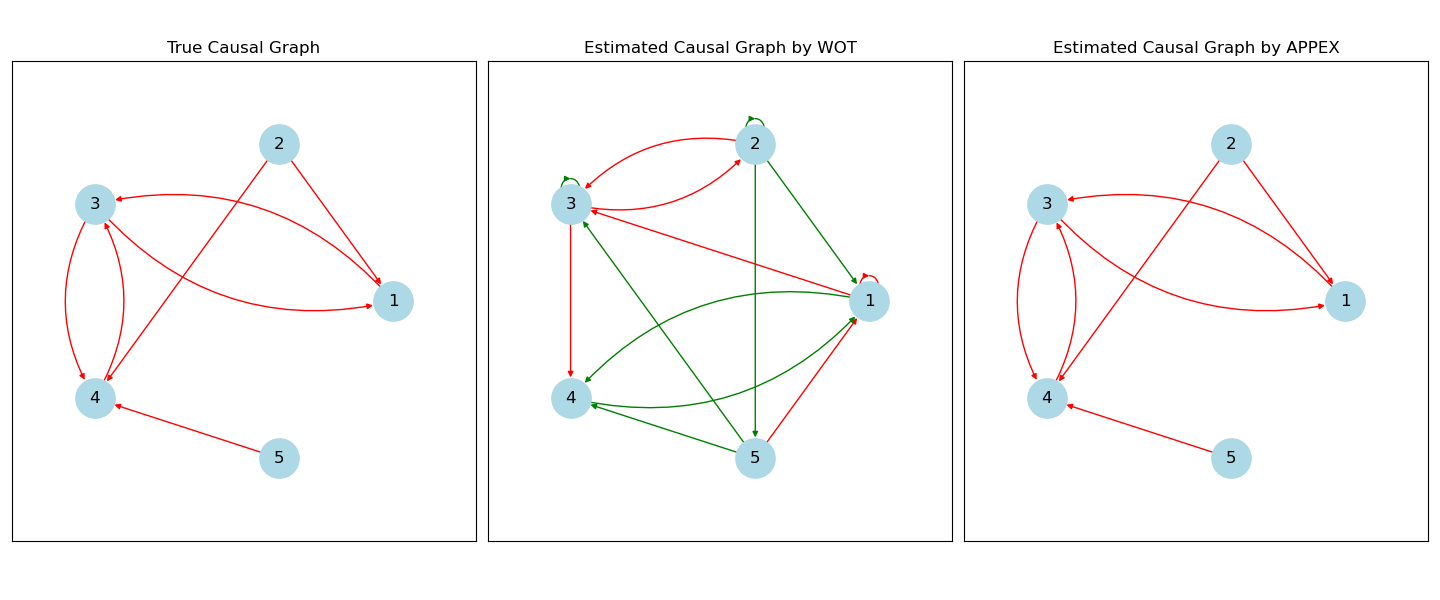}
    \includegraphics[width=0.78\linewidth]{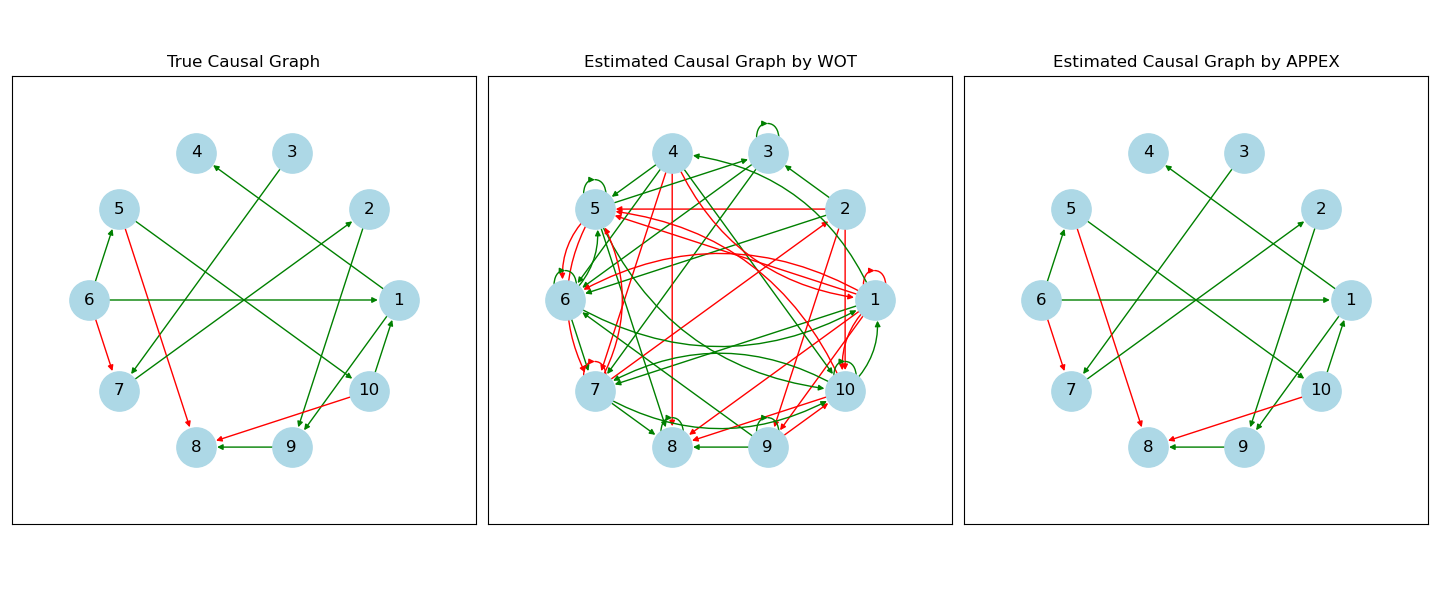}
     \includegraphics[width=0.78\linewidth]{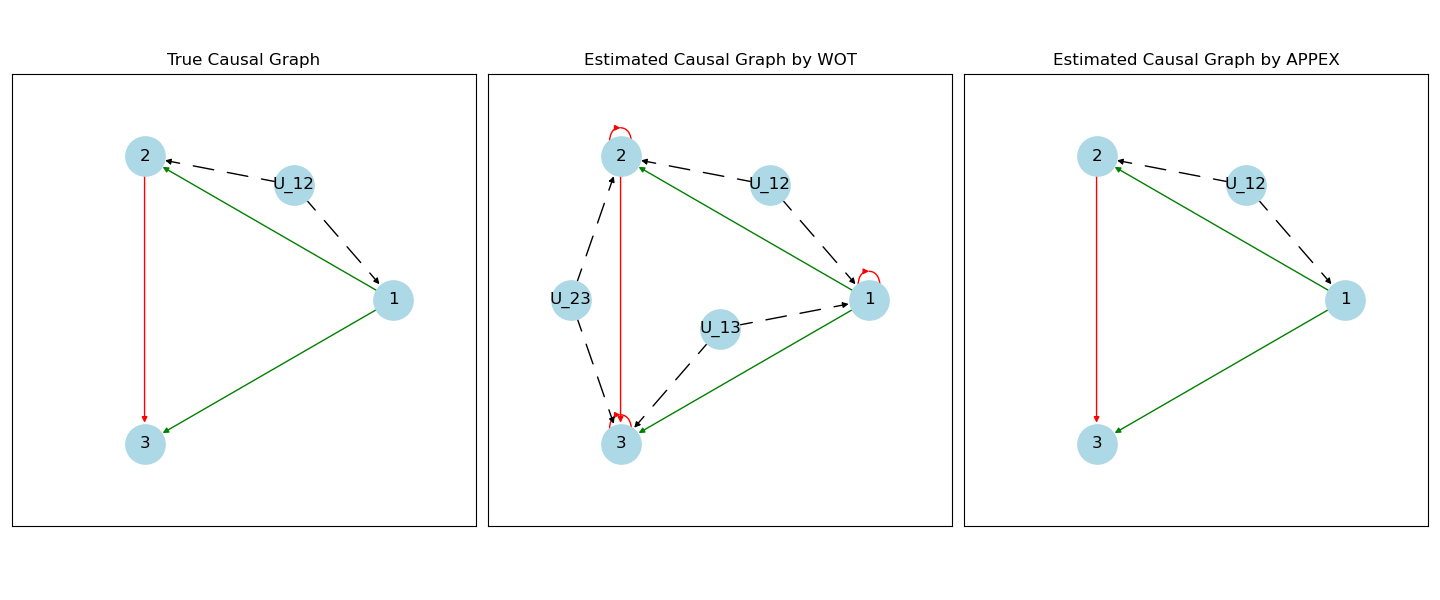}
    \includegraphics[width=0.78\linewidth]{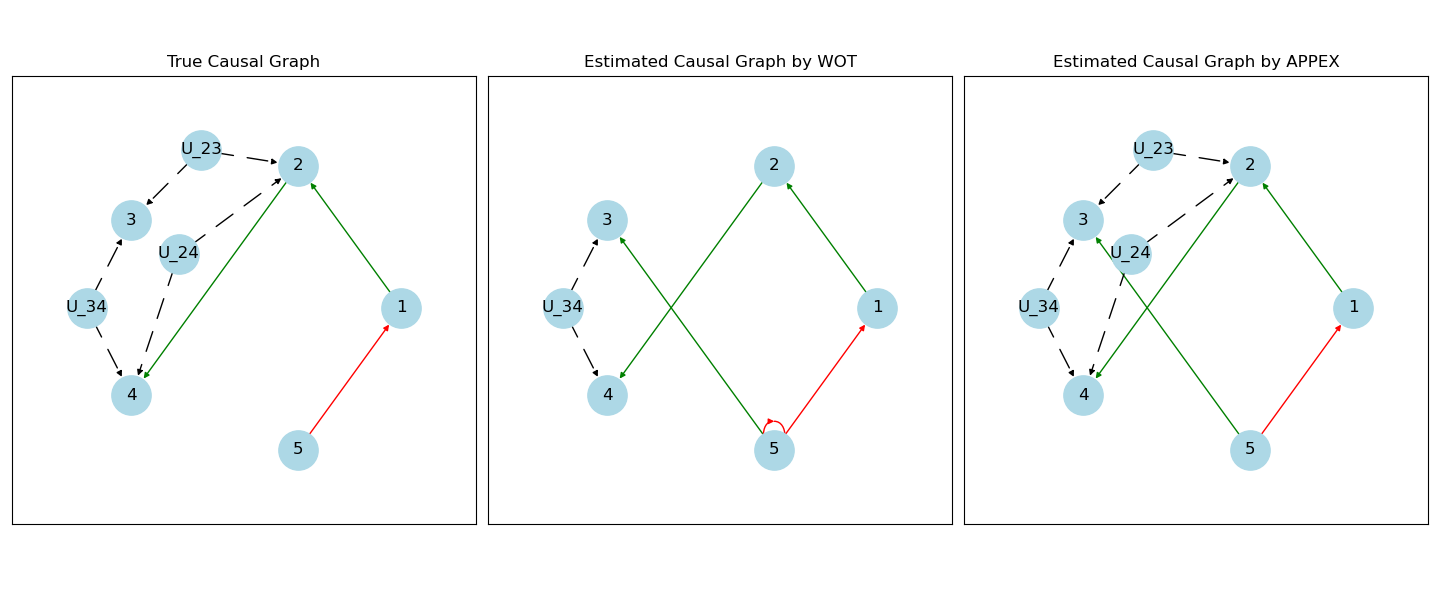}
    \caption{The true and estimated causal graphs by WOT and APPEX for four random SDEs of varying dimensions are illustrated. Red edges $i \to j$ represent negative edge weights, such that $X^{(i)}$ negatively regulates  $X^{(j)}$. Green edges $i \to j$ represent positive edge weights, such that $X^{(i)}$ positively regulates  $X^{(j)}$. Dashed edges represent the effect of a latent confounder $U_{i,j}$ on observed variables $i,j$ through correlated diffusion.}
    \label{fig:causal_sufficiency_plots}
\end{figure}

% \begin{table}
%   \centering
%   \caption{some table}
%   \begin{tabular}{|>{}c|*{8}{c|}}\hline
%     \multirow{3}{*}{\bfseries Dimension} & \multicolumn{4}{c|}{\bfseries $A$ estimation} & \multicolumn{4}{c|}{\bfseries $GG^\top$ estimation} 
%     \\\cline{2-9} & \multicolumn{2}{c|}{\bfseries MAE} & \multicolumn{2}{c|}{\bfseries Corr} & \multicolumn{2}{c|}{\bfseries MAE} & \multicolumn{2}{c|}{\bfseries Corr}
%     \\\cline{2-9} & \textbf{WOT} & \textbf{APPEX} & \textbf{WOT} & \textbf{APPEX} & \textbf{WOT} & \textbf{APPEX} & \textbf{WOT} & \textbf{APPEX}\\ \hline
%     3 & $0.351 \pm 0.04$ & $0.237 \pm 0.04$ & $0.996 \pm 0.001$ & $0.998 \pm 0.001$ & $0.793 \pm 0.205$ & $0.147  \pm 0.030$ & $0.837 \pm 0.048$ & $0.985 \pm 0.005$  \\
%     \hline
%     4 & 0 & 0 & 0 & 0 & 0 & 0 & 0 & 0  \\
%     \hline
%     5 & 0 & 0 & 0 & 0 & 0 & 0 & 0 & 0  \\
%     \hline
%     6 & 0 & 0 & 0 & 0 & 0 & 0 & 0 & 0  \\
%     \hline
%     7 & 0 & 0 & 0 & 0 & 0 & 0 & 0 & 0  \\
%     \hline
%     8 & 0 & 0 & 0 & 0 & 0 & 0 & 0 & 0  \\
%     \hline
%     9 & 0 & 0 & 0 & 0 & 0 & 0 & 0 & 0  \\
%     \hline
%     10 & 0 & 0 & 0 & 0 & 0 & 0 & 0 & 0  \\
%     \hline
%   \end{tabular}
%   \end{table}

\section{Discussion}
\label{sec:Conclusions}
Many recent works have focused on inferring the drift or diffusion from an SDE's temporal marginals, yet the problem of non-identifiability remained largely unaddressed \citep{weinreb2018fundamental, lavenant2021towards, chizat2022trajectory, shen2024learning, zhang2024joint, terpin2024learning, brogat2024learning}. \rv{To resolve this issue, we first provide necessary and sufficient identifiability conditions for linear additive noise SDEs in Theorem \ref{thm: identifiability}, which shows that non-identifiability of the drift and diffusion parameters is feasible if and only if the initial distribution $X_0$ is auto-rotationally invariant. Building off Theorem \ref{thm: identifiability}, we introduce Theorem \ref{thm:stronger}, which proves that the set of drift and diffusion parameters that produce non-identifiable observations has measure zero, even if the initial distribution obeys auto-rotational invariance. Our theory therefore clarifies that non-identifiability of linear drift and diffusion from the process' marginals can only be produced under exceedingly rare circumstances. We complement our identifiability theory by showing that directed edges and latent confounders in the system's causal graph can be recovered from the drift and diffusion parameters, for both linear and nonlinear additive noise SDEs.} This provides a theoretical foundation for learning the precise dynamics and causal structure of a linear additive noise process, solely from its temporal marginals, without relying on the usual prior knowledge assumptions and model restrictions.
% Although the initial distributions of certain types of data, such as embryonic development, may be auto-rotationally invariant due to degeneracy in high-dimensional settings or unimodal Gaussianity, we note that practitioners may consider techniques such as gene knockouts 

% Thus, by ensuring that $X_0$ is non-auto-rotationally invariant, we remove the need for prior knowledge assumptions on the drift or diffusion, as well as restrictive model simplifications, such as irrotational drift and isotropic diffusion. 
To build on our theoretical contributions, we also introduce APPEX, a method that simultaneously learns drift and diffusion from temporal marginals. APPEX iterates towards the optimal parameter set by alternating between trajectory inference, via anisotropic entropy-regularized optimal transport, and MLE parameter estimation. Since each of these subprocedures are asymptotically optimal (Lemma \ref{lemma: APPEX_decrease_KL}), given a finite set of the process' true marginals, APPEX improves its estimates with each iteration. When tested on hundreds of randomly generated linear additive noise SDEs between dimensions $3$ to $10$, APPEX is able to effectively recover the process' drift, diffusion, and causal graph, while significantly outperforming Waddington-OT, which is one of the most popular inference methods in single-cell biology. 

Our theoretical and algorithmic contributions offer a number of insights to advance statistical inference methods from temporal marginals. First, our framework removes common model restrictions to expand the set of dynamics that can be inferred from marginals. This includes negative feedback loops and repressilator dynamics, which both arise from rotational drift, and correlated noise, which arises from anisotropic diffusion. These features are common in real-world datasets, and are especially relevant for accurate trajectory and GRN inference from single-cell data \citep{weinreb2018fundamental,santos2024learning}. Second, we demonstrate that joint identification of the unique drift and diffusion parameters is theoretically and practically possible in the ``marginals-only" setting. Previous and concurrent works either estimate only one parameter given the other \citep{weinreb2018fundamental,lavenant2021towards, chizat2022trajectory, shen2024learning,zhang2024joint,forrow2024consistent} or estimate both without guaranteeing non-identifiability \citep{terpin2024learning, brogat2024learning}. We note that even if a user is only interested in one parameter, APPEX offers the ability to estimate it, regardless of how the other parameter is initialized. This is particularly significant given that a misspecified diffusion typically leads to a misidentified drift, and vice-versa. \rv{Importantly, the identifiability conditions for linear additive noise SDEs are generic, since non-identifiability requires both a pathological initial distribution, which satisfies (weak) auto-rotational invariance, as well as pathologically aligned SDE parameters, which preserve the symmetry invariance across observed marginals. These theoretical insights provide a principled path for developing new and improved methods that can learn the full SDE purely from observational snapshots.}
% One can therefore check the initial distribution, and perform an intervention if needed, to guarantee system identifiability from marginals generated by a linear additive noise SDE. 

% APPEX therefore offers significant inference capability and flexibility compared to previous methods, since it can fully learn the dynamic system, without relying on prior knowledge. 

% , including all nondegenerate discrete r.v's, and all nondegenerate continuous r.v's without elliptic symmetry (Exponential, Uniform, Gamma, etc.)

% For example, APPEX can learn any linear drift regardless of the structure of the additive noise, whereas existing methods are strictly adapted to isotropic Brownian motion noise with a known diffusivity scale.

\subsection{Limitations and future work}
Although this paper presents several breakthroughs for the identification of SDEs from marginal snapshots, we acknowledge several limitations and opportunities for future work. 

% \rv{First, while Theorem \ref{thm: identifiability} shows that identifiability is guaranteed when the initial distribution $X_0$ is not auto-rotationally invariant, and that there exist non-identifiable SDEs if $X_0$ is auto-rotationally invariant, we note that this is distinct from the stronger claim that given a set of observed temporal marginals from an SDE, the true parameters are uniquely identifiable from the data if and only if $X_0$ is not auto-rotationally invariant.} For example, if $X_0$ is rotationally symmetric, e.g. $N(0, I_d)$, then it would be impossible to determine rotational drift. Similarly, if $X_0$ is rank-degenerate, then it would be impossible to distinguish between drifts that each contain $X_0$ in its nullspace. However, if the underlying SDE has rotational drift, and $X_0$ is degenerate, then the ensuing marginals likely identify the SDE parameters. Similarly, a full rank rotationally symmetric $X_0$ may identify the underlying SDE from the rank degeneracy counterexample. Importantly, the true SDE is generally unknown, so it is difficult to assess whether identifiability holds despite $X_0$ being auto-rotationally invariant. For this reason, the strong guarantee of identifiability under non-auto-rotationally invariant $X_0$ is practical.
% % we have not yet studied the relative effectiveness of different initial distributions for parameter estimation. This could be an interesting area for further exploration, since the initial distribution may, at times, be controllable.

First, our identifiability result is restricted to time homogeneous linear additive noise SDEs. Although this is a popular model for real-world processes, including for GRN inference \citep{rohbeck2024bicycle, zhang2024joint}, it would be beneficial to characterize identifiability conditions under more general models. We note that characterizing identifiability is particularly challenging for non-additive noise models, where the noise depends on the process $X_t$, since the same marginal observations can be explained by a nonlinear additive noise SDE and a linear multiplicative noise SDE \citep{coomer2022noise}. For this reason, it may be easier to consider identifiability for parameteric families of nonlinear drift under the additive noise model. We note that we have already proven results about causal discovery for additive noise SDEs in Section \ref{sec:causal_graph} and that APPEX is also formulated for general additive noise SDEs.  Although the MLE for nonlinear drift may not have a closed form solution, optimizers like BFGS \citep{lavenant2021towards, chizat2022trajectory, wang2024generator} and neural networks \citep{shen2024learning} have been successfully used for SDE parameter estimation on real and synthetic data. 

Second, our theory considers the infinite data setting, given the process' true marginals $\{p_i\}_{i=0}^{N-1}$, and we have not studied asymptotic convergence rates. In particular, our identifiability result was established given continuously observed marginals, which corresponds to obtaining infinite samples from infinite marginals. Similarly, the theory surrounding the optimality of APPEX assumes that we have infinite samples from a finite set of marginals. We leave the asymptotic analysis of APPEX for future work. Directions for future work include the sample complexity of entropic optimal transport for trajectory inference, which requires asymptotically more samples per marginal as we decrease the entropic regularization (decreasing time granularity in the SDE setting) \citep[Theorem 3]{genevay2019sample}, and the fact that the MLE of the AR(1) process from multiple trajectories is slightly biased \citep{ledolter2009estimation}. We note that APPEX could be combined with previous methods that relax the marginal constraints (see \citet{lavenant2021towards}). This could improve estimates in low data settings, particularly in the biological setting where taking measurements destroys samples. 

Third, even in the asymptotic setting, the nonconvex nature of APPEX's alternating projection optimization problem makes the prospect of proving convergence difficult. Although our experiments indicate that APPEX's estimates approach the true solution, regardless of the initial guesses for the drift $A$ and observational diffusion $H$, we have not yet proved convergence due to the non-convexity of the optimization space. A formal proof of convergence and consistency would further strengthen APPEX's theoretical foundation.

Finally, we note that the trajectory inference step of APPEX, which we solve using Sinkhorn's algorithm, is computationally intensive. When solving the standard EOT problem \eqref{eq: entropic_OT_problem}, the asymptotic computation of each iteration of Sinkhorn's algorithm is $O(\frac{N^2}{\epsilon^3})$, where $N$ is the number of samples, and $\epsilon$ is the scalar entropy regularization \citep{altschuler2017near}. The number of iterations required is further related to the average error, which decays like $O(\frac{1}{\sqrt{N\epsilon^{d}}})$ 
\citep{genevay2019sample}. Similar asymptotics hold for the AEOT problem \eqref{eq: generalized_entropic_OT_problem}. We note that accurate trajectory inference relies on a large number of samples ($N$), a small entropic regularization $\epsilon^2 \sim \delta t$, and potentially high-dimensional data. Our method is therefore impractical on very large datasets, which are prevalent in biological applications. It would therefore be advantageous to solve the associated Schr\"odinger bridge problem using more efficient methods. We note that recent work makes use of closed form solutions to the SB problem when $X_0$ is a degenerate Dirac distribution, to increase computational efficiency \citep{huang2024one}. We also note that the SB problem can be expressed via forwards and backwards drifts \citep{vargas2021solving}, and can therefore be solved via generative diffusion models \citep{de2021diffusion, shi2024diffusion}. Thus, when a closed form solution is not known, generative diffusion models may be more efficient solvers for the trajectory inference problem. Concurrent work estimates drift and diffusion using other methods, including JKO energy minimization \citep{terpin2024learning} and reproducing Hilbert kernel spaces to minimize distance to the Fokker-Planck solution \citep{brogat2024learning}. We have not yet compared the accuracy and computational efficiency of these methods against APPEX, and a systematic comparative analysis would be an interesting direction for future research.

\section*{Code Availability}
The Python code for reproducing the experimental results and figures is available at \url{https://github.com/guanton/APPEX}. Analogous code in R is available at \url{https://github.com/HydroML/X0isAllYouNeed}.

% Acknowledgements and Disclosure of Funding should go at the end, before appendices and references

\acks{The authors would like to thank United Therapeutics for supporting this research. GS and AW also acknowledge the support of the Burroughs Wellcome Fund.}

% Manual newpage inserted to improve layout of sample file - not
% needed in general before appendices/bibliography.

\newpage

\appendix

\section{Properties of the Linear Additive Noise SDE}
\label{sec:linear_additive_noise}
Let $X_t$ evolve according to a linear additive noise SDE \eqref{eq: linear_additive_noise_SDE}. Then its solution can be obtained using the integrating factor $e^{-At}$ \cite[Sec 4.3]{sarkka2019applied} and is given by \citep[Sec 4.3]{sarkka2019applied}: 
\begin{align}
X_t = e^{At}X_0 + \int_0^t e^{A(t-s)}GdW_s.
\end{align}
By the independent increments of $W_t$, we may interpret $X_t$ as being composed of two independent r.v.'s. The first term amounts to a transformation of the initial distribution $X_0 \to e^{At}X_0$ under the noiseless linear ODE. The second term is an Ito integral, which is in fact a mean-$0$ Gaussian with time-dependent variance $\Sigma_t = \int_0^t e^{A(t-s)}He^{A^\top (t-s)}ds$. Indeed, since the integrand is deterministic, Gaussianity follows from \citep[Theorem A.7]{oksendal2013stochastic}, and the mean and covariance respectively follow from the non-anticipating property and Ito's isometry.

\section{Non-identifiable linear additive noise SDEs}
\label{sec:Noniden_examples}

 We summarize three classical examples of non-identifiability  from the literature \citep{lavenant2021towards, shen2024multi, wang2024generator,hashimoto2016learning,weinreb2018fundamental}. In all examples, the processes $X_t$ and $Y_t$ share the same time marginals.

\begin{ex}[Starting at stationary distribution \citep{lavenant2021towards,shen2024multi}]
\label{eq: non_identifiability_same_stationary}
This example is motivated by the fact that multiple SDEs share the same stationary distribution. Hence, if $p_0$ is initialized at the stationary distribution, then these SDEs will be non-identifiable from their marginals. We provide a simple example.
\begin{align}
    \mathrm{d}X_t &= -X_t \: \mathrm{d}t+ \mathrm{d}W_t, &  X_0 \sim \mathcal{N}(0, \frac{1}{2})\\
    \mathrm{d}Y_t &= -10Y_t \: \mathrm{d}t+ \sqrt{10}\mathrm{d}W_t, & Y_0 \sim \mathcal{N}(0, \frac{1}{2})
\end{align}
Here, both SDEs have the same stationary distribution $\mathcal{N}(0, \frac{1}{2})$ despite having significantly different individual trajectories. Indeed, the stationary distribution of a $1$-dimensional $0$-mean Ornstein-Uhlenbeck (OU) process with drift $-\lambda X_t$ and diffusion $\sigma$ is  $\mathcal{N}(0, \frac{\sigma^2}{2\lambda})$, which depends only on the drift:diffusivity ratio $\lambda/\sigma^2$ \citep{lavenant2021towards}. For multivariate OU processes with drift $A$ and observational diffusion $H$, the stationary distribution $\mathcal{N}(0, \Sigma)$ depends only on the relationship $\Sigma A + A \Sigma = -H$ \citep{mandt2016variational}. Intuitively, the set of OU processes that share the same stationary distribution $\mathcal{N}(0, \Sigma)$ are equivalent up to rescaling time. We note that this example also generalizes easily to the case where one component of $X_0$ is a mean-$0$ Gaussian. 
\end{ex}

% Hence, if $X_0 \sim \mathcal{N}(0, \frac{1}{2})$, then $X_t$ and $Y_t$ are non-identifiable from one another when only observing the marginals. 

\begin{ex}[Rotation around process mean \citep{shen2024multi, hashimoto2016learning}]
\label{eq: non_identifiability_rotation}

In this example, the first SDE governing $X_t$ is $2$-dimensional Brownian motion and the second SDE governing $Y_t$ adds a divergence-free rotational vector field $(x,y) \to (y, -x)$ about $(0,0)$, which is undetectable if the initial distribution $p_0$ is rotationally invariant.
\begin{align}
    \mathrm{d}X_t &= \mathrm{d}W_t, &X_0 \sim \mathcal{N}(0, I_d)\\
    \mathrm{d}Y_t &= \begin{bmatrix} 0 & 1 \\ -1 & 0  \end{bmatrix}Y_t \: \mathrm{d}t
    + \mathrm{d}W_t, &  Y_0 \sim \mathcal{N}(0, I_d)
\end{align}
Hence, if $p_0$ is an isotropic distribution with mean $(0,0)$, then $X_t$ and $Y_t$ are non-identifiable from one another \citep{shen2024multi}. This can also be shown directly with the Fokker-Planck equation \eqref{eq: FP_linear_additive_noise}, since $\nabla \cdot (Ax) p(x,t) = \nabla p(x,t) \cdot Ax + p(x,t) \nabla \cdot A(x) =  \nabla p(x,t) \cdot Ax = 0$, if $p$ is parallel to the rotational vector field $Ax = \begin{bmatrix} 0 & 1 \\ -1 & 0  \end{bmatrix}x$.
\end{ex}
\begin{ex}[Degenerate rank \citep{wang2024generator}]
\label{eq: non_identifiability_wang} In this example motivated by \citet{wang2024generator}, trajectories (and therefore temporal marginals) of distinct SDEs are non-identifiable if the process is degenerate, limiting observations to a strict subspace of $\R^d$.
\begin{align}
    \mathrm{d}X_t &= \begin{bmatrix} 1 & 2 \\ 1 & 0  \end{bmatrix} X_t \: \mathrm{d}t
    +  \begin{bmatrix} 1 & 2 \\ -1 & -2  \end{bmatrix}\mathrm{d}W_t, &X_0 = \begin{bmatrix}
    1 \\ -1
    \end{bmatrix}\\
    \mathrm{d}Y_t &=  \begin{bmatrix} 1/3 & 4/3 \\ 2/3 & -1/3  \end{bmatrix} Y_t \: \mathrm{d}t
    + \begin{bmatrix} 1 & 2 \\ -1 & -2  \end{bmatrix}\mathrm{d}W_t, &Y_0 = \begin{bmatrix}
    1 \\ -1
    \end{bmatrix}
\end{align}

The drift matrices $\begin{bmatrix} 1 & 2 \\ 1 & 0  \end{bmatrix}, \begin{bmatrix} 1/3 & 4/3 \\ 2/3 & -1/3  \end{bmatrix}$ each have eigenvector $X_0 = \begin{bmatrix}
    1 \\ -1
    \end{bmatrix}$ with eigenvalue $-1$. Moreover, the diffusion is rank-degenerate with column space $span(\begin{bmatrix} 1 \\ -1
        
    \end{bmatrix})$. Thus, both SDEs will have identical behaviour along $span(\begin{bmatrix} 1 \\ -1
        
    \end{bmatrix})$ and are only differentiated by their behaviour elsewhere. Given $X_0 = \begin{bmatrix}
    1 \\ -1
    \end{bmatrix}$, the processes will stay within $span(\begin{bmatrix} 1 \\ -1
        
    \end{bmatrix})$ and evolve identically. We note that this non-identifiability holds even when we observe trajectories \citep{wang2024generator}, whereas the first two examples are identifiable from trajectories but not identifiable from marginals.\\
\end{ex}
% Theorem \ref{thm: identifiability} proves that identifiability in the ``marginals-only" setting is guaranteed for linear additive noise SDEs if $X_0$ is not auto-rotationally invariant. Figure \ref{fig:initialise_on_examples} demonstrates how non-auto-rotationally invariant initial distributions $X_0$ resolves these classical non-identifiable examples from the literature.

\begin{figure}[h!]
    \centering
    
    % First row with subfigure and caption on the right
    \begin{subfigure}[t]{\textwidth}
        \centering
        \begin{minipage}{0.6\textwidth}
            \centering
            \includegraphics[width=\textwidth]{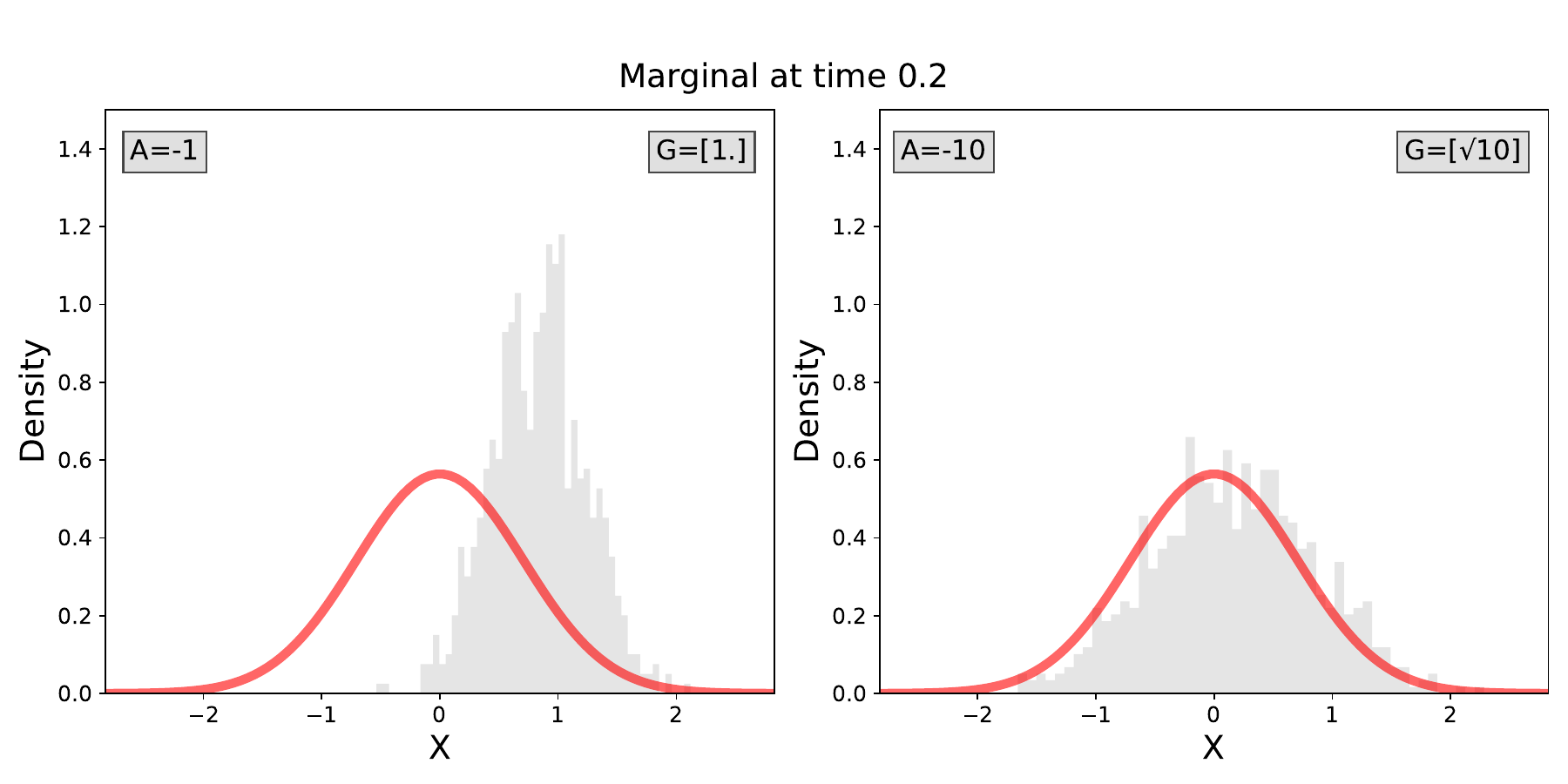}
        \end{minipage}%
        \begin{minipage}{0.4\textwidth}
            \vspace{-10pt} % Adjust vertical space to align
            \subcaption{This corresponds to Example~\ref{eq: non_identifiability_same_stationary} with $X_0 \sim \delta_1$. The SDE with drift $A = H = 10$ (right) has converged to the stationary distribution (red) at time $t = 0.2$, while the SDE with $A = H = 1$ (left) has not yet sufficiently drifted or diffused. }
        \end{minipage}
        \label{fig:intervene_ex1}
    \end{subfigure}

    % Second row with subfigure and caption on the right
    \begin{subfigure}[t]{\textwidth}
        \centering
        \begin{minipage}{0.6\textwidth}
            \centering
            \includegraphics[width=\textwidth]{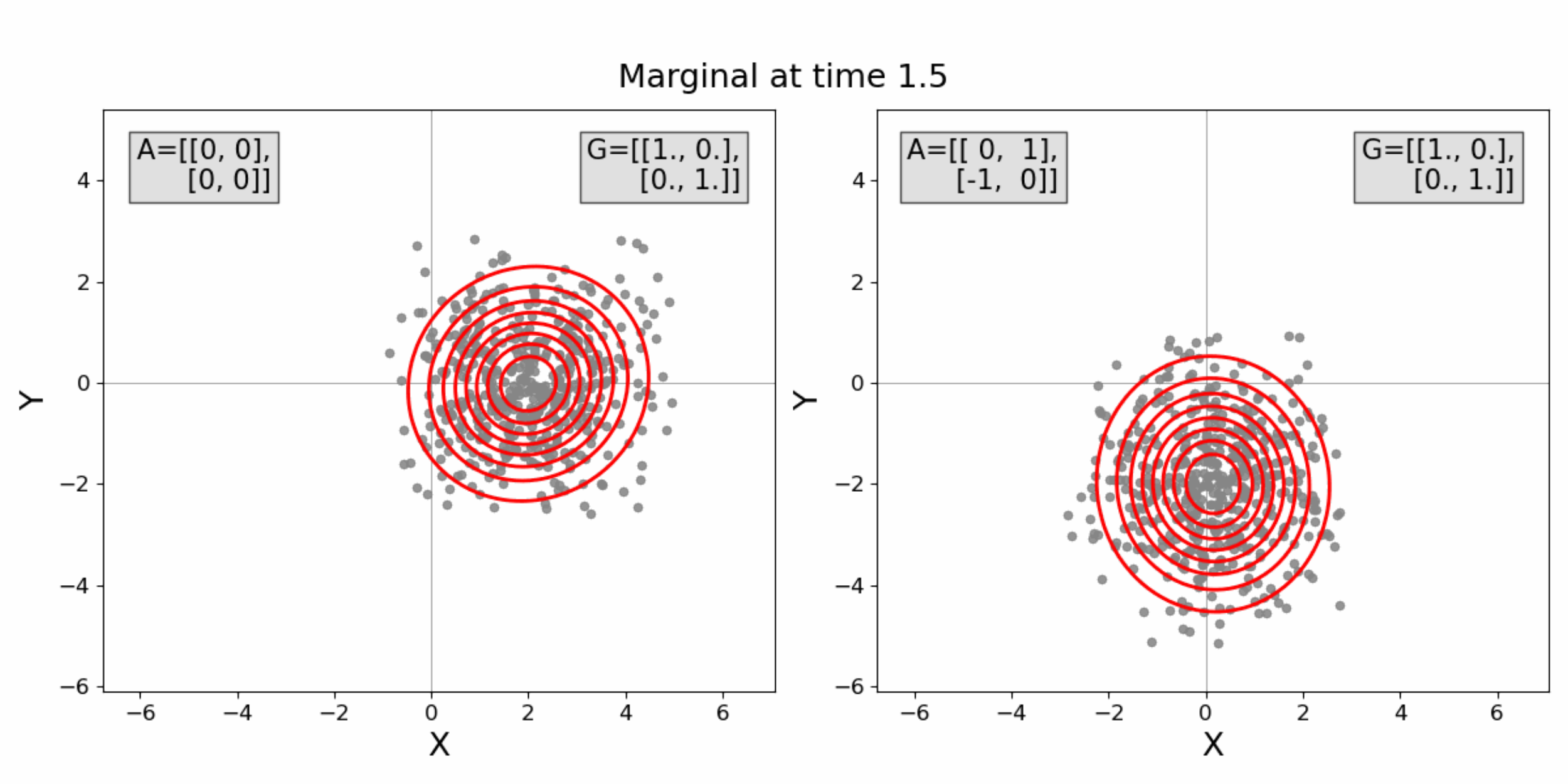}
        \end{minipage}%
        \begin{minipage}{0.4\textwidth}
            \vspace{-10pt} % Adjust vertical space to align
            \subcaption{This corresponds to Example~\ref{eq: non_identifiability_rotation}, with $X_0 \sim \text{Unif}\{ (2,0), (2,0.1)\}$. The first SDE is irrotational and maintains mean $(2,0.05)$. The second SDE (right) rotates about the origin. At time $t=1.5$, the mean has shifted to approximately $(0,-2)$.}
        \end{minipage}
        \label{fig:intervene_ex2}
    \end{subfigure}

    % Third row with subfigure and caption on the right
    \begin{subfigure}[t]{\textwidth}
        \centering
        \begin{minipage}{0.6\textwidth}
            \centering
            \includegraphics[width=\textwidth]{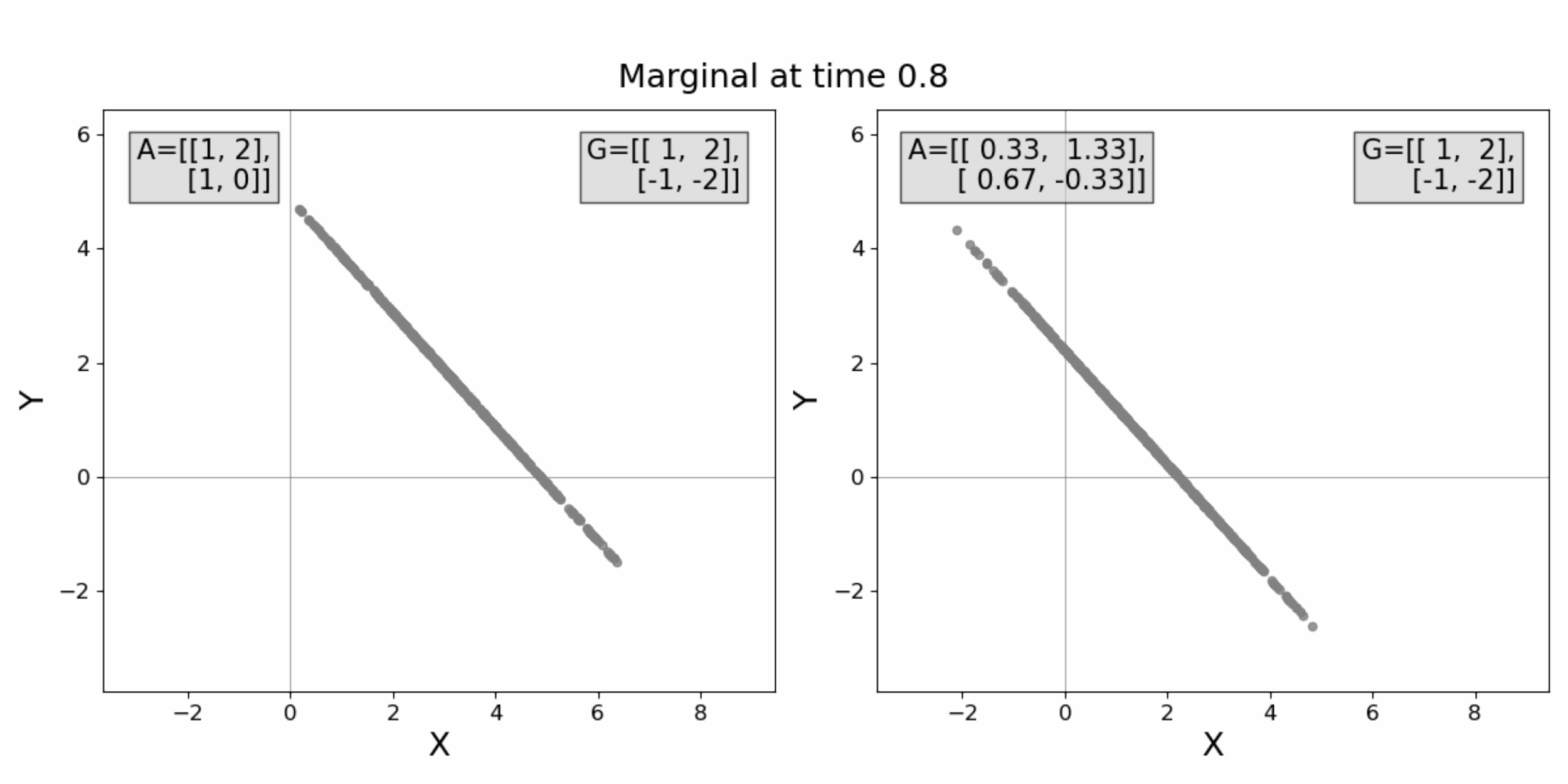}
        \end{minipage}%
        \begin{minipage}{0.4\textwidth}
            \vspace{-10pt} % Adjust vertical space to align
            \subcaption{This corresponds to Example~\ref{eq: non_identifiability_wang}, with $X_0\sim \text{Unif}\{(1,0)^\top, (0,1)^\top\}$. The first SDE (left) has eigenvalue $2$ for $(2,1)$, while the second (right) has eigenvalue $1$, leading to more extreme intercepts in the left plot at $t = 0.8$.}
        \end{minipage}
        \label{fig:intervene_ex3}
    \end{subfigure}

    \caption{Marginals are plotted at various times for the SDE pairs from Section \ref{sec:identify} of the Appendix, following various non-auto-rotationally invariant initial distributions.}
    \label{fig:initialise_on_examples}
\end{figure}

\subsection{Properties of auto-rotationally invariant r.v.s}
\label{sec: rotations_appendix}

\begin{lemma}
Let $X$ be a r.v. with covariance $\Sigma$ and suppose that $e^{A\theta}X \sim X$ for all $\theta \in \R$. 
Then, $A\Sigma + \Sigma A^\top  = 0$. 
\label{lem: lyapunov_condition}
\end{lemma}
\begin{proof}
We can prove the claim by considering $\theta \ge 0$, and let $t=\theta$. Let $C(t) = \cov(e^{At}X)$. $e^{At}X \sim X$ implies that for all $t \ge 0$, the covariance $C(t)$ stays constant, such that $C(t)= \cov(X) = \Sigma$. Directly computing $C(t)$ and its derivative yields
\begin{align*}
C(t) &= e^{At}\Sigma e^{A^\top t} = \Sigma\\
C'(t) &=  Ae^{At}\Sigma e^{A^\top t} + e^{At}\Sigma A^\top e^{A^\top t} = 0
\end{align*}
and we plug in $t=0$ to obtain
\begin{align*}
C'(0) &= A\Sigma + \Sigma A^\top   = 0.
\end{align*}
\end{proof}

\begin{lemma}
Let \(\Sigma \succ 0\) and \(A \in \R^{d \times d}\) be such that
\[
A\Sigma + \Sigma A^\top  = 0.
\]
Then, $B = \Sigma^{1/2} A\,\Sigma^{-1/2}$  is skew-symmetric, and for all \(t\in\R\),
\[
e^{At} = \Sigma^{-1/2}\,e^{B\theta}\,\Sigma^{1/2}.
\]
In other words, the $\Sigma$-generalized rotation \(e^{A\theta}\) produces a classical rotation \(e^{B\theta}\) after the change of variables \(y = \Sigma^{1/2}x\), which can be interpreted as a shearing and scaling transformation of $\R^d$.
\label{lem:conjugate_rotation}
\end{lemma}

\begin{proof}
Since \(\Sigma \succ 0\), the unique symmetric positive definite square root \(\Sigma^{1/2}\) exists and is invertible. Define
\[
B = \Sigma^{-1/2} A\,\Sigma^{1/2}, \qquad B^\top =  \Sigma^{1/2} A^\top\,\Sigma^{-1/2}.
\]
Then, since \(A\Sigma + \Sigma A^\top  = 0\), we can multiply on the left by \(\Sigma^{-1/2}\) and on the right by \(\Sigma^{-1/2}\) to yield
\[
0 = \Sigma^{-1/2} A\,\Sigma^{1/2} + \Sigma^{1/2}\,A^\top \,\Sigma^{-1/2} = B + B^\top,
\]
which proves that \(B\) is skew-symmetric, and therefore \(e^{B\theta}\) defines a classical rotation in \(\R^d\), such that
\[
e^{B\theta} = e^{\Sigma^{-1/2}A\Sigma^{1/2}}=\Sigma^{-1/2}e^{A\theta}\Sigma^{1/2},
\]
where the last equality can be seen by writing $e^{B\theta} = \sum_{n=0}^{\infty} \theta^n \frac{B^n}{n!}$ and noting that $B^n = (\Sigma^{-1/2}A\Sigma^{1/2})^n = \Sigma^{-1/2}A^n\Sigma^{1/2}$. Rearranging yields $e^{A\theta} = \Sigma^{-1/2}\,e^{B\theta}\,\Sigma^{1/2}$ as desired.

To explicitly see that the change of variable \(y = \Sigma^{1/2} x\) amounts to a shearing and scaling, note that any positive definite matrix admits an \(LDL^\top \) factorization:
\[
\Sigma = L\,D\,L^\top ,
\]
where \(L\) is unit lower-triangular (so its off-diagonal entries describe a shear transformation) and \(D\) is a diagonal matrix with positive entries (representing scaling along the coordinate axes). Thus, the transformation \(x\mapsto y=\Sigma^{1/2} x\) decomposes into a shear (via \(L\)) and a scaling (via \(D^{1/2}\)). 
\end{proof}

\rv{\begin{lemma}
\label{lemma:ARI_implies_generalized_rotation}
Let $X$ be weakly ARI with nondegenerate covariance $\cov(X) \succ 0$, such that $X$ is non-Gaussian in all components. Then, the weak ARI invariance relation 
\begin{align}
T_n^{X}(e^{B^\top t}v, \ldots, e^{B^\top t}v)&=T_n^{X_0}(v, \ldots, v) \qquad n \neq 2,
\end{align}
can only hold if $e^{Bt}$ is a generalized rotation (Definition \ref{def:generalized_rotation}), i.e. there exists a non-trivial $\Sigma \succeq 0$ with $B\Sigma + \Sigma B^\top = 0$.
\end{lemma}}
\begin{proof}
\rv{To prove that $e^{B t}$ is indeed a generalized rotation, as in Definition \ref{def:generalized_rotation}, we need to show that $B\Sigma + \Sigma B^\top = 0$ is satisfied for some $\Sigma \succeq 0$. Equivalently, $B$ is diagonalizable and similar to a skew-symmetric matrix in some subspace of $\R^d$. It suffices to consider the subspace spanned by the support of $X_0$, and to show that all eigenvalues of $B$ are purely imaginary or identically $0$ and that its Jordan decomposition does not have a nilpotent block.}

\rv{Below, we only consider vectors $v \in \R^d$ where the higher cumulants are active, i.e. $v \in V$, where
\begin{align}
    V = \text{span}\{v \in \R^d: \exists n \ge 3 \text{ s.t. } T_n(v, \ldots, v) \neq 0 \}
\end{align}
Note that since we assume that $X$ is non-Gaussian, then there exists $n \neq 2$ (in fact infinitely many) such that $T_n^{X}(v, \ldots, v) \neq 0$, unless $v^\top X = 0$, i.e. $v \in V^\perp$ implies that $X$ is degenerate when projected onto $v$.}

\rv{First, let $v \in \R^d$ be an eigenvector of $B$ and suppose for contradiction that $Bv = \lambda v$ with $\mu \coloneqq Re(\lambda) \neq 0$. Then, for all $t \ge 0$ and $n \neq 2$, we have
\begin{align*}
e^{n\mu t }e^{ni\nu t} T_n^{X}(v, \ldots, v)&=T_n^{X}(v, \ldots, v).
\end{align*}
However, if $\mu \neq 0$, then the LHS either vanishes to $0$ (if $\mu < 0$) or explodes to $\infty$ (if $\mu > 0$) as $t \to \infty$.} 

\rv{Now suppose $B$ has a nontrivial Jordan block. Working in Jordan 
coordinates $B = P(D + N)P^{-1}$ where $D$ is diagonal with purely 
imaginary entries (from the previous step) and $N$ is nilpotent 
upper-triangular, define $S_n(w, \ldots, w) := T_n(Pw, \ldots, Pw)$, 
which represents the $n$th cumulant of $w^\top P^\top X$. The 
invariance condition becomes
\begin{align}
    S_n(w, \ldots, w) = S_n(e^{Dt}e^{Nt}w, \ldots, e^{Dt}e^{Nt}w) 
    \qquad \forall\, n \ge 3,\; \forall\, t \ge 0.
    \label{eq:jordan_invariance}
\end{align}
Let $\hat{e}_i$ denote the $i$th standard basis vector in Jordan 
coordinates, and suppose the Jordan block starting at index $i$ has 
size $\ge 2$, so that $N\hat{e}_{i+1} = \hat{e}_i$ and 
$N\hat{e}_i = 0$. Since all eigenvalues of $D$ are purely imaginary, 
$|e^{D_{ii}t}| = 1$ for all $t$. Setting $w = \hat{e}_{i+1}$ in \eqref{eq:jordan_invariance}, we 
compute
\begin{align*}
    e^{(D+N)t}\hat{e}_{i+1} = e^{D_{i+1,i+1}t}(\hat{e}_{i+1} 
    + t\,\hat{e}_i) + O(t^2),
\end{align*}
where higher-order terms arise only if the block has size $\ge 3$. 
Substituting and using multilinearity:
\begin{align*}
    S_n(\hat{e}_{i+1}, \ldots, \hat{e}_{i+1}) 
    &= e^{nD_{i+1,i+1}t}\sum_{k=0}^{n}\binom{n}{k}t^k\, 
    S_n(\underbrace{\hat{e}_i, \ldots, \hat{e}_i}_{k}, 
    \underbrace{\hat{e}_{i+1}, \ldots, \hat{e}_{i+1}}_{n-k}).
\end{align*}
Since $|e^{nD_{i+1,i+1}t}| = 1$, the left side is bounded, while 
the right side is a polynomial in $t$ (other terms are bounded). For this to hold for all $t$, every coefficient with 
$k \ge 1$ must vanish. Taking $k = n$:
\begin{align*}
    S_n(\hat{e}_i, \ldots, \hat{e}_i) = 0 \qquad \forall\, n \ge 3,
\end{align*}
which implies $\hat{e}_i^\top P^\top X$ is Gaussian \citep{marcinkiewicz1939propriete}. Since $X$ is 
non-Gaussian in all components, the projection $\hat{e}_i^\top 
P^\top X$ is non-degenerate and non-Gaussian, giving a 
contradiction. Hence $N = 0$.}

\rv{We conclude that $B$ is diagonalizable with purely imaginary 
eigenvalues. Equivalently, there exists $\Sigma \succeq 0$ satisfying 
$B\Sigma + \Sigma B^\top = 0$.}

\end{proof}

% \begin{lemma}
% Given $A\Sigma + \Sigma A^\top  = 0$, it follows that
% \begin{align*}
%     (e^{At})^\top \Sigma e^{At} = \Sigma.
% \end{align*}
% \label{lemma: skew_sym_sigma_prop}
% \end{lemma}

% \begin{proof}
% Let $M(t) = (e^{At})^\top  \Sigma e^{At}$. Then,
% \begin{align*}
% \frac{d}{dt} M(t) &= (A e^{At})^\top  \Sigma e^{At} + (e^{At})^\top  \Sigma (A e^{At})\\
% &=e^{A^\top  t} (A^\top  \Sigma + \Sigma A) e^{At} = 0,
% \end{align*}
% where we used $A\Sigma + \Sigma A^\top  = 0$ in the last step. Hence, $M(t)= (e^{At})^\top \Sigma e^{At} = M(0) = \Sigma $
% \end{proof}

\begin{lemma}
\label{lemma:level_sets}
Let $\cov(X) = \Sigma \succ 0$ and suppose that $e^{At}X \sim X$ for some $A \neq 0$. Then $X$ admits a density, and the level sets of its probability density function are $\Sigma$-rotationally invariant ellipsoids:
\begin{align}\text{Let } E_\Sigma^{(k)} =\{x \in \R^d: x^\top \Sigma^{-1}x = k \}, \text{ then } \forall x,y \in E_\Sigma^{(k)}, p(x) = p(y) \label{ellipsoid_level_set}\end{align}
\end{lemma}
\begin{proof}
Since $e^{At}X \sim X$ with $A \neq 0$, it follows that $p(x) = p(e^{At}x)$ for all $x \in \R^d$. Since $e^{At}E_\Sigma^{(k)} = E_\Sigma^{(k)}$, the probability distribution of $X$ must be constant within each ellipsoid $E_\Sigma^{(k)}$.  Since $\Sigma \succ 0$, then each set $E_\Sigma^{(k)}$ is non-degenerate in $\R^d$, thus inducing  a probability density function on $X$.
% $d$-dimensional and $\Sigma$-rotationally invariant.
% This induces a probability density function.
\end{proof}

\begin{lemma}
Let $X$ be a $d$ dimensional r.v. such that $dim(span(X)) < d$ with probability $1$. Then, 
$$
e^{At}X \sim X
$$
admits a non-trivial solution $A \neq 0$.
\label{lemma: degenerate_X_isotropy}
\end{lemma}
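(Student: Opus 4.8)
The plan is to translate the degeneracy hypothesis into the statement that $X$ is almost surely supported on a \emph{deterministic} proper linear subspace, and then to build an explicit nonzero $A$ that fixes that subspace pointwise. First I would make the hypothesis precise. Among all linear subspaces $W \subseteq \R^d$ with $P(X \in W) = 1$, the collection is closed under (finite) intersection, since $P(X \in W_1) = P(X \in W_2) = 1$ implies $P(X \in W_1 \cap W_2) = 1$; because the dimension of a subspace of $\R^d$ can strictly decrease only finitely many times, there is a unique minimal such subspace $V$, and it again satisfies $P(X \in V) = 1$. The assumption $\dim(\mathrm{span}(X)) < d$ almost surely says exactly that $r := \dim V < d$, so that the orthogonal complement $V^\perp$ is nontrivial.

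Next I would construct the matrix. Pick any nonzero $w \in V^\perp$ and set $A = w w^\top$, so that $A \neq 0$. For every $x \in V$ we then have $Ax = w(w^\top x) = 0$ because $w \perp x$. Since $Ax = 0$ forces $A^k x = 0$ for all $k \ge 1$, the matrix exponential collapses to the identity on $V$:
\begin{align*}
e^{At} x = \sum_{k=0}^\infty \frac{t^k}{k!} A^k x = x \qquad \forall x \in V, \ \forall t \ge 0.
\end{align*}
Because $P(X \in V) = 1$, this gives $e^{At}X = X$ almost surely, and in particular $e^{At}X \sim X$ for all $t \ge 0$ with $A \neq 0$, which is precisely the desired nontrivial solution.

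I would close with a remark connecting the lemma back to Definition \ref{def:autorotationalinvariant}: the constructed $A$ is in fact a $\Sigma$-generalized rotation, so the lemma indeed certifies auto-rotational invariance. By Lemma \ref{lem: lyapunov_condition}, any $A$ with $e^{At}X \sim X$ automatically satisfies $A\Sigma + \Sigma A^\top = 0$; one can also verify this directly, since the range of $\Sigma = Cov(X)$ lies in $V$ (as $X$ and hence its mean and centered values are $V$-valued), whence $V^\perp \subseteq \ker \Sigma$ gives $\Sigma w = 0$ and therefore $A\Sigma = \Sigma A^\top = 0$.

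The only genuine subtlety — and thus the step I would treat most carefully — is the first one: justifying the existence of a single deterministic subspace $V$ carrying all the mass, rather than merely a random span of dimension below $d$. Once the minimal-subspace argument above is in place, every remaining step is elementary linear algebra, so I do not expect any further obstacle.
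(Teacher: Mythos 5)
Your proof is correct and takes essentially the same route as the paper's: both reduce the claim to finding a nonzero $A$ with $AX = 0$ almost surely, so that the matrix exponential $e^{At}$ acts as the identity on the support of $X$. Your version is in fact more careful than the paper's one-line invocation of rank--nullity, since you justify the existence of a deterministic proper subspace $V$ carrying all the mass and exhibit the explicit choice $A = ww^\top$ with $w \in V^\perp$.
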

\begin{proof}
We expand
$$
e^{At}X = X + \left(\sum_{n=1}^{\infty}\frac{t^n}{n!}A^n \right)X
$$
Hence, a sufficient condition for the claim is that $X$ is in the nullspace of $A$ with probability $1$. Since $dim(span(X))<d$ a.s., then by the rank-nullity theorem, there exists $A$ such that $AX = 0$ a.s.
\end{proof}

\begin{prop}
Let $X_0 \sim p_0$ be a discrete random variable. Then $X_0$ is not auto-rotationally invariant if and only if the support of $p_0$ spans $\R^d$.
\label{prop: discrete_auto_rotate}
\end{prop}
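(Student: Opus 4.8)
The plan is to prove the two directions separately, using Lemma~\ref{lemma: degenerate_X_isotropy} for the easy direction and a topological orbit argument for the hard one.

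First I would handle the direction that if the support of $p_0$ does \emph{not} span $\R^d$, then $X_0$ is auto-rotationally invariant. Since the support lies in a proper subspace, $\dim(\mathrm{span}(X_0)) < d$ almost surely, so Lemma~\ref{lemma: degenerate_X_isotropy} produces a nontrivial $A \neq 0$ with $AX_0 = 0$ a.s., whence $e^{At}X_0 = X_0 \sim X_0$ for all $t \ge 0$. A short computation shows this $A$ is a valid $\Sigma$-generalized rotation: $AX_0 = 0$ a.s. forces $A\mu = \E[AX_0] = 0$, hence $A\Sigma = A\,\E[(X_0-\mu)(X_0-\mu)^\top] = 0$, so $A\Sigma + \Sigma A^\top = 0$ (this also follows from Lemma~\ref{lem: lyapunov_condition}). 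Thus $e^{At}$ is a genuine nontrivial $\Sigma$-generalized rotation fixing the law of $X_0$, certifying auto-rotational invariance.

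For the converse I would argue by contraposition: assuming $X_0$ is auto-rotationally invariant, I show the support cannot span $\R^d$. Auto-rotational invariance yields some $A \neq 0$ with $e^{At}X_0 \sim X_0$ for all $t \ge 0$. Writing $S$ for the (countable) support of the discrete variable $X_0$, the distributional identity together with invertibility of $e^{At}$ forces $e^{At}(S) = S$ with masses preserved; in particular $e^{At}s \in S$ for every $s \in S$ and every $t \ge 0$. The key observation is then topological: for fixed $s$, the orbit $\{e^{At}s : t \ge 0\}$ is the continuous image of the connected interval $[0,\infty)$, hence a connected subset of $\R^d$, yet it is contained in the countable set $S$. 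Since any connected subset of $\R^d$ with more than one point is uncountable, the orbit must be the singleton $\{s\}$, i.e. $e^{At}s = s$ for all $t$; differentiating at $t=0$ gives $As = 0$ for every $s \in S$.

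Finally, $A$ annihilates every point of $S$, hence vanishes on $\mathrm{span}(S)$. Were $S$ to span $\R^d$, we would conclude $A = 0$, contradicting $A \neq 0$; therefore $\mathrm{span}(S) \subsetneq \R^d$, so the support does not span, completing the contrapositive. I expect the main obstacle to be the middle step: carefully justifying that distributional equality of a discrete variable under the invertible map $e^{At}$ pins the \emph{entire continuous orbit} inside the countable support, so that the connectedness-versus-countability dichotomy can be invoked to collapse each orbit to a point. Everything else reduces to Lemma~\ref{lemma: degenerate_X_isotropy}, Lemma~\ref{lem: lyapunov_condition}, and elementary linear algebra.
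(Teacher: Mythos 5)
Your proof is correct, and both directions are sound: the forward (degenerate) direction is exactly the paper's, via Lemma~\ref{lemma: degenerate_X_isotropy}, with the welcome extra check that the annihilating $A$ really is a $\Sigma$-generalized rotation. Where you genuinely diverge is the converse. The paper's own argument is a two-line sketch: since $e^{At}$ is continuous and the support is discrete and not annihilated by $A$, ``the support would shift for each $t>0$,'' hence $e^{At}X_0 \not\sim X_0$. Your topological orbit argument is a rigorous completion of that sketch, and it handles two subtleties the paper glosses over. First, for a \emph{fixed} $t$, a nontrivial $e^{At}$ can permute a discrete support onto itself (e.g.\ a rotation by $\pi$ on $\{(1,0),(-1,0)\}$), so ``the support shifts'' is not by itself a contradiction with $e^{At}X_0 \sim X_0$; your observation that invariance for \emph{all} $t\ge 0$ traps the entire connected orbit $\{e^{At}s : t\ge 0\}$ inside the countable atom set, forcing each orbit to collapse to a fixed point with $As=0$, is precisely what rules this out. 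Second, your countability-versus-connectedness dichotomy works even when the atoms of a countably supported $X_0$ accumulate, whereas the paper's phrasing implicitly treats support points as isolated. The trade-off is length: the paper's proof is shorter and conveys the right intuition, while yours is the version one would want if the proposition were to be stated for arbitrary (not just finitely supported) discrete laws.
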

\begin{proof}
Lemma \ref{lemma: degenerate_X_isotropy} provides the only if direction. 

Now, let $span(X_0)=\R^d$ a.s. Since \( e^{At} \) represents a continuous transformation, and \( X_0 \) is supported on a discrete set of points, which cannot all be mapped to $0$ by $A$, it follows that the support would shift for each $t>0$. We conclude that $e^{At}X_0 \not\sim X_0 $.
\end{proof}

\section{Additional proofs}

% \begin{lemma}
% Let $\Sigma \succeq 0$ be a symmetric positive semidefinite matrix and suppose that $A$ is skew-symmetric with respect to $\Sigma$, i.e.
% \begin{align*}
% A\Sigma + \Sigma A^\top  = 0,
% \end{align*}
% then without loss of generality, we may define $A$ with \( \text{Tr}(A) = 0 \).
% \label{lemma: lyapnuov_trace_0}
% \end{lemma}

% \begin{proof}
% \begin{align*}
% \text{Tr}(A\Sigma + \Sigma A^\top ) = \text{Tr}(0) = 0.
% \end{align*}
% Using the linearity of the trace and the cyclic property \( \text{Tr}(XY) = \text{Tr}(YX) \):
% \begin{align*}
% \text{Tr}(A\Sigma + \Sigma A^\top ) &= \text{Tr}(A\Sigma) + \text{Tr}(A^\top  \Sigma) \\
% &= \text{Tr}(A\Sigma) + \text{Tr}(\Sigma A) \\
% &= 2\text{Tr}(A\Sigma).
% \end{align*}
% Therefore,
% \begin{align*}
% 2\text{Tr}(A\Sigma) = 0 \quad \Rightarrow \quad \text{Tr}(A\Sigma) = 0.
% \end{align*}
% If $\Sigma \succ 0$, then we immediately conclude $Tr(A)=0$. Otherwise, we may pick $A$ such that $\Sigma_{i,i} = 0 \implies A_{i,i}=0$ to ensure that $Tr(A)=0$.

% \end{proof}

% \begin{cor}
% If $A$ is skew-symmetric with respect to $\Sigma$, and $\Sigma \succ 0$, then $Ax$ is a divergence-free vector field:
% $$\nabla \cdot Ax = 0$$
% \label{cor: zero_trace}
% \end{cor}
% \begin{proof}
% We know that 
% \begin{align*}
%     \nabla \cdot Ax &= \sum_{i=1}^{d} \frac{\partial}{\partial x_i}(A_i x) \\
%     &= \sum_{i=1}^{d} \frac{\partial}{\partial x_i}\sum_{i=1}^{d} (A_{i,j} x_j)\\
%     &= \sum_{i=1}^{d} A_{i,i} = Tr(A),
% \end{align*}
% which is $0$ by Lemma \ref{lemma: lyapnuov_trace_0}. 
% \end{proof}

\rv{\begin{lemma}
If the covariances of two linear additive noise SDEs match for all times, i.e. $\cov(X_t) = \cov(Y_t)$ $\forall $ $t \ge 0$, such that $X_t$ evolves with parameters $(A,H)$ and $Y_t$ evolves with parameters $(\tilde{A}, \tilde{H})$, then their residual parameters $\bar{A}=A-\tilde{A}$ and $\bar{H}=H-\tilde{H}$ obey the Lyapunov equation
\begin{align}
    \bar{A}C_t + C_t \bar{A}^\top + \bar{H} = 0.
    \label{eq: differential_Lyapunov}
\end{align}
Thus, for any distinct times $t_1 \neq t_2$, we have
\begin{align}
    \bar{A} (C_{t_2}-C_{t_1}) + (C_{t_2}-C_{t_1})  \bar{A}^\top = 0.
    \label{eq:cov_diff}
\end{align}
\label{lemma:non_iden_imply_diff_drift}
\end{lemma}
\begin{proof}
Recall that given SDE parameters $(A,H)$, the covariance $C_t = \cov(X_t)$ obeys
\begin{align}
    C_t = e^{At}C_0 e^{A^\top t} + \int_{0}^t e^{A(t-s)}He^{A^\top (t-s)}ds.
    \label{eq:covariance_of_lin_sde}
\end{align}
Thus, taking the derivative of the covariances of both systems at $t=0$ yields:
\begin{align}
\frac{d}{dt}C_t\bigg\rvert_{t=0}&= AC_0 + C_0 A^T + H 
\label{eq:deriv_cov_1}
\\
\frac{d}{dt}C_t\bigg\rvert_{t=0}&= \tilde{A}C_0+ C_0\tilde{A}^T + \tilde{H}.
\label{eq:deriv_cov_2}
\end{align}
Since the covariances of $X_t$ and $Y_t$ are equal at all time points, the derivative of the covariances are equal. Thus, subtracting yields the desired equation \eqref{eq:cov_diff}.
\end{proof}}
 % Note that rearranging to move $-\bar{H}$ to the other side shows that if $\bar{H} \neq 0$, then $\bar{A}=0$ would yield a contradiction. Intuitively, if $A=\tilde{A}$, and $H \neq \tilde{H}$, then the covariances of the two supposedly non-identifiable systems would diverge.

\rv{\begin{prop}
\label{prop:exhibit}
For any initial covariance $C_0 \in \mathrm{Sym}_d$, there exist parameters 
$(A^*, H^*)$ and distinct times $t_1, \ldots, t_D > 0$ 
(where $D = \frac{d(d+1)}{2}$) such that $\det(M(A^*, H^*)) \neq 0$.
\end{prop}}
\begin{proof}
\rv{Let $A^* = \mathrm{diag}(-2^1, -2^2, \ldots, -2^d)$. Since $A^*$ is diagonal, 
the covariance ODE decouples entry-by-entry:
\[
\dot{C}_{ij} = \mu_{ij}\, C_{ij} + H_{ij}, 
\qquad \mu_{ij} = \lambda_i + \lambda_j = -2^i - 2^j.
\]
The rates $\{\mu_{ij}\}_{i \leq j}$ are all distinct, since distinct subsets 
of $\{2^1, \ldots, 2^d\}$ have distinct sums. The scalar solution gives
\[
\Delta C_k^{(ij)} 
= C_{ij}(t_k) - C_{ij}(0) 
= \alpha_{ij}\bigl(e^{\mu_{ij}\, t_k} - 1\bigr),
\]
where $\alpha_{ij} = C_{ij}(0) + H_{ij}/\mu_{ij}$. Ordering the $D$ pairs 
$(i,j)$ with $i \leq j$ as $\ell = 1, \ldots, D$, the matrix $M$ factors as
\[
M = F \cdot \mathrm{diag}(\alpha_\ell)_{\ell=1}^D,
\]
where $F_{k\ell} = e^{\mu_\ell\, t_k} - 1$. Therefore,
\begin{equation}
\label{eq:det_factorization}
\det(M) = \det(F) \cdot \prod_{\ell=1}^{D} \alpha_\ell.
\end{equation}}

\rv{We show that both factors are nonzero. For the first factor, consider the $D+1$ functions $\{1, e^{\mu_1 t}, \ldots, e^{\mu_D t}\}$ 
evaluated at the $D+1$ distinct points $\{0, t_1, \ldots, t_D\}$.  
Since the exponents $0, \mu_1, \ldots, \mu_D$ are all distinct, 
these functions form a Chebyshev system 
\citep[Ch.~1]{karlin1966tchebycheff}, meaning that their evaluation 
matrix at any set of distinct points is non-singular. In particular, 
the $(D{+}1) \times (D{+}1)$ matrix
\[
\widetilde{V} = \begin{bmatrix}
1 & 1 & \cdots & 1 \\
1 & e^{\mu_1 t_1} & \cdots & e^{\mu_D t_1} \\
\vdots & \vdots & \ddots & \vdots \\
1 & e^{\mu_1 t_D} & \cdots & e^{\mu_D t_D}
\end{bmatrix}
\]
is non-singular. Subtracting the first row from each subsequent row gives
\[
\begin{bmatrix}
1 & 1 & \cdots & 1 \\
0 & e^{\mu_1 t_1}-1 & \cdots & e^{\mu_D t_1}-1 \\
\vdots & \vdots & \ddots & \vdots \\
0 & e^{\mu_1 t_D}-1 & \cdots & e^{\mu_D t_D}-1
\end{bmatrix},
\]
whose determinant equals $\det(F)$ by cofactor expansion along the first 
column. Since row operations preserve non-singularity, $\det(F) \neq 0$ 
for any choice of distinct $t_k > 0$.}

\rv{Then, to ensure that the second factor is nonzero, for any given $C_0$, we choose $H^*$ entry-wise to avoid the finitely many 
constraints $H_{ij} = -\mu_{ij}\, C_{ij}(0)$, ensuring 
$\alpha_\ell \neq 0$ for all $\ell$. By \eqref{eq:det_factorization}, $\det(M(A^*, H^*)) \neq 0$.}
\end{proof}

\section{SDEs and causality}
\label{sec: SDEs_causality}

% Under the assumption of independent driving noise, also known as independent integrators \citep{boeken2024dynamic}[Theorem 4], DSCMs observe graphical Markov properties under local independence \citep{didelez2008graphical, mogensen2020markov, mogensen2022graphical}. This is formally defined as  $\beta(j) \subset W$ and $\beta(i) \cap \beta(j) = \emptyset$ for all $i,j \in V$. 

Although nonzero entries in the observational diffusion $H = GG^\top$ inform the presence of latent confounders between pairs of variables, we note that it is impossible for $H$ alone to determine latent confounders over $p>2$ endogenous variables, $X_t^{(i_1)}, \cdots , X_t^{(i_p)}$.
% To see why, recall that correlated driving noise occurs precisely when there exists an index $k \in [m]$ such that $G_{i_1, k}, \cdots, G_{i_p, k} \neq 0$, since this would correspond to variables $X_t^{(i_1)}, \cdots , X_t^{(i_p)}$ sharing noise source $\epsilon_t^{(k)}$. 
Indeed, given only $H = GG^\top$, multiple causal interpretations may be possible, because structurally different matrices $G, \tilde{G}$ can obey $H = GG^\top = \tilde{G}\tilde{G}^\top$. 

\begin{ex}[Non-identifiability of higher order latent confounders]
\label{ex: 3_bidirected_vs_multiedge}
Consider two matrices $G_1, G_2$, which share the same observational diffusion $H$:
\begin{align*}
    G_1 = \begin{bmatrix} 0 & 1 & 1 \\ 1 & 0 & 1 \\ 1 & 1 & 0 
\end{bmatrix}, \quad
G_2 = \begin{bmatrix}
    \frac{4}{3} & \frac{1}{3} & \frac{1}{3}\\
    \frac{1}{3} & \frac{4}{3} & \frac{1}{3}\\
    \frac{1}{3} & \frac{1}{3} & \frac{4}{3}
\end{bmatrix}, \quad H=G_1G_1^\top  = G_2G_2^\top =\begin{bmatrix} 2 & 1 & 1 \\ 1 & 2 & 1 \\ 1 & 1 & 2 
\end{bmatrix}
\end{align*}
Given $G_1$ as the additive noise of an SDE, each pair of variables shares a noise source, since each row shares a nonzero column entry with another row. However, there is no common noise source that is shared among all three variables, since each column contains a $0$. In contrast, the causal interpretation under $G_2$ consists in a single latent confounder over the three variables, since they all components share noice sources.

% Hence, the causal interpretation under $G_1$ consists in three bidirected edges $1 \leftrightarrow 2$, $1 \leftrightarrow 3$, $2 \leftrightarrow 3$. In contrast, the causal interpretation under $G_2$ consists in a single multidirected edge $(1,2,3)$, since all components share noise sources. 
Since $G_1$ and $G_2$ admit different causal graphs, despite having the same observational diffusion $H$, this shows that $H$ provides information about the existence of unobserved confounders between pairs of variables, but cannot provide further causal structure about the confounder with respect to the other endogenous variables $[d] \setminus \{i, j\}$.
\end{ex}
Similarly, we can have $H_{i,j}=0$, while variables $X^{(i)}$ and $X^{(j)}$ share a noise source $W_t^{(k)}$.
\begin{ex}[Zero entries in observational diffusion can result from pairwise latent confounders]
\label{ex: cancellation_for_pairwise_latents}
\begin{align*}
G=\begin{bmatrix}
    1 & -1 \\
    1 & 1
\end{bmatrix} \implies H = \begin{bmatrix}
    2 & 0 \\
    0 & 2
\end{bmatrix}
\end{align*}
From $G$, $X^{(1)}$ and $X^{(2)}$ both depend on $W^{(1)}$ and $W^{(2)}$, however, this cannot be determined from the observational diffusion $H$, due to cancellations.
\end{ex}

We note that if the noise is not additive, then this further complicates the causal interpretation. In this case, the driving noise $\sigma(X_t)$ may be a function of the endogenous variables, i.e. $\beta(j) \cap V \neq \emptyset$. Thus, unlike the additive noise setting in Theorem \ref{prop: identify_causal_graph_SDE_add_noise}, directed edges $i \to j$ may be informed by the driving noise, via $\beta(j)$, rather than just the drift, via $\alpha(j)$. However, since only $\sigma(X_t)\sigma(X_t)^\top $ is observable rather than $\sigma(X_t)$ itself, observational data under such a model would lend itself to multiple interpretations of the causal graph. The idea is similar to Example \ref{ex: 3_bidirected_vs_multiedge}, where we saw one interpretation feature three pairwise latent confounders and another feature a single confounder over all variables, but under non-additive noise, the ambiguity extends to edges $i \to j$ within the endogenous set. This is illustrated in Example 5.5 in \citet{hansen2014causal}.

The causal framework also highlights the impracticality of common model assumptions within the scRNA-seq literature, namely isotropic diffusion $H=\sigma^2 \Id$ and gradient-field drift $\nabla \psi$. We have seen in Lemma \ref{prop: identify_causal_graph_SDE_add_noise} that latent confounders from shared noise can only be modeled with anisotropic diffusion. Moreover, while we expect GRNs to contain feedback loops, i.e. cycles in $\mathcal{G}$, imposing irrotational drift prevents most cycles from being considered. Lemma \ref{prop: irrotational_implies_symmetric} shows that imposing irrotationality on a linear model is equivalent to imposing a symmetric drift, which determines a symmetric causal graph. However, as illustrated in \citet[Fig. 6]{weinreb2018fundamental}, symmetric GRNs cannot capture important relationships, such as negative feedback loops or repressilator dynamics. Indeed, consider the $2$-cycle $i \leftrightarrow j$. By Lemma \ref{prop: identify_causal_graph_SDE_add_noise}, $i \to j$ if $A_{j,i} \neq 0$ and  $j \to i$ if $A_{i,j} \neq 0$. However, if the model is irrotational, then by Proposition \ref{prop: irrotational_implies_symmetric}, $A_{i,j} = A_{j,i}$, which can only model a positive feedback loop.

\begin{lemma}
\label{prop: irrotational_implies_symmetric}
Let $X_t$ evolve according to a linear additive noise SDE \eqref{eq: linear_additive_noise_SDE}. Then, the drift $AX_t$ is irrotational if and only if $A$ is symmetric, i.e. $A = A^\top$.
% Let there be a $2$=cycle in the causal graph $i \to j \to i$. Then, if the drift $AX_t$ is irrotational, we must have $A_{ji} = A_{ij}$. In other words, irrotational linear vector fields cannot model negative feedback loops.
\end{lemma}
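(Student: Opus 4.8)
The plan is to reduce everything to the standard characterization of irrotational (conservative) vector fields on the simply connected domain $\R^d$: a smooth field $F:\R^d \to \R^d$ is irrotational precisely when its Jacobian $DF$ is symmetric at every point, which on $\R^d$ is in turn equivalent to $F$ being the gradient of a scalar potential $\psi$. For the linear drift $F(x) = Ax$, the Jacobian is constant and equal to $A$, so the entire statement collapses to the elementary equivalence ``$A$ symmetric $\iff DF$ symmetric.''

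First I would prove the forward direction. Assume $Ax$ is irrotational, so there is a $C^2$ potential $\psi$ with $\nabla \psi(x) = Ax$. Differentiating once more, the Hessian satisfies $\nabla^2 \psi(x) = A$. Since the Hessian of a $C^2$ function is symmetric (equality of mixed partials, i.e. Clairaut's/Schwarz's theorem), we conclude $A = A^\top$. Equivalently, and without invoking a potential, irrotationality means the antisymmetric part of the Jacobian vanishes; since $DF = A$ for the linear field, this gives $\tfrac{1}{2}(A - A^\top) = 0$ directly.

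For the reverse direction I would assume $A = A^\top$ and exhibit an explicit potential. Taking $\psi(x) = \tfrac{1}{2}\, x^\top A x$, a direct computation yields $\nabla \psi(x) = \tfrac{1}{2}(A + A^\top)x = Ax$ by symmetry. Hence $Ax$ is a gradient field and is therefore irrotational, completing the equivalence.

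Because the vector field is linear with constant Jacobian, there is essentially no analytic obstacle: the only point that requires care is fixing a single precise definition of ``irrotational'' and noting that on the simply connected domain $\R^d$ the curl-free and gradient-field formulations coincide, so that the two characterizations may be used interchangeably throughout the argument.
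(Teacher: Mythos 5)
Your proposal is correct and follows essentially the same route as the paper: the reverse direction uses the same quadratic potential $\psi(x) = \tfrac{1}{2}x^\top A x$, and the forward direction rests on the same fact that a gradient field has a symmetric Jacobian (which for the linear field $Ax$ is just $A$), with your appeal to Schwarz's theorem merely making explicit what the paper cites from references. The only addition is your remark that curl-free and gradient formulations coincide on the simply connected domain $\R^d$, which is a sound clarification but not a different argument.
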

\begin{proof}
First, we note that for any symmetric matrix $A$, the vector field $Ax$ can be expressed as the gradient of the scalar potential given by the quadratic form $\phi(x) = x^\top A x/2$ \citep[(96)-(97)]{petersen2008matrix}. Hence, $Ax$ is irrotational if $A = A^\top$.

For the opposite direction, we recall the fact that any gradient field from $\R^n \to \R^n$ has a symmetric Jacobian \citep{williamson1974multivariable, campbell2014reduction}. Since the Jacobian of a linear vector field $Ax : \R^n \to \R^n$ is $A$, we conclude that all irrotational linear vector fields from $\R^n \to \R^n$ are given by symmetric matrices.
\end{proof}

\section{Entropy-regularized Optimal Transport}
\label{sec:EOT}
The entropy-regularized optimal transport problem also admits a dual formulation, in terms of finding a pair of  potentials $(f,g)$ with respect to the Gaussian transition kernel $K(x,y) \propto e^{\frac{c(x,y)}{\epsilon^2}}\propto e^{\frac{-\|y-x\|^2}{2\epsilon^2}}$ \citep{janati2021advances, nutz2021introduction}:
\begin{align}
\pi^*(x,y) &= e^{{f^*(x)+g^*(y)}}K(x,y) \, \mathrm{d}\mu(x)\mathrm{d}\nu(y)\\
f^*, g^* &= \sup_{f \in \mathcal{L}^1(\mu), g \in \mathcal{L}^1(\nu)} \E_{\mu}(f) + \E_{\nu}(g)- \left(\int_{\R^d \times \R^d} e^{f(x)+g(y)}K(x,y) \, \mathrm{d}\mu(x) \mathrm{d}\nu(y) - 1\right).
\label{eq: entropic_OT_dual_formulation}
\end{align}
In particular, given marginals $\mu,\nu$ and the transition kernel $K$, Sinkhorn's algorithm uses the dual formulation \eqref{eq: entropic_OT_dual_formulation} to find the $\epsilon$-entropy regularized optimal transport solution $\pi^*$, by solving for $f^*, g^*$ via iterative projections \citep{peyre2019computational}.

%\begin{remark}
%\label{rem:EOT change of reference measure}
%In \cite[Remark 4.5]{nutz2021introduction} it is computed that
%\[
%D_{KL}(\pi\mid\mu\otimes\nu)=D_{KL}(\pi\mid\mu^{\prime}\otimes\nu^{\prime})+D_{KL}(\mu\mid\mu^{\prime})+D_{KL}(\nu\mid\nu^{\prime})
%\]
%whenever $\pi\in\Pi(\mu,\nu)$. This shows that the EOT problem allows one to change the reference product measure in the KL penalization and the minimizer will be the same, as long as $D_{KL}(\mu\mid\mu^{\prime})+D_{KL}(\nu\mid\nu^{\prime})<\infty$.
%\end{remark}

\begin{definition}[Markov
concatenation of couplings]
\label{def:Markov_concatenation_couplings}
Given Polish spaces $X_{1},X_{2},X_{3}$
and couplings $\pi_{1,2}\in\mathcal{P}(X_{1}\times X_{2})$ and $\pi_{2,3}\in\mathcal{P}(X_{2}\times X_{3})$
with identical marginals $\mu_{2}$ on $X_{2}$, the \emph{Markov concatenation
}$\pi_{1,2}\circ\pi_{2,3}$ of $\pi_{1,2}$ and $\pi_{1,3}$ is a multi-coupling
in $\mathcal{P}(X_{1}\times X_{2}\times X_{3})$ given by 
\[
\pi_{1,2}\circ\pi_{2,3}(\mathrm{d}x_{1},\mathrm{d}x_{2},\mathrm{d}x_{3})=\pi_{1,2}(\mathrm{d}x_{1}\mid x_{2})\mu_{2}(\mathrm{d}x_{2})\pi_{2,3}(\mathrm{d}x_{3}\mid x_{2}).
\]
Here $\pi_{12}(\mathrm{d}x_{1}\mid x_{2})$ is the disintegration of $\pi_{12}$
with respect to $\mu_{2}$ and $\pi_{2,3}(\mathrm{d}x_{3}\mid x_{2})$ is the
disintegration of $\pi_{2,3}$ with respect to $\mu_{2}$. 
\end{definition}

The interpretation
of the Markov concatenation is as follows: a random ``trajectory'' according
$\pi_{12}\circ\pi_{23}$ corresponds to taking the first two steps
distributed according to $\pi_{12}$, then the third step distributed
according to ``$\pi_{23}$ conditional on the second marginal of
$\pi_{12}$''. The existence of the Markov concatenation is guaranteed
by the disintegration theorem, and Markov concatenations appear naturally
in the time-discretized version of trajectory inference via Schr\"odinger
bridges \citep{lavenant2021towards}. In particular, given Polish spaces $X_1,\ldots,X_4$ and couplings $\pi_{12},\pi_{23},\pi_{34}$, it holds that Markov concatenation is associative, and so we can unambiguously define the iterated Markov concatenation $\pi_{12}\circ\pi_{23}\circ\pi_{34}$ (see \citet{benamou2019entropy} Section 3.2).

\begin{prop}[Optimality of anisotropic entropy-regularized optimal transport]
Let the reference SDE be a linear additive noise SDE  with drift $A$ and diffusion $H$ \eqref{eq: linear_additive_noise_SDE}. Given a set of marginals $p_0,\ldots, p_{{N-1}}$ over times $\{t_i\}_{i=0}^{N-1}$ (not necessarily coming from the reference SDE) for which $D_{KL}(p_{i}\mid\text{Leb})<\infty$ for all $1\leq i\leq N-1$, and $D_{KL}(p_{i}\mid(p_{A,H})_{i})<\infty$ for all $1\leq i \leq N-2$, suppose there exists $\pi_{i,i+1}$, the anisotropic entropic OT solution (\ref{eq: generalized_entropic_OT_problem})  obtained by Sinkhorn's algorithm with marginals $\mu = p_i, \nu = p_{i+1}$ and transition kernel
\begin{align}
    K_{A,H}^{i}(x,y) \propto \exp(-\frac12 (y-e^{A (t_{i+1}-t_{i})}x)^\top(\Sigma_i)^{-1}(y-e^{A(t_{i+1}-t_{i})}x))
    \label{eq: gibbs_kernel_ap}
\end{align}
where $\Sigma_{i} = \int_{t_i}^{t_{i+1}} e^{A(t_{i+1}-s)} H e^{A^\top(t_{i+1}-s)} \, \mathrm{d}s$. Let $\pi$ denote the joint distribution given by the Markov concatenation of couplings (see Definition \ref{def:Markov_concatenation_couplings}):
\begin{align*}
    \pi = \pi_{0,1} \circ \ldots \circ \pi_{N-2,N-1},
\end{align*}
Then, $\pi$ minimizes relative entropy to the law of the reference SDE:
% \begin{align}
%     K_{i,j}^{(t_k)} &= p( X_{t_{k+1}} =x_{t_{k+1}}^{(j)} | X_t = x_{t_k}^{(i)}) \propto \exp(-\frac{1}{2}(x_{t_{k+1}}^{(j)}- e^{Adt}x_{t_k}^{(i)})^\top (\Sigma_{t_k})^{-1} (x_{t_{k+1}}^{(j)}- e^{Adt}x_{t_k}^{(i)}),
%     \label{eq: gibbs_kernel}
% \end{align}
% \begin{align*}
%     K_{i,j}^{(t_k)} = p(X_t = x_{t_k}^{(i)}, X_{t_{k+1}} = x_{t_{k+1}}^{(j)}, ) \propto \exp(-\frac{1}{2}(x_{t+1}^{(j)}-x_{t_k}^{(i)} - Ax_t^{(i)}dt)^\top (GG^\topdt)^{-1} (x_{t_{k+1}}^{(j)}-x_{t_k}^{(i)} - Ax_{t_k}^{(i)}dt)).
% \end{align*}
$$
\pi = \argmin_{\pi \in \Pi(p_{0}, \ldots, p_{N-1})} D_{KL}(\pi \| p_{A, H}^N),
$$
where $p_{A,H}^N$ is the law of the reference SDE, discretized over time points $t_0, \ldots, t_{N-1}$.
\label{prop: md_Sinkhorn_optimal}
\end{prop}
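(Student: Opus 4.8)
The plan is to exploit the fact that the reference law $p_{A,H}^N$ is the law of a Markov process restricted to the grid $t_0,\ldots,t_{N-1}$, so it factorizes as $p_{A,H}^N(x_0,\ldots,x_{N-1}) = (p_{A,H})_0(x_0)\prod_{i=0}^{N-2} K_{A,H}^i(x_i,x_{i+1})$, where $K_{A,H}^i$ is the transition kernel \eqref{eq: gibbs_kernel_ap}; this is precisely the Gaussian transition kernel of the linear additive noise SDE with covariance $\Sigma_i$, established in \eqref{eq: transition_linear_additive_noise_SDE}. The strategy is to decompose the relative entropy along the chain, show that the resulting multi-marginal problem decouples into independent pairwise problems over consecutive times, and identify each pairwise minimizer with the Sinkhorn coupling $\pi_{i,i+1}$.

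First I would reduce to Markov couplings. Given any $\pi\in\Pi(p_0,\ldots,p_{N-1})$, replacing $\pi$ by the Markov concatenation of its consecutive pairwise marginals preserves all pairwise marginals $\pi_{i,i+1}$ (hence all single marginals) while not increasing $D_{KL}(\,\cdot\,\|\,p_{A,H}^N)$, because the reference is itself Markov; so the minimizer may be taken Markov without loss of generality. For Markov $\pi$ I would then invoke the chain decomposition \eqref{eq:discretized_KL},
\begin{align*}
D_{KL}(\pi\,\|\,p_{A,H}^N)=\sum_{i=0}^{N-2}D_{KL}(\pi_{i,i+1}\,\|\,(p_{A,H})_{i,i+1})-\sum_{i=1}^{N-2}D_{KL}(\pi_i\,\|\,(p_{A,H})_i).
\end{align*}
Since the marginal constraints fix $\pi_i=p_i$, the subtracted terms equal $D_{KL}(p_i\,\|\,(p_{A,H})_i)$, which are constants, finite by hypothesis. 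Hence minimization reduces to minimizing $\sum_{i=0}^{N-2}D_{KL}(\pi_{i,i+1}\,\|\,(p_{A,H})_{i,i+1})$, and because the only link between consecutive summands is through the fixed shared marginals $p_i$, the pairwise problems decouple and can be solved independently.

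Next I would identify each pairwise minimizer. Minimizing $D_{KL}(\pi_{i,i+1}\,\|\,(p_{A,H})_{i,i+1})$ over $\Pi(p_i,p_{i+1})$ is a KL projection onto a fixed-marginal set. The reference used by Sinkhorn in \eqref{eq: generalized_entropic_OT_problem} is $\mathcal{K}^i$ with density proportional to $K_{A,H}^i(x,y)\,p_i(x)\,p_{i+1}(y)$, whereas $(p_{A,H})_{i,i+1}$ has density $(p_{A,H})_i(x)\,K_{A,H}^i(x,y)$. Their ratio is proportional to $p_i(x)p_{i+1}(y)/(p_{A,H})_i(x)$, a tensor product of a function of $x$ and a function of $y$; consequently, on the class $\Pi(p_i,p_{i+1})$ with fixed marginals, the two objectives $D_{KL}(\,\cdot\,\|\,\mathcal{K}^i)$ and $D_{KL}(\,\cdot\,\|\,(p_{A,H})_{i,i+1})$ differ only by an additive constant. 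Their minimizers therefore coincide and equal the Sinkhorn coupling $\pi_{i,i+1}$ characterized by \eqref{eq: sinkhorn_minimize_KL}. Concatenating these optimal pairwise couplings yields $\pi=\pi_{0,1}\circ\cdots\circ\pi_{N-2,N-1}$ as the global minimizer.

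The main obstacle is making the first two steps rigorous at the level of measures rather than densities: justifying that Markovianization does not increase the relative entropy and that the chain decomposition \eqref{eq:discretized_KL} holds with no $\infty-\infty$ ambiguity. This is exactly where the hypotheses enter — $D_{KL}(p_i\,\|\,\mathrm{Leb})<\infty$ controls the marginal entropies, and $D_{KL}(p_i\,\|\,(p_{A,H})_i)<\infty$ guarantees each subtracted term in the decomposition is finite, so that the decomposition is a genuine equality of finite quantities and the reduction to pairwise projections is valid. A secondary check is confirming that the SDE transition kernel is exactly the Gaussian $K_{A,H}^i$ with covariance $\Sigma_i$, which follows from the closed-form solution \eqref{eq: transition_linear_additive_noise_SDE}.
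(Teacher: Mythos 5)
Your proposal is correct and follows essentially the same route as the paper's proof: the same pairwise identification of the Sinkhorn couplings as KL projections onto $\Pi(p_i,p_{i+1})$ relative to $(p_{A,H})_{i,i+1}$ (your tensor-product factorization of the density ratio is exactly the paper's Radon--Nikodym computation, producing the additive constants $D_{KL}(p_i\,\|\,(p_{A,H})_i)$ and $D_{KL}(p_{i+1}\,\|\,\mathrm{Leb})$), combined with the same chain decomposition of relative entropy against the Markov reference (Lemma 3.4 of \cite{benamou2019entropy}, i.e.\ the decomposition \eqref{eq:discretized_KL}). The only notable difference is that you make explicit the Markovianization step reducing arbitrary couplings in $\Pi(p_0,\ldots,p_{N-1})$ to Markov ones, a point the paper delegates to the cited lemma.
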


\begin{proof}
For each $i=0, \ldots, N-1$, by  \eqref{eq: schrodinger_bridge}, we have $\pi_i = \argmin_{\pi \in \Pi(p_i, p_{i+1})}D_{KL}(\pi \| K_{A,H}^{i})$. Since {$K_{A,H}^{i} \sim \mathcal{N}(e^{A(t_{i+1}-t_{i})}X_t, \Sigma_{i})$} is the transition kernel of the SDE at time $t_i$ \citep[Sec 6.1]{sarkka2019applied}, this implies that $$\pi_i = \argmin_{\pi \in \Pi(p_i, p_{i+1})}D_{KL}(\pi \| (p_{A,H})_{i,i+1}),$$ where $ (p_{A,H})_{i,i+1}$ denotes the joint distribution of the marginals of $p_{A,H}$ at the two time points $t_i, t_{i+1}$. The details are as follows. Let $K_{i,i+1}(x,y)$ denote the transition kernel from time
$t_{i}$ to time $t_{i+1}$ for the SDE $\mathrm{d}X_{t}=AX_{t}+G\mathrm{d}W_{t}$.
In particular, if $ (p_{A,H})_{i}$ is the marginal of this SDE at time $t_{i}$,
then we have that $K_{i,i+1}(x,y)\mathrm{d}(p_{A,H})_{i}(x)\mathrm{d}y$ is equal to
the joint distribution $ (p_{A,H})_{i,i+1}$. Hence

\[
D_{KL}(\pi\mid(p_{A,H})_{i,i+1})=D_{KL}(\pi\mid K_{A,H}(x,y)d(p_{A,H})_{i}(x)dy).
\]
We claim that for any $\pi\in\Pi(p_{i},p_{i+1})$, 
\begin{align*}
D_{KL}(\pi\mid K_{A,H}(x,y)d(p_{A,H})_{i}dy) & =\int\log\left(\frac{d\pi}{d(K\cdot(p_{A,H})_{i}\otimes\text{Leb})}(x,y)\right)d\pi\\
 & =\int\log\left(\frac{d\pi}{d(K\cdot p_{i}\otimes dp_{i+1})}(x,y)\frac{d(K\cdot p_{i}\otimes dp_{i+1})}{d(K\cdot(p_{A,H})_{i}\otimes\text{Leb})}(x,y)\right)d\pi\\
 & =\int\log\left(\frac{d\pi}{d(K\cdot p_{i}\otimes dp_{i+1})}(x,y)\frac{dp_{i}}{d(p_{A,H})_{i}}(x)\frac{dp_{i+1}}{d\text{Leb}})(y)\right)d\pi\\
 & =D_{KL}(\pi\mid K_{A,H}(x,y)dp_{i}(x)dp_{i+1}(y))+D_{KL}(p_{i}\mid(p_{A,H})_{i})+D_{KL}(p_{i+1}\mid\text{Leb}).
\end{align*}
Note that $(p_{A,H})_{0}=p_{0}$.
Hence, under the assumption that for all $i=1,\ldots,N-1$, $D_{KL}(p_{i+1}\mid\text{Leb})<\infty$, the minimizers for the following two minimization problems are identical:
\[
\min_{\pi\in\Pi(p_{i}p_{i+1})}D_{KL}(\pi\mid(p_{A,H})_{i,i+1})\text{ and }\min_{\pi\in\Pi(p_{i}p_{i+1})}D_{KL}(\pi\mid K_{A,H}(x,y)dp_{i}(x)dp_{i+1}(y)).
\]

%Now,
%the ``chain rule'' for the relative entropy tells us that if $\gamma,\pi\in\mathcal{P}(X_{1}\times X_{2})$,
%and we write $\gamma_{1}$ and $\pi_{1}$ for their projections onto
%the first coordinate, and $\gamma(\cdot\mid x_{1})$ and $\pi(\cdot\mid x_{1})$
%for their disintegrations with respect to the first coordinate, we
%have 
%\[
%\text{KL}(\pi\mid\gamma)=\text{KL}(\pi_{1}\mid\gamma_{1})+\int_{X_{1}}\text{KL}(\pi(\cdot\mid x_{1})\mid\gamma(\cdot\mid x_{1})) \, \mathrm{d}\pi_{1}(x_{1}).
%\]
%Accordingly, we have that when $\gamma=K_{i,i+1}(x,y) \mathrm{d}(p_{A,H})_{i,i+1}(x) \mathrm{d}y$,
%\[
%\min_{\pi\in\Pi(p_{i},p_{i+1})}\text{KL}(\pi\mid\gamma)=\text{KL}(p_{i}\mid (p_{A,H})_{i})+\min_{\pi\in\Pi(p_{i},p_{i+1})}\int_{\mathbb{R}^{d}}\text{KL}(\pi(\cdot\mid x)\mid K_{i,i+1}(x,y)\mathrm{d}y) \, \mathrm{d}p_{i}(x)
%\]
%while at the same time, if we instead take $\gamma^{\prime}=K_{i,i+1}(x,y) \, \mathrm{d}x\mathrm{d}y$,
%we get 
%\[
%\min_{\pi\in\Pi(p_{i},p_{i+1})}\text{KL}(\pi\mid\gamma^{\prime})=\text{KL}(p_{i}\mid \mathrm{d}x)+\min_{\pi\in\Pi(p_{i},p_{i+1})}\int_{\mathbb{R}^{d}}\text{KL}(\pi(\cdot\mid x)\mid K_{i,i+1}(x,y)\mathrm{d}y) \, \mathrm{d}p_{i}(x).
%\]

Since $\pi$ is constructed as a Markov concatenation, the conclusion follows from Lemma 3.4 of \citet{benamou2019entropy}, which in this case tells us that: if $p_{A,H}^{N}$ is the projection
of the law of the SDE onto the set of times $\{t_{0},\ldots,t_{N-1}\}$,
then for any $N$-coupling $\pi$ which is constructed as a Markovian
concatenation $\pi_{0,1}\circ\ldots\pi_{N-2, N-1}$, and has $i$th marginal
equal to $p_{i}$, we have 
\[
D_{KL}(\pi\mid p_{A,H}^{N})=\sum_{i=0}^{N-2}D_{KL}(\pi_{i}\mid (p_{A,H})_{i,i+1}) - \sum_{i=1}^{N-2}D_{KL}(p_i \mid (p_{A,H})_i ).
\]
Hence minimizing over each $D_{KL}(\pi_{i,i+1}\mid (p_{A,H})_{i,i+1})$
(for $\pi\in\Pi(p_{i},p_{i+1})$) is equivalent to minimizing over
Markovian $\pi$'s with $i$th marginal $p_{i}$.

\end{proof}

The previous proposition assumes that $D_{KL}(p_{i}\mid\text{Leb})<\infty$ for all $i\neq 0$. The following computation shows that this assumption is satisfied if the diffusion matrix is non-degenerate. 

We note that $X_t \sim e^{At}X_0 + \int_{0}^{t} e^{A(t-s)}GdW_s$.
% where $e^{At}X_0  \indep \int_{0}^{t} e^{A(t-s)}GdW_s$, due to the independent increments of Brownian motion.
As $X_0$ is independent from the noise $W_s$, and $\int_{0}^{t} e^{A(t-s)}GdW_s \sim \mathcal{N}(0, \Sigma_t)$, where $\Sigma_t = \int_0^t e^{A(t-s)}He^{A^\top (t-s)}ds$, it follows that 
\begin{align*}
p_t = p_0 * \mathcal{N}(0, \Sigma_t).
\end{align*}
Thus, we may compute \cite[Theorem 9.4.1]{cover1999elements}
\begin{align*}
D_{KL} (\mathcal{N}(0, \Sigma_t) \mid Leb) = \frac12 \log(2\pi e)^d \det(\Sigma_t).
\end{align*}
From the definition of $\Sigma_t$, $\det(\Sigma_t) \neq 0$ as long as $H = GG^\top$ is full rank. Then, we use the fact that the $D_{KL}(\cdot \mid Leb)$ is convex, and apply Jensen's inequality for measures, to deduce that 
\begin{align*}
D_{KL}(p_0 * \mathcal{N}(0, \Sigma_t)\mid Leb) \leq \int D_{KL} (\mathcal{N}(x, \Sigma_t) \mid Leb) dp_0 (x) =  D_{KL} (\mathcal{N}(0, \Sigma_t) \mid Leb).
\end{align*}
This shows directly that $p_t$ has finite entropy whenever $t>0$ as desired.

\begin{remark}[Applying EOT for SDE trajectory inference]
\label{rmk: EOT_for_traj_inf}
In the standard $\epsilon$-regularized OT problem \eqref{eq: entropic_OT_problem}, the cost is the squared Euclidean distance, and $K \sim \mathcal{N}(x, \epsilon^2 \Id)$ is an isotropic Gaussian kernel. As noted in \citet{lavenant2021towards}, this implies that entropy regularized OT can be leveraged for trajectory inference from observed marginals, given a reference SDE. For example, to find the discretized law on paths $\pi^* \in \Pi(\mu, \nu)$ satisfying $P(X_t = \mu, X_{t+\Delta t} = \nu)$, which minimizes relative entropy to the law of a pure diffusion process $\mathrm{d}X_t = \sigma \mathrm{d}W_t$, one should set the entropic regularization $\epsilon^2 = \sigma^2 \Delta t$. This would correspond to minimizing the KL divergence to {$K(x,y)= e^{-\frac{\|y-x\|^2}{2 \sigma^2 \Delta t}} \sim \mathcal{N}(x, \sigma^2 \Delta t)$}, which is the transition kernel of the reference SDE. As done in \citet{zhang2024joint}, one can perform trajectory inference given an Ornstein-Uhlenbeck reference SDE $\mathrm{d}X_t = -A X_t \mathrm{d}t+ \sigma \mathrm{d}W_t$ by approximating the transition distribution $X_{t+\Delta t| t}$ via $K(x,y)= e^{-\frac{\|y - e^{A \Delta t} x\|^2}{2 \sigma^2 \Delta t}}$. This would correspond to reweighting the squared Euclidean cost, such that $c(x,y) = \|y-e^{A\Delta t}x\|/2$, and applying standard entropy regularized OT with $\epsilon^2 = \sigma^2\Delta t$. 
\end{remark}

\section{Maximum Likelihood Estimation}
\label{sec:mle}

\begin{proof}[Proof of Proposition \eqref{prop: MLE_SDE_params}] We follow the standard procedure for deriving maximum likelihood estimators \citep{peiris2003maximum, pavliotis2014stochastic}. Our likelihood function is given by
\begin{align*}
\mathcal{L} = \Pi_{i=0}^{N-2}\left(\Pi_{j=0}^{M-1}p(X_{{i+1}}^{(j)}|X_{i}^{(j)}) \right),
\end{align*}
where we have denoted $X_i = X_{t_i}$ for shorthand. As in \citet{pavliotis2014stochastic}, we consider the discretized law $X_{{i+1}}|X_{{i}} \sim \mathcal{N}(X_{i} + AX_i \Delta t, H \Delta t)$, which implies
\begin{align*}
p(X_{{i+1}}|X_{i}) = \frac{1}{(2\pi  \Delta t)^{d/2} det(H)^{1/2}}\exp\left(-\frac12(\Delta X_i^{(j)} - AX_{{i}}^{(j)} \Delta t)^\top (H \Delta t)^{-1}(\Delta X_i^{(j)} - AX_{{i}}^{(j)} \Delta t) \right),
\end{align*}
where $\Delta X_i^{(j)}= X_{{i+1}}^{(j)}-X_{{i}}^{(j)}$. Plugging this back into the likelihood expression yields
\begin{align*}
\mathcal{L} &= \frac{1}{(2\pi  \Delta t)^{\frac{dM(N-1)}{2}} det(H)^{\frac{M(N-1)}{2}}}\exp\left(-\sum_{i=0}^{N-2} \sum_{j=0}^{M-1}\frac12(\Delta X_i^{(j)} - AX_{{i}}^{(j)} \Delta t)^\top (H \Delta t)^{-1}(\Delta X_i^{(j)} - AX_{{i}}^{(j)} \Delta t)\right)\\
\log(\mathcal{L})&= -\frac{M(N-1)}{2}\left(d\log(2\pi  \Delta t) + \log(\det(H))\right)\\- &\frac{1}{2}\sum_{i=0}^{N-2}\sum_{j=0}^{M-1}\left( (\Delta X_i^{(j)} - AX_{{i}}^{(j)} \Delta t)^\top (H \Delta t)^{-1}(\Delta X_i^{(j)} - AX_{{i}}^{(j)} \Delta t) \right)
\end{align*}
We then derive the maximum likelihood estimators through matrix calculus \citep{petersen2008matrix}:
\begin{align*}
\frac{d \log(L)}{dA} &= -\frac{1}{2} \sum_{i=0}^{N-2} \sum_{j=0}^{M-1} \frac{d}{dA} \left( (\Delta X_i^{(j)} - AX_{{i}}^{(j)} \Delta t)^\top (H \Delta t)^{-1}(\Delta X_i^{(j)} - AX_{{i}}^{(j)} \Delta t) \right) \\
&= -\frac{1}{2} \sum_{i=0}^{N-2} \sum_{j=0}^{M-1}  - 2 \Delta t\frac{d}{dA}\left((\Delta X_{i}^{(j)})^\top  (H \Delta t)^{-1}AX_{i}^{(j)}\right) + \Delta t^2\frac{d}{dA}\left(X_{i}^{{(j)}^\top }A^\top (H \Delta t)^{-1}AX_{i}^{(j)}\right)\\
&= -\frac{1}{2} \sum_{i=0}^{N-2} \sum_{j=0}^{M-1}  - 2 \Delta t (H \Delta t)^{-1}(\Delta X_{i}^{(j)})  X_{i}^{{(j)}^\top } + 2 \Delta t^2(H \Delta t)^{-1}AX_{i}^{(j)}X_{i}^{{(j)}^\top } \\
&=  \sum_{i=0}^{N-2} \sum_{j=0}^{M-1}    (H)^{-1}(\Delta X_{i}^{(j)})  X_{i}^{{(j)}^\top } -  \Delta t(H)^{-1}AX_{i}^{(j)}X_{i}^{{(j)}^\top }
\end{align*}
We can solve for the MLE linear drift $A$ by setting $\frac{d\log(\mathcal{L})}{dA}=0$:
\begin{align*}
\sum_{i=0}^{N-2} \sum_{j=0}^{M-1}  \Delta t(H)^{-1}AX_{i}^{(j)}X_{i}^{{(j)}^\top }=\sum_{i=0}^{N-2} \sum_{j=0}^{M-1}   (H)^{-1}(\Delta X_{i}^{(j)})  X_{i}^{{(j)}^\top } \\
 \Delta t(H)^{-1}A\sum_{i=0}^{N-2} \sum_{j=0}^{M-1} X_{i}^{(j)}X_{i}^{{(j)}^\top } = (H)^{-1}\sum_{i=0}^{N-2} \sum_{j=0}^{M-1}\Delta X_{i}^{(j)}  X_{i}^{{(j)}^\top } \\
A = \frac{1}{ \Delta t}\left(\sum_{i=0}^{N-2} \sum_{j=0}^{M-1}\Delta X_{i}^{(j)}  X_{i}^{{(j)}^\top }\right)\left(\sum_{i=0}^{N-2} \sum_{j=0}^{M-1} X_{i}^{(j)}X_{i}^{{(j)}^\top }\right)^{-1}
\end{align*}

We now estimate the diffusion $H$. For simplicity, we work with $H^{-1}= (GG^\top )^{-1}$
\begin{align*}
\frac{d \log(L)}{dH^{-1}} &=\frac{d}{dH^{-1}}\left(\frac{M(N-1)}{2}\log(\det(H^{-1})) - \frac{1}{2 \Delta t}\sum_{i=0}^{N-2}\sum_{j=0}^{M-1}\left( (\Delta X_i^{(j)} - AX_{{i}}^{(j)} \Delta t)^\top H^{-1}(\Delta X_i^{(j)} - AX_{{i}}^{(j)} \Delta t) \right)\right)\\
&= \frac{M(N-1)}{2}H - \frac{1}{2 \Delta t}\sum_{i=0}^{N-2}\sum_{j=0}^{M-1} \left( (\Delta X_i^{(j)} - AX_{{i}}^{(j)} \Delta t)(\Delta X_i^{(j)} - AX_{{i}}^{(j)} \Delta t)^\top  \right)
\end{align*}

We can solve for the MLE additive noise $H$ by setting $\frac{d\log(\mathcal{L})}{dH^{-1}}=0$
\begin{align*}
    H &= \frac{1}{M(N-1) \Delta t}\sum_{i=0}^{N-2}\sum_{j=0}^{M-1} \left( (\Delta X_i^{(j)} - AX_{{i}}^{(j)} \Delta t)(\Delta X_i^{(j)} - AX_{{i}}^{(j)} \Delta t)^\top  \right)\\
    &= \frac{1}{MT}\sum_{i=0}^{N-2}\sum_{j=0}^{M-1} \left( (\Delta X_i^{(j)} - AX_{{i}}^{(j)} \Delta t)(\Delta X_i^{(j)} - AX_{{i}}^{(j)} \Delta t)^\top  \right).
\end{align*}
\end{proof}

\begin{cor}
Let $X_t$ evolve according to a general additive noise SDE \eqref{eq: additive_noise_SDE}. Then the log-likelihood function is given by
\begin{align*}
\mathcal{L} &= \frac{1}{(2\pi  \Delta t)^{\frac{dM(N-1)}{2}} det(H)^{\frac{M(N-1)}{2}}}\exp\left(-\sum_{i=0}^{N-2} \sum_{j=0}^{M-1}\frac12(\Delta X_i^{(j)} - b(X_{{i}}^{(j)}) \Delta t)^\top (H \Delta t)^{-1}(\Delta X_i^{(j)} - b(X_{{i}}^{(j)}) \Delta t)\right)\\
\log(\mathcal{L})&= -\frac{M(N-1)}{2}\left(d\log(2\pi  \Delta t) + \log(\det(H))\right)\\
&- \frac{1}{2}\sum_{i=0}^{N-2}\sum_{j=0}^{M-1}\left( (\Delta X_i^{(j)} - b(X_{{i}}^{(j)}) \Delta t)^\top (H \Delta t)^{-1}(\Delta X_i^{(j)} - b(X_{{i}}^{(j)}) \Delta t) \right)
\end{align*}
Hence, given that the drift function $b$ is parameterized by values $\alpha_b^{(k)}$, the maximum likelihood solution for the drift function $b$ obeys
\begin{align*}
0 = - \frac{1}{2}\sum_{i=0}^{N-2}\sum_{j=0}^{M-1}\frac{d}{d\alpha_b^{(k)}}\left( (\Delta X_i^{(j)} - b(X_{{i}}^{(j)}) \Delta t)^\top (H \Delta t)^{-1}(\Delta X_i^{(j)} - b(X_{{i}}^{(j)}) \Delta t) \right)
\end{align*}
The MLE for the diffusion $H$ admits the closed solution 
\begin{align*}
    H &= \frac{1}{MT}\sum_{i=0}^{N-1}\sum_{j=0}^{M-1} \left( (\Delta X_i^{(j)} - b(X_{{i}}^{(j)}) \Delta t)(\Delta X_i^{(j)} -b(X_{{i}}^{(j)}) \Delta t)^\top  \right).
\end{align*}
\label{cor: general_MLE_additive_Noise}
\end{cor}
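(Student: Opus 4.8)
The plan is to mirror the derivation of Proposition \ref{prop: MLE_SDE_params} in Section \ref{sec:mle}, replacing the linear drift $AX_i$ by the general drift $b(X_i)$ throughout. First I would write the likelihood as a product of transition densities over all $M$ trajectories and $N-1$ consecutive time gaps, using the discretized transition kernel $X_{i+1}\mid X_i \sim \mathcal{N}(X_i + b(X_i)\Delta t,\, H\Delta t)$ from Section \ref{sec:mathsetup}. Substituting this Gaussian density into $\mathcal{L} = \prod_{i=0}^{N-2}\prod_{j=0}^{M-1} p(X_{i+1}^{(j)}\mid X_i^{(j)})$ and taking logarithms immediately produces the stated log-likelihood, since the only change from \eqref{eq:likelihood_function_linear_additive_noise} is that the residual $\Delta X_i^{(j)} - AX_i^{(j)}\Delta t$ is replaced by $\Delta X_i^{(j)} - b(X_i^{(j)})\Delta t$, while the normalization constant and the $\log\det(H)$ term are untouched.

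For the diffusion estimator, I would observe that the derivation of $\hat H$ in Proposition \ref{prop: MLE_SDE_params} never uses linearity of the drift: when differentiating $\log\mathcal{L}$ with respect to $H^{-1}$, the residuals $r_i^{(j)} := \Delta X_i^{(j)} - b(X_i^{(j)})\Delta t$ enter only as fixed vectors through the quadratic form $\tfrac{1}{2\Delta t}\, r_i^{(j)\top} H^{-1} r_i^{(j)}$. Applying $\frac{d}{dH^{-1}}\log\det(H^{-1}) = H$ and $\frac{d}{dH^{-1}}\big(r^\top H^{-1} r\big) = rr^\top$ exactly as in the linear case, then setting the derivative to zero, yields the same closed form with $AX_i^{(j)}$ replaced by $b(X_i^{(j)})$. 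Hence the structure of the diffusion MLE is genuinely unchanged, which is the content of the corollary's final claim.

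For the drift, the difference is that $b$ is specified only through a parameter vector $\alpha_b^{(k)}$, so I would differentiate $\log\mathcal{L}$ with respect to each $\alpha_b^{(k)}$ by the chain rule. Since the normalization constant and the $\log\det(H)$ term are independent of $b$, only the quadratic exponent contributes, producing precisely the stated stationarity condition $0 = -\tfrac12\sum_{i,j}\frac{d}{d\alpha_b^{(k)}}\big(r_i^{(j)\top}(H\Delta t)^{-1} r_i^{(j)}\big)$.

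The essential difference from Proposition \ref{prop: MLE_SDE_params} — rather than a genuine obstacle — is that for a general nonlinear $b$ this first-order condition need not be solvable in closed form, so the drift estimator is left as an implicit stationarity equation rather than an explicit formula; only in the linear case $b(x) = Ax$ does the chain rule collapse to the closed-form expression \eqref{eq: mle_drift}. The diffusion estimate, by contrast, remains closed-form regardless of the drift family, which is exactly why the two-step MLE scheme of APPEX extends verbatim to parametric nonlinear drift.
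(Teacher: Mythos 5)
Your proposal is correct and follows essentially the same route as the paper: the corollary is obtained exactly by repeating the derivation of Proposition \ref{prop: MLE_SDE_params} with the discretized kernel $X_{i+1}\mid X_i \sim \mathcal{N}(X_i + b(X_i)\Delta t,\, H\Delta t)$, noting that the $H$-derivative step never uses linearity of the drift (so the diffusion MLE keeps its closed form), while the drift parameters only satisfy an implicit first-order condition. Your closing observation about why the drift equation cannot generally be solved in closed form, in contrast to the linear case, matches the paper's intent in stating the corollary.
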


\begin{prop}[\rv{Exact MLE for a $1$-d SDE from multiple trajectories}] 
Given $M$ trajectories over $N$ different times: $\{X_{t_i}^{(j)} : i \in 0, ..., N-1, \text{ } j \in 0, ..., M-1\}$ sampled from the linear additive noise SDE \eqref{eq: linear_additive_noise_SDE}, the maximum likelihood solution for linear drift is approximated by 

\begin{align}
    \hat{A}&= \frac{1}{ \Delta t} \log \left(\frac{\sum_{i=0}^{N-2}\sum_{j=0}^{M-1} X_{i+1}^{(j)}X_i^{(j)}  }{\sum_{i=0}^{N-2}\sum_{j=0}^{M-1} X_i^{{(j)}^2}} \right)
\end{align}
and the maximum likelihood solution for diffusion is approximated by

\begin{align}
\hat{\sigma}^2&= \frac{1}{M(N-1) \Delta t}\sum_{i=0}^{N-2}\sum_{j=0}^{M-1} (X_{i+1}^{(j)} - e^{A \Delta t}X_{{i}}^{(j)})^2
\end{align}
\label{thm: MLE_SDE_params_1D_exact}
\end{prop}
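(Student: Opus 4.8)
The plan is to mirror the derivation of Proposition \ref{prop: MLE_SDE_params}, but in the scalar setting and with the \emph{exact} transition mean $e^{A\Delta t}x$ in place of the Euler linearization $x + Ax\Delta t$. The key simplification that keeps a closed form available is to retain the \emph{approximate} transition variance $H\Delta t = \sigma^2\Delta t$ rather than the exact variance $\Sigma_{\Delta t}$ from \eqref{eq: transition_linear_additive_noise_SDE}; I will explain below why this choice is precisely what preserves explicitness.

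First I would write the likelihood of the $M$ independent trajectories as the product over transitions $\mathcal{L} = \prod_{i=0}^{N-2}\prod_{j=0}^{M-1} p(X_{i+1}^{(j)}\mid X_i^{(j)})$, where each factor is the scalar Gaussian density $p(X_{i+1}\mid X_i) = (2\pi\sigma^2\Delta t)^{-1/2}\exp\!\big(-(X_{i+1}-e^{A\Delta t}X_i)^2/(2\sigma^2\Delta t)\big)$. Taking logarithms yields $\log\mathcal{L} = -\tfrac{M(N-1)}{2}\log(2\pi\sigma^2\Delta t) - \tfrac{1}{2\sigma^2\Delta t}\sum_{i,j}(X_{i+1}^{(j)}-e^{A\Delta t}X_i^{(j)})^2$, which is exactly \eqref{eq:likelihood_function_linear_additive_noise} specialized to $d=1$ but carrying the exact mean.

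For the drift, I would introduce the substitution $\lambda = e^{A\Delta t}$ and observe that, for fixed $\sigma^2$, maximizing $\log\mathcal{L}$ over $\lambda$ reduces to the scalar least-squares problem $\min_\lambda \sum_{i,j}(X_{i+1}^{(j)}-\lambda X_i^{(j)})^2$, since the positive multiplicative factor $1/(2\sigma^2\Delta t)$ does not affect the argmin. This objective is strictly convex in $\lambda$ (its second derivative is $2\sum_{i,j}(X_i^{(j)})^2>0$), so setting its first derivative to zero gives the unique minimizer $\hat\lambda = \big(\sum_{i,j}X_{i+1}^{(j)}X_i^{(j)}\big)/\big(\sum_{i,j}(X_i^{(j)})^2\big)$, and inverting $\lambda = e^{A\Delta t}$ produces the claimed $\hat A = \tfrac{1}{\Delta t}\log\hat\lambda$. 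For the diffusion, I would differentiate $\log\mathcal{L}$ with respect to $\sigma^2$, obtaining $-\tfrac{M(N-1)}{2\sigma^2} + \tfrac{1}{2\sigma^4\Delta t}\sum_{i,j}(X_{i+1}^{(j)}-e^{A\Delta t}X_i^{(j)})^2$; setting this to zero and solving yields $\hat\sigma^2 = \tfrac{1}{M(N-1)\Delta t}\sum_{i,j}(X_{i+1}^{(j)}-e^{A\Delta t}X_i^{(j)})^2$. Since $\log\mathcal{L}\to-\infty$ as $\sigma^2\to 0^+$ or $\sigma^2\to\infty$, this unique stationary point is the global maximizer. As in the remark following Proposition \ref{prop: MLE_SDE_params}, the drift estimate is independent of $\sigma^2$, so the two estimators decouple and may be computed sequentially.

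The main obstacle, and the reason the statement reads ``approximated,'' is the treatment of the transition variance. The exact kernel \eqref{eq: transition_linear_additive_noise_SDE} has variance $\Sigma_{\Delta t} = \sigma^2(e^{2A\Delta t}-1)/(2A)$, which itself depends on $A$. Had I retained this exact variance, the parameter $A$ would appear simultaneously in the Gaussian mean, in the quadratic penalty $1/(2\Sigma_{\Delta t})$, and in the $\log\det$ normalizer, coupling drift and diffusion estimation into a transcendental stationarity condition with no closed form. The content of the proposition is therefore that combining the exact mean with the first-order variance $\sigma^2\Delta t$ preserves both the clean least-squares structure in $\lambda = e^{A\Delta t}$ and the decoupling of $\hat A$ from $\hat\sigma^2$; confirming that this is the only approximation incurred, and that it vanishes as $\Delta t\to 0$ because $\Sigma_{\Delta t} = \sigma^2\Delta t + O(\Delta t^2)$, is the crux of the argument.
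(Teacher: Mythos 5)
Your proposal is correct and follows essentially the same route as the paper: both write the transition likelihood with the exact conditional mean $e^{A\Delta t}X_i$ but the first-order variance $\sigma^2\Delta t$, set the stationarity conditions (yours via the substitution $\lambda = e^{A\Delta t}$, the paper by differentiating in $A$ directly, which gives the identical equation $e^{2A\Delta t}\sum X_i^2 = e^{A\Delta t}\sum X_{i+1}X_i$), and then solve for $\hat\sigma^2$ in closed form. Your added remarks — strict convexity in $\lambda$, the decoupling of $\hat A$ from $\sigma^2$, and the observation that retaining the exact variance $\sigma^2(e^{2A\Delta t}-1)/(2A)$ would destroy the closed form, which is precisely why the statement says ``approximated'' — are sound and slightly more thorough than the paper's own derivation.
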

\begin{proof}
The exact log-likelihood for the one dimensional case is
\begin{align*}
\log(\mathcal{L})&= -\frac{M(N-1)}{2}\left(\log(2\pi  \Delta t) + \log(\sigma^2)\right)- \frac{1}{2\sigma^2 \Delta t}\sum_{i=0}^{N-2}\sum_{j=0}^{M-1}(X_{i+1}^{(j)} - e^{A \Delta t}X_{{i}}^{(j)})^2.
\end{align*}
To solve for $\hat{A}$, we compute
\begin{align*}
0 = \frac{\partial \log(\mathcal{L})}{\partial A} &\propto \sum_{i=0}^{N-2}\sum_{j=0}^{M-1} \frac{\partial}{\partial A}\left(-2X_{i+1}^{(j)}X_i^{(j)}e^{A \Delta t} + e^{2A \Delta t}X_i^{{(j)}^2} \right)\\
e^{2A \Delta t}\sum_{i=0}^{N-2}\sum_{j=0}^{M-1} X_i^{{(j)}^2}&=e^{A \Delta t}\sum_{i=0}^{N-2}\sum_{j=0}^{M-1} X_{i+1}^{(j)}X_i^{(j)}  \\
 e^{A \Delta t} &= \frac{\sum_{i=0}^{N-2}\sum_{j=0}^{M-1} X_{i+1}^{(j)} X_i^{(j)} }{\sum_{i=0}^{N-2}\sum_{j=0}^{M-1} X_i^{{(j)}^2}}\\
 A &= \frac{1}{ \Delta t}\log\left(\frac{\sum_{i=0}^{N-2}\sum_{j=0}^{M-1} X_{i+1}^{(j)}X_i^{(j)}  }{\sum_{i=0}^{N-2}\sum_{j=0}^{M-1} X_i^{{(j)}^2}} \right)
\end{align*}
Similarly, to solve for $\hat{\sigma}^2$, we compute
\begin{align*}
0 = \frac{\partial \log(\mathcal{L})}{\partial \sigma^2} &\propto -\frac{M(N-1)}{2\sigma^2}+\frac{1}{2 \Delta t}\frac{1}{(\sigma^2)^2}\sum_{i=0}^{N-2}\sum_{j=0}^{M-1} (X_{i+1}^{(j)} - e^{A \Delta t}X_{{i}}^{(j)})^2\\
\frac{M(N-1)}{2} &= \frac{1}{2 \Delta t\sigma^2}\sum_{i=0}^{N-2}\sum_{j=0}^{M-1} (X_{i+1}^{(j)} - e^{A \Delta t}X_{{i}}^{(j)})^2\\
\sigma^2 &= \frac{1}{M(N-1) \Delta t}\sum_{i=0}^{N-2}\sum_{j=0}^{M-1} (X_{i+1}^{(j)} - e^{A \Delta t}X_{{i}}^{(j)})^2
\end{align*}
\end{proof}

\begin{remark} 
Previous works have predominately focused on the case of observing a single long trajectory rather than a collection of short trajectories. Suppose that one observes a set of $N$ observed trajectories, then drift estimation may also be performed by averaging the classical MLE estimator for a single trajectory across observations:  $\E_N[\hat{A}_T] = \frac{1}{N} \frac{\int_0^T X_t \, \mathrm{d}X_t}{\int_0^T X_t^2 \, \mathrm{d}t}$. Note that this is distinct from the MLE estimator that we derived for multiple trajectories in \eqref{eq: mle_drift}. Indeed, we observe that the latter estimator converges at a much faster rate than the averaged classical estimator.
% \begin{figure}[h!]
%     \centering
%     \includegraphics[width=0.5\linewidth]{mse_plot_2024-08-10_num_trajectories_from_unkilled_seed-45_X0-ones_d-1_n_sdes-10_dt-0.02_N-2000_T-1.0_A.png}
%     \caption{The mean squared error of the classical MLE drift estimator (Classical) converges at a slower rate compared to the MLE drift estimator \eqref{eq: mle_drift} derived for multiple trajectories (Trajectory).}
%     \label{fig:classical_vs_trajectory}
% \end{figure}
\end{remark}

\section{Consistency experiments}
\label{sec:consistency}

We present the results of additional experiments verifying the consistency of our method, as we increase the number of samples $N$ per marginal.
\begin{figure}[h!]
    \centering
    \begin{subfigure}{0.41\textwidth}
        \centering
        \includegraphics[width=\textwidth]{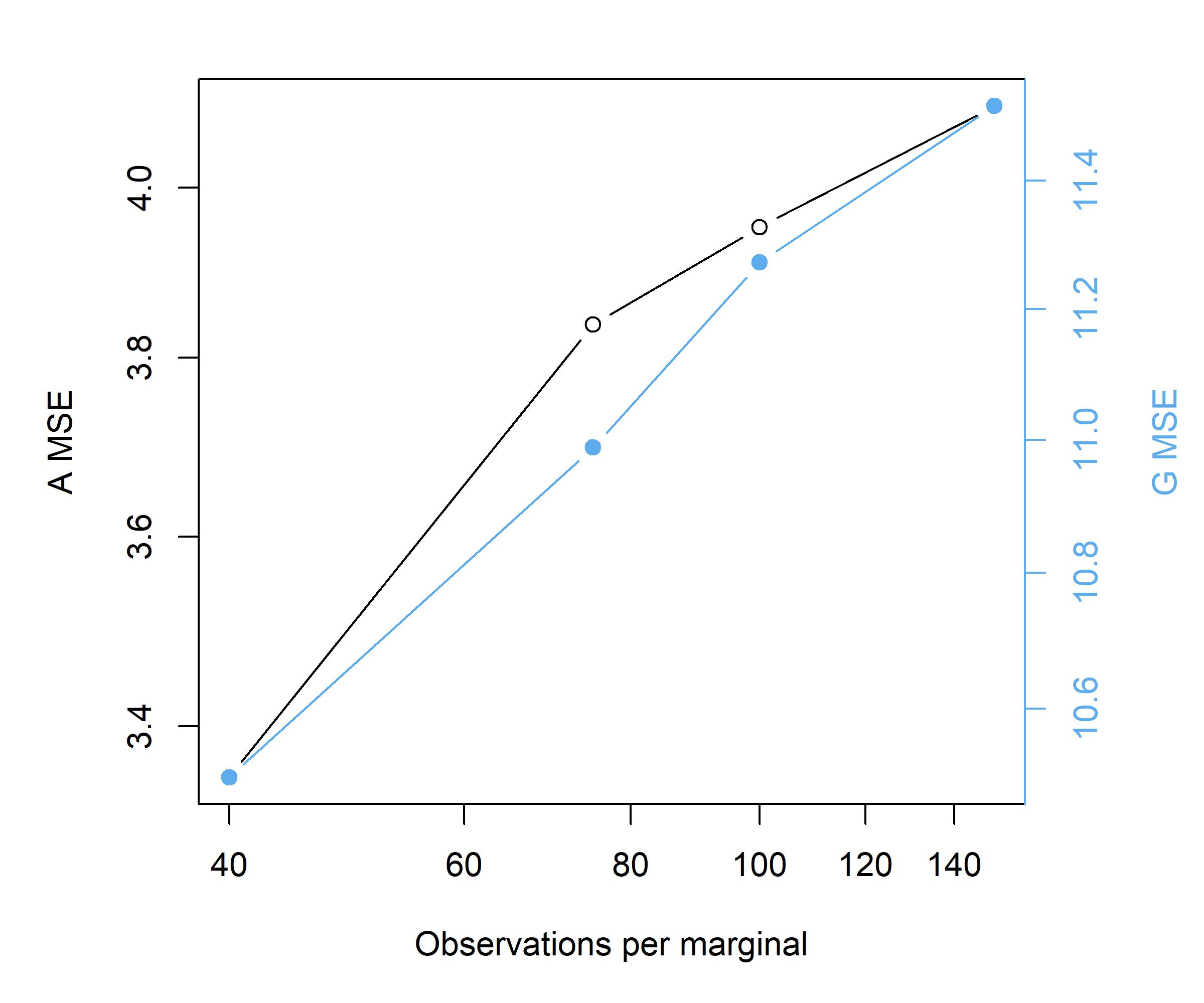}
        \caption{WOT}
    \end{subfigure}
    \begin{subfigure}{0.41\textwidth}
        \centering
        \includegraphics[width=\textwidth]{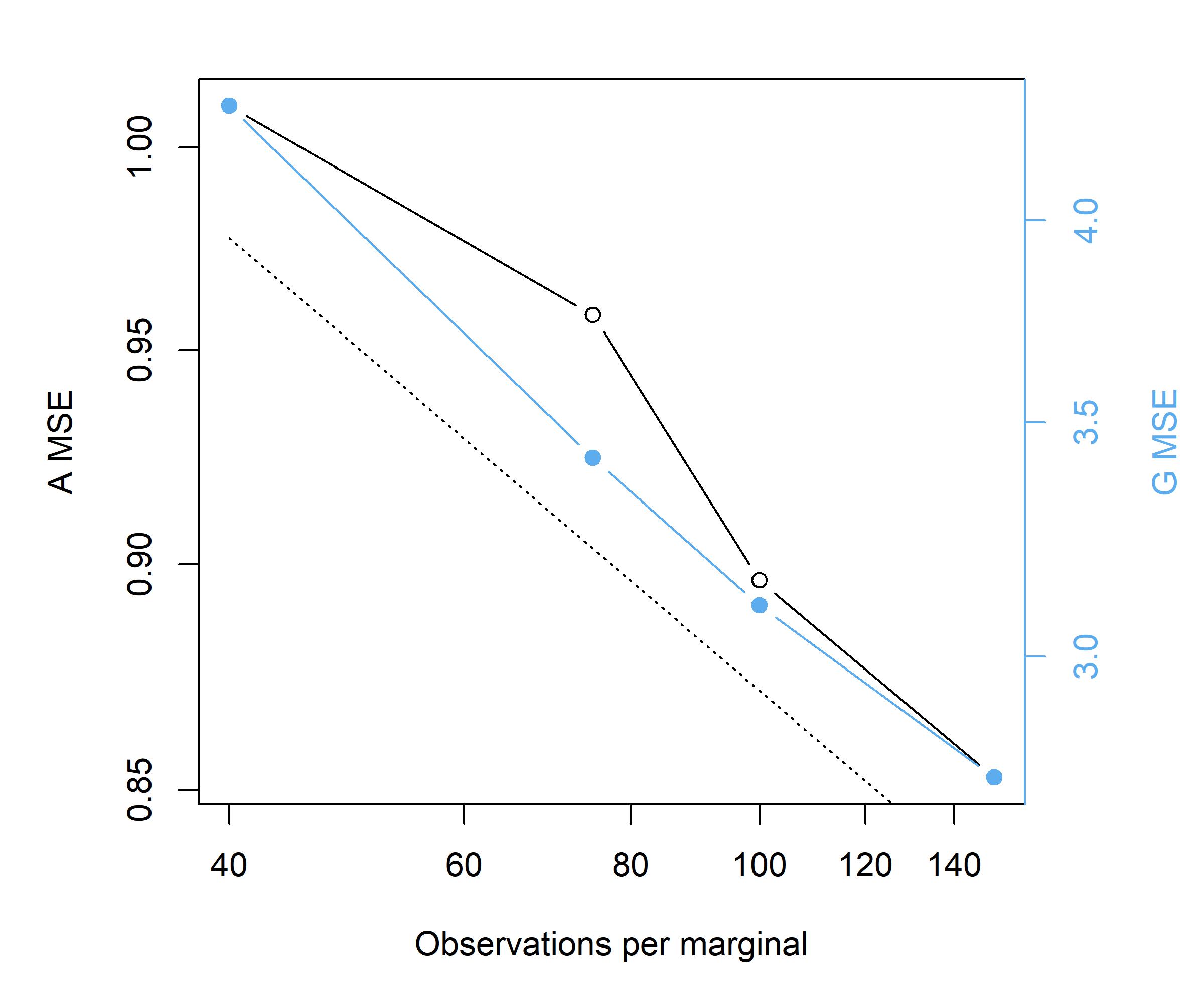}
        \caption{APPEX}
    \end{subfigure}
    \caption{Average mean squared error (MSE) of \( A \) and \( GG^\top \) as the number of observations per marginal increases across 50 random 2d systems. Note that this is a log-log plot and the dotted line in the APPEX plot shows that the convergence follows a power law relationship with exponent \(-0.125\). The number of observed marginals is fixed at $20$.}
    \label{fig:convergence}
\end{figure}

\begin{figure}[h!]
    \centering
    \begin{subfigure}{0.41\textwidth}
        \centering
        \includegraphics[width=\textwidth]{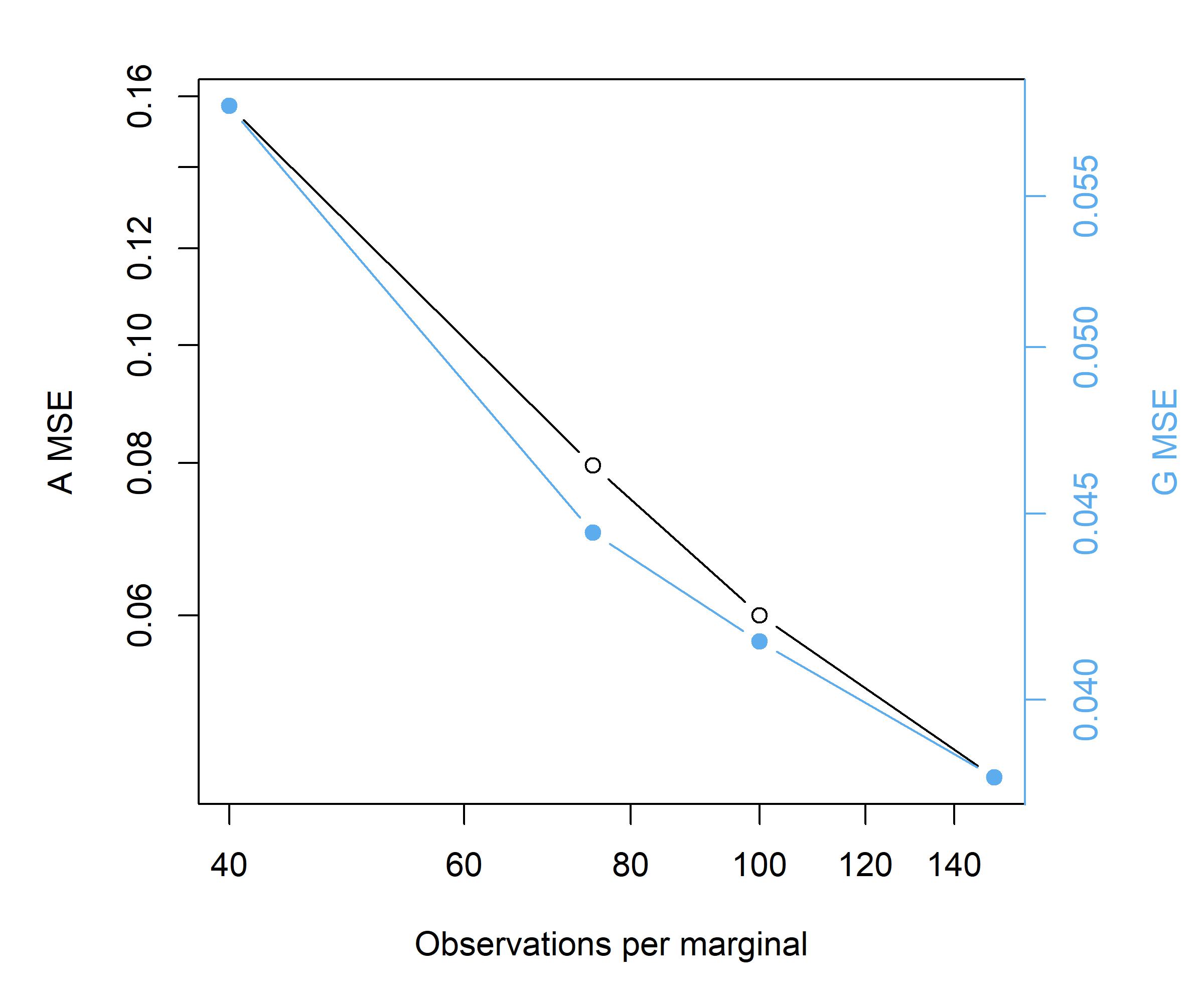}
        \caption{WOT}
    \end{subfigure}
    \begin{subfigure}{0.41\textwidth}
        \centering
        \includegraphics[width=\textwidth]{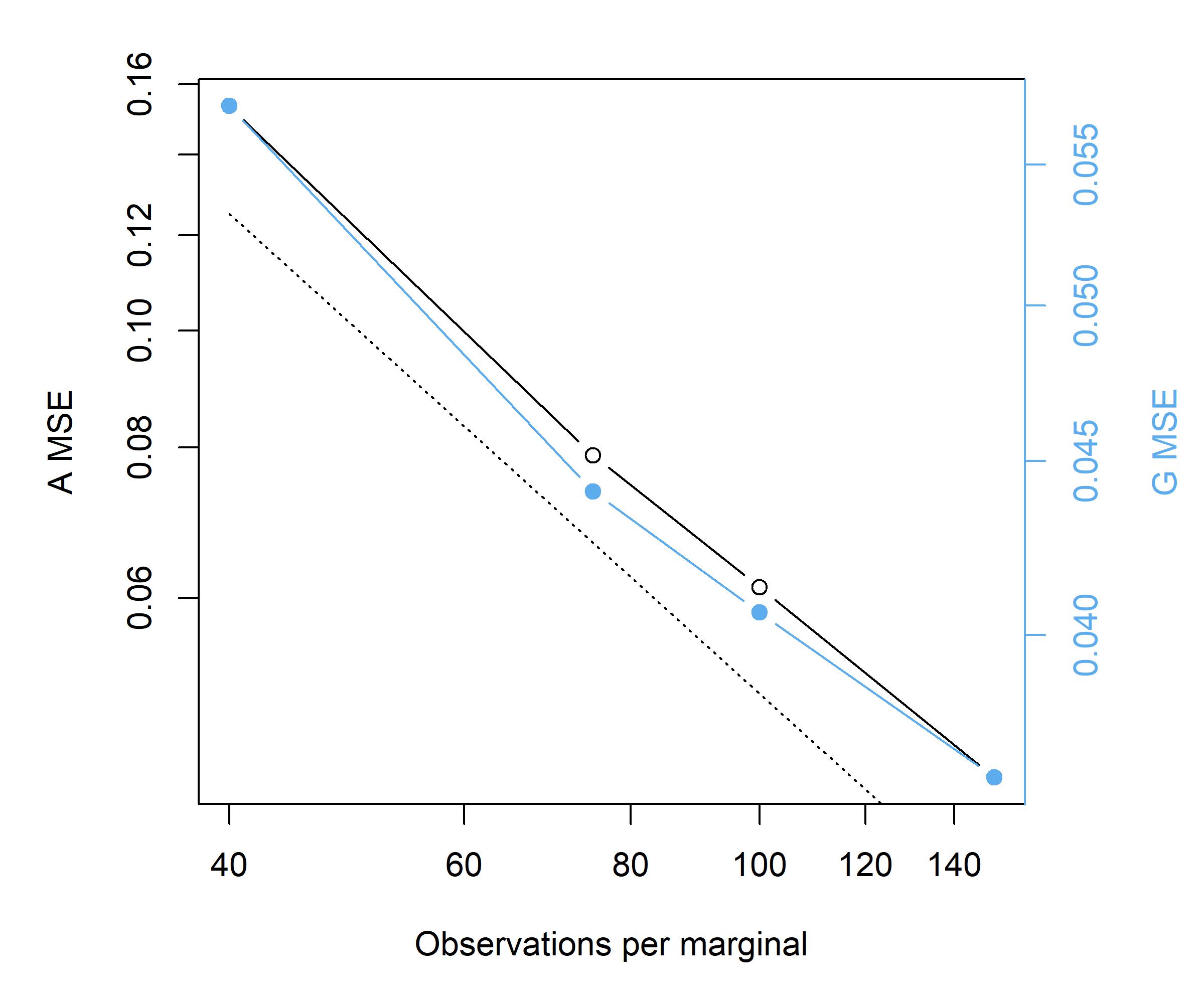}
        \caption{APPEX}
    \end{subfigure}
    \caption{Average mean squared error (MSE) of \( A \) and \( GG^\top \) as the number of observations per marginal increases across 70 random 12d systems. In this log-log plot, the dotted line in the APPEX plot indicates that the convergence follows a power law relationship with exponent \(-1\). The number of observed marginals is fixed at $20$.}
    \label{fig:convergence12d}
\end{figure}

\section{Table of notations}
\begin{table}[H]
\centering
\begin{tabular}{l|l}
\textbf{Notation} & \textbf{Meaning} \\ \hline
$b(X_t)$ & Drift function of a general SDE \\ 
$A$ & Drift matrix of a linear SDE \\ 
$G$ & Diffusion matrix of an additive noise SDE \\ 
$H = GG^\top $ & Observational diffusion matrix \\ 
$p_t \sim X_{t}$ & Temporal marginal distribution of the SDE at time $t$ \\ 
% $p(x, t)$ & Probability density of the process at time $t$, assuming existence \\ 
$X_i = X_{t_i}$ & Distribution of SDE at $i$th observation time \\ 
$\Delta X_i$ & $X_{i+1} - X_i$ \\ 
$p_i \sim X_{t_i}$ & Temporal marginal distribution of the SDE at measured time $t_i$ \\ 
$N$ & Number of observed temporal marginals \\ 
$\hat{p}_i$ & Empirical temporal marginal of the SDE at measured time $t_i$ \\ 
$x_{t_i}^{(j)}$ & $j$th sample from $i$th empirical marginal \\
$M_i$ & Number of samples for the $i$th empirical temporal marginal $\hat{p}_i$ \\ 
$\Sigma$ & Covariance of an initial distribution $X_0$ \\ 
$\epsilon$ & Scalar parameter for regularizing entropy-regularized optimal transport (EOT) \\ 
$\Sigma(\theta)$ & (Possibly) anisotropic matrix parameter for regularizing \\ 
& anisotropic entropy regularized optimal transport (AEOT) \\ 
\end{tabular}
\end{table}

\vskip 0.2in
%\newpage
% \bibliographystyle{alpha}
\bibliography{references}

\end{document}